\documentclass[11pt]{article}
\usepackage[margin=1.1in]{geometry}
\usepackage{amsmath,amssymb,amsthm,mathtools}
\usepackage{booktabs}
\usepackage{longtable}
\usepackage{seqsplit}
\usepackage{graphicx}
\usepackage[numbers,sort&compress]{natbib}
\usepackage[colorlinks=true,linkcolor=blue,citecolor=blue,urlcolor=blue]{hyperref}
\usepackage[capitalize,noabbrev]{cleveref}
\usepackage{xcolor}
\usepackage{tikz}
\usetikzlibrary{shapes.geometric,shapes.misc,arrows.meta,positioning,fit,backgrounds,calc,patterns}

\definecolor{figblack}{gray}{0.0}
\definecolor{figdark}{gray}{0.25}
\definecolor{figmid}{gray}{0.5}
\definecolor{figlight}{gray}{0.85}
\definecolor{figpale}{gray}{0.95}

\theoremstyle{plain}
\newtheorem{theorem}{Theorem}[section]
\newtheorem{lemma}[theorem]{Lemma}
\newtheorem{proposition}[theorem]{Proposition}
\newtheorem{corollary}[theorem]{Corollary}

\theoremstyle{definition}
\newtheorem{definition}[theorem]{Definition}

\theoremstyle{remark}
\newtheorem{remark}[theorem]{Remark}
\newtheorem{observation}[theorem]{Observation}

\newcommand{\Phic}{\Phi_{C}}                 
\newcommand{\PhiY}[1]{\Phi_{#1}}             
\newcommand{\Aut}{\mathrm{Aut}}              
\newcommand{\AutX}{\Aut(\Phic)}              
\newcommand{\Sym}{\mathrm{Sym}}
\newcommand{\Gind}{\prod_{i=1}^{n}\Sym(S_i)} 
\newcommand{\Ftwo}{\mathbb{F}_2}
\newcommand{\NONTRIV}{\textup{\textsc{Nontriv-Aut}}}
\newcommand{\DEADVAR}{\textup{\textsc{Dead-Var}}}
\newcommand{\VALISO}{\textup{\textsc{Val-Iso}}}
\newcommand{\coNP}{\mathsf{coNP}}
\newcommand{\SigmaTwoP}{\Sigma_2^{\mathsf{p}}}
\newcommand{\PH}{\mathsf{PH}}
\newcommand{\AM}{\mathsf{AM}}

\newcommand{\citeyearonly}[1]{\cite{#1}}   

\title{Reasoning Shortcuts and Value Symmetries: What Symmetry\\
Permits, Architecture Realizes, and Optimization Selects}
\author{Xin Xu\\
Carnegie Mellon University \,\textperiodcentered\, University of Pennsylvania\\
\texttt{xuxin@cmu.edu}\quad\texttt{xinx@upenn.edu}}
\date{}

\begin{document}
\maketitle

\begin{abstract}
Reasoning shortcuts are rule solutions that reach correct predictions through unintended concepts. A recent framework of Takemura, Inoue, and Nishino analyzes them through an automorphism group of value relabelings, asking when rules pin concepts down. Its key definition, one value permutation shared across all positions, does not apply as stated to any of its four heterogeneous benchmarks, and the most direct embedding, padding domains to a common size, produces confident false pathology: 90.91\% of solution pairs unexplained on CLE4EVR, versus 0\% under every well-defined rung of the hierarchy we introduce, whose componentwise member grants each position its own native-domain permutation; the padded verdict rotates under configuration-file ordering. Across eleven rule families under fifteen pre-specified predictions (thirteen confirmed), unexplained-pair rates span 0\% to 99.9999\% and track provable structure: six theorems give sufficient conditions for transitivity and its failure. For circuit-given rules, symmetry-inertness of a coordinate is coNP-complete; automorphism existence is coNP-hard under randomized reductions, lies in $\Sigma_2^p$, is not $\Sigma_2^p$-complete in the Boolean case unless PH collapses, and is coNP-complete on monotone circuits. Boolean transitivity is classified exactly: automorphisms explain everything iff the solution set is an affine coset. Weakly supervised models place all 94 observed shortcuts at the one level the theory flags, none at the 48 it certifies transitive, and none at twelve typed-ambiguous levels. Relocating the rule's absorbing element moves every shortcut with it; a confusion null attributes the location to geometry while the observed rate exceeds it by half again. Trained end to end on CLE4EVR's rule and heterogeneous domains through a synthetic prototype front end, models produce 20,223 label-preserving errors with zero different-orbit exceptions, as transitivity predicts, where the padded instrument would misreport 78-88\% of them. Every number traces through a source comment to a machine-computed artifact.
\end{abstract}

\section{Introduction}
\label{sec:intro}

A neurosymbolic system couples a learned perception component with a
fixed body of symbolic knowledge. The perception component maps raw
inputs to symbolic concepts. The knowledge maps concepts to predictions.
When the system reaches the right predictions through the wrong
concepts, it has learned a \emph{reasoning shortcut}: every rule is
satisfied at training time, yet the learned concept mapping is not the
intended one, and the error surfaces only later, on downstream tasks
that reuse the concepts~\cite{marconato2023shortcuts,rsbench2024}. A
self-driving benchmark makes the stakes concrete. A model can learn to
predict \emph{stop} correctly while confusing pedestrians with red
lights, because both concepts imply stopping~\cite{rsbench2024}. The
prediction is right. The concept is wrong. Everything downstream of the
concept is compromised.

Takemura, Inoue and Nishino~\citeyearonly{takemura2026} recently gave this
problem an algebraic form. Model the rules as a constraint satisfaction
problem, collect the concept mappings that satisfy every rule into a
solution set, and ask what structure the set of alternative solutions
carries. Their central tool is an automorphism group: a set of value
relabelings that map solutions to solutions. If two shortcuts differ
only by an automorphism, they are the same failure wearing two labels,
and symmetry-aware training or evaluation can treat them as one. Their
paper closes by naming its most pressing open question: characterize
when the rules pin the concepts down, beyond the case where the
automorphism group is trivial.

This paper starts from a fact about that framework which, to our
knowledge, has not been reported. The framework's central definition
does not apply as stated to the native, typed structure of the
benchmarks it was evaluated on. Takemura et al.'s
automorphism group applies one shared permutation of one shared value
set at every position. The four rsbench benchmarks in their own
evaluation all have heterogeneous attribute domains: colors, shapes and
materials of different sizes, or twenty binary detectors whose count
can never equal their two-valued domain's size. On such instances the
definition does not apply as stated: its hypotheses fail (the shared
domain for CLE4EVR and Kandinsky, the bijectivity requirement
$|N|=|S|$ for BDD-OIA and SDD-OIA), and every use requires an embedding
of typed attribute domains into one shared value set that the framework
itself does not supply. The most direct such embedding, padding every domain
to a common size, does not fail conservatively. On CLE4EVR's
real rule it reports that $90.91\%$ of solution pairs are unexplained
pathology, where every well-defined member of the value-symmetry
hierarchy reports $0\%$ on the same instance. The reported
pathology decomposes entirely into bookkeeping: which values the padded
group identifies as interchangeable is decided by the order in which
attributes happen to be listed in a configuration file, and permuting
that order permutes the answer. A practitioner who reaches for the published
definition on a real, heterogeneous benchmark gets a confident, wrong,
and unstable number.

We therefore generalize the definition rather than the benchmarks.
\emph{Componentwise value symmetry} lets every position carry its own
value permutation over its own domain. On homogeneous instances it
contains the original group whenever both are read off one common
solution set (\Cref{sec:background}). On heterogeneous instances it is the only
one of the two definitions that is defined at all. The rest of the
paper asks, with this corrected instrument, the question the original
framework raised: when do automorphisms account for the alternative
solutions of real neurosymbolic rules, when do they not, what
distinguishes the two cases, and how hard is it to decide?

The answers are specific enough to summarize, and the sharpest of them
anchors the rest: in the Boolean case, which covers BDD-OIA and
SDD-OIA, transitivity is classified exactly, automorphisms explain
every alternative solution precisely when the solution set is an
affine coset (\Cref{sec:complexity}), and the empirical chapter's
measured percentages are that law's values to the last digit. The
contributions run as follows.

\paragraph{Contribution 1: a corrected instrument (\Cref{sec:background}).}
We show that porting the global definition to heterogeneous benchmarks
by padding produces confident false pathology, quantify it on CLE4EVR's
public rule ($90.91\%$ against $0\%$, with the content of the reported
pathology tracking configuration-file order rather than rule
structure), and confirm on a $5{,}433$-instance synthetic sweep that
the padded reading collapses to the trivial group on $97\%$ of
heterogeneous instances. Padding is not the only translation: a
type-tagged disjoint-union encoding is just as direct and recovers
exactly the typed rung of our hierarchy (\Cref{sec:background}). The
definition attaches to a heterogeneous benchmark only through a
translation it does not specify, different translations return
different verdicts, and the padded verdict agrees with no rung.
Componentwise symmetry needs no translation at all.

\paragraph{Contribution 2: structural determinants of pathology
(\Cref{sec:empirical}).}
Across three published benchmark families and eight further rule
families, with fifteen structure-to-outcome predictions recorded before
measurement (thirteen confirmed), the fraction of solution pairs that
automorphisms fail to explain ranges from $0\%$ to $99.9999\%$, and the
variation tracks checkable structural features of the rule. Mild
conjunctive structure is transitive: every solution pair is related by
an automorphism. One checkable criterion, branches exchangeable by a value permutation
with each branch internally transitive, separated transitive from
pathological disjunctions in every instance on which it was checked.
An absorbing element, a multiplicative zero, breaks an otherwise safe
conjunction. A counting bound forces pathology on all-different grids
whose solutions outnumber the diagonal cap. The two counterexamples isolated the
operative condition, branch exchangeability by value permutation
rather than syntactic disjunction, that the coarse prediction missed.

\paragraph{Contribution 3: six theorems (\Cref{sec:algebra}).}
A single Forcing
Lemma underlies sufficient conditions for transitivity (matching
decompositions, exchangeable branches) and for its failure (anchored
inequality patterns, degree invariants, an orbit-stabilizer counting
bound, and a Free Slot Lemma that certifies Kandinsky's intransitivity
directly from its branch structure, without enumerating the group). A byproduct settles a natural conjecture negatively:
conjunctive structure alone does not imply transitivity, by an explicit
two-constraint counterexample found in a randomized search of $4{,}000$
instances and then proved as a theorem.

\paragraph{Contribution 4: the complexity of deciding
(\Cref{sec:complexity}).}
For rules given as compact circuits, the Boolean case gets an exact
structure theory: $\AutX$ is always an $\Ftwo$-linear subspace, equal
to the orthogonal complement of the Fourier support, which explains a
constant the empirical chapter measures eight times over, and
transitivity admits a complete classification, not merely sufficient
conditions: the group acts freely, so $\AutX$ explains all shortcuts
exactly when $\Phic$ is an affine coset, and an exact orbit law
reproduces every BDD-OIA percentage of \Cref{sec:empirical} to the
last digit. On monotone circuits, deciding whether any nontrivial
automorphism exists (\NONTRIV) is coNP-complete under deterministic
reductions, because monotone functions provably cannot carry the
camouflage symmetries that force the general reduction to randomize.
In general, \NONTRIV{} is coNP-hard under randomized reductions and
lies in $\SigmaTwoP$; the main complexity result is that, in the
Boolean case, it is not $\SigmaTwoP$-complete unless the polynomial
hierarchy collapses to $\Sigma_3^{\mathsf p}$. The collapse theorem
adapts Agrawal and Thierauf's argument for Boolean
isomorphism~\citeyearonly{agrawalthierauf2000formula} to a group that permutes
values rather than named positions, through a laundering protocol that
never names objects at all. The randomized hardness survives the nonempty-solution
promise natural to neurosymbolic tasks
(\Cref{prop:nonempty-hardness}), and the designated-coordinate
hardness survives even when every other coordinate is known alive
(\Cref{prop:dead-var-oracle}).
Deciding whether a designated coordinate is symmetry-inert (\DEADVAR)
is coNP-complete, a known baseline we state for
completeness~\cite{beyersdorff2009implication}.

\paragraph{Contribution 5: the shortcut geography of trained models
(\Cref{sec:realmodels}).}
We train standard weakly supervised neurosymbolic models on rsbench's
MNIST arithmetic tasks, ten seeds per task. All $94$ observed
concept-level shortcuts occur at exactly the one target level the
componentwise theory identifies as pathological, and none at the $48$
levels it certifies as transitive ($34{,}800$ of the experiment's
$40{,}000$ model-instance evaluations). The typed reading then sharpens
the result into this paper's cleanest experiment: it flags twelve
further levels as architecture-realizable ambiguities, and not one of
them produces a single shortcut; every observed failure sits at the
one level carrying the multiplicative zero, the location the loss
algebra predicted in advance. A dual-head
control, rerunning everything with independently weighted perception
networks so that the full componentwise group becomes
architecture-realizable, replicates the geography under pre-specified
predictions: all 101 of its \texttt{product} shortcuts land on the same
single level, the twelve typed-ambiguous levels stay empty, and the
same-orbit share moves only from $70.2\%$ to $71.3\%$. Three further
measurements separate what the geometry fixes from what training
contributes. A confusion-matrix null, sampling independent slot errors
from each trained model's own per-digit confusion rates, concentrates
at the same level, so the location follows from the absorbing-element
geometry alone; the trained models exceed that null's rate by half
again ($9.4$ against $6.2$ label-preserving errors per seed, positive
in $10/10$ seeds), an excess that independent slot errors at the
models' own accuracies cannot produce. Conjugating the rule so that its absorbing value
moves from $0$ to $7$ moves every observed shortcut with it. And the
model's predictive mass, not just its argmax, sits on orbit-external
solutions ($0.75\%$) exactly where the theory says such solutions
exist, and at exactly zero everywhere else. A second experiment trains
end to end on CLE4EVR's rule and heterogeneous domains through a
synthetic prototype front end, the setting the
methodological argument is about: because $\AutX$ is transitive there,
every label-preserving error must be same-orbit, and across
$20{,}223$ such errors from twenty models it is, with zero exceptions,
while the padded instrument would report $78$ to $88\%$ of those same
real errors as unexplained pathology. Automorphism
orbits explain $70\%$ of the observed shortcuts, and the correspondence
between symbolic prediction and learned failure separates three things
a single number usually conflates: what symmetry permits, what
architecture realizes, and what optimization selects.

\paragraph{Instrument and evidence discipline.}
Componentwise value symmetry is positioned, not presumed. It is the
finest member of a hierarchy of well-defined value-symmetry readings
(global, typed, componentwise; \Cref{sec:background}); the members
agree exactly on the headline instances, and where they genuinely part,
on unconstrained slots and on equality patterns, \Cref{sec:realmodels}
measures the divergence and turns it into the paper's sharpest
experiment. No member of the hierarchy supports the padded verdict. A
second axis is pinned just as precisely: a fiber-intersection identity
(\Cref{sec:empirical}) connects the global symmetries of a rule to the
per-label symmetries its fibers carry, binary-output rules have no gap
between the two, and the arithmetic tasks' global groups are provably
trivial, which is exactly why the fiber level is where their
identifiable structure lives. The
theory arrives in two matched halves, orbit combinatorics for general
domains and linear algebra for Boolean ones, and each half carries a
real benchmark that needs it: Kandinsky's certificate comes from the
Free Slot Lemma, BDD-OIA's from the subspace structure. Every number in
the paper traces, through a comment in the source, to a
machine-computed artifact in the verification data
(\Cref{app:verification}).

\Cref{sec:background} develops the framework and the false positive.
\Cref{sec:empirical} reports the measurements. \Cref{sec:algebra} proves
the six theorems. \Cref{sec:complexity} maps the decision problems.
\Cref{sec:realmodels} closes the loop on trained models, and
\Cref{sec:related,sec:conclusion} situate and conclude.
\section{The Value-Symmetry Hierarchy on Heterogeneous Domains}
\label{sec:background}

This section establishes the paper's central methodological finding
and builds the formal apparatus the rest of the paper uses. Forcing
Takemura et al.'s original definition onto a heterogeneous benchmark
by padding its domains to a common size reports a $90.91\%$ pathology
rate on CLE4EVR's real rule
where the componentwise definition finds none: the naive port is not
merely less general, it is wrong.
Section~\ref{sec:takemura-framework} recalls constraint satisfaction (CSP)
notation and the automorphism-based account of reasoning shortcuts due to
Takemura, Inoue and Nishino~\citeyearonly{takemura2026}, and shows that their
central definition, a single global permutation of one shared value set,
does not apply as stated to any of the four real neurosymbolic benchmarks
it was evaluated on. Section~\ref{sec:componentwise} introduces the
generalization this paper studies, componentwise value symmetry
$\AutX$, in which every position chooses its own value permutation.
Section~\ref{sec:padding-false-positive} measures the false positive
that makes the generalization mandatory.

\subsection{Reasoning shortcuts and the automorphism framework}
\label{sec:takemura-framework}

A neurosymbolic system learns to map raw inputs to symbolic concepts,
then reasons over those concepts with a fixed set of logical or
arithmetic rules. A reasoning shortcut occurs when the system satisfies
every rule at training time without recovering the intended concept
mapping. The network learns some mapping consistent with the rules, but
not necessarily the one a human would call correct. Takemura, Inoue and
Nishino~\citeyearonly{takemura2026} formalize this as a constraint satisfaction
problem and ask when the rules alone pin the mapping down to a single,
intended solution.

\begin{definition}[Constraint satisfaction instance]
\label[definition]{def:general-csp}
Fix a finite index set of positions $N=\{1,\dots,n\}$. Each position
$i\in N$ has a finite, nonempty local domain $S_i$. The domains need not
coincide, so $|S_i|$ can differ across positions. A constraint set $C$ is
a finite set of relations, each over some subset of positions. A mapping
$\phi\in\prod_{i\in N}S_i$ satisfies $C$ if it satisfies every relation
in $C$. The solution set is
\[
\Phic \;=\; \Big\{\, \phi \in \textstyle\prod_{i\in N} S_i \;:\;
\phi \text{ satisfies every constraint in } C \,\Big\}.
\]
A constraint-based neurosymbolic learning problem additionally fixes an
intended mapping $\phi^*\in\Phic$ and a dataset $D$. A mapping
$\phi\in\Phic$ is a \emph{reasoning shortcut} if $\phi\neq\phi^*$. The
problem is \emph{shortcut-free} if $\Phic=\{\phi^*\}$, and the shortcut
multiplicity is $\mathrm{SM}(C):=|\Phic|-1$.
\end{definition}

\begin{remark}
Definition~\ref{def:general-csp} is Takemura et al.'s framework~\citeyearonly{takemura2026}, generalized to allow heterogeneous local domains
$S_i$. Their Definition~1 is the special case $S_i\equiv S$ for all
$i\in N$, so that a solution is a mapping $N\to S$. Their Definition~3
(valid mapping and shortcut) and Definition~4 (shortcut multiplicity)
match Definition~\ref{def:general-csp} verbatim, with one notational
point worth flagging. Takemura et al. work by default with the bijective
restriction of the solution set. They write $\Phi_C^{\mathrm{all}}$ for
the general set we call $\Phic$, and reserve the unornamented $\Phi_C$
for its bijective subset
$\Phi_C^{\mathrm{bij}}=\{\phi\in\Phi_C^{\mathrm{all}} : \phi \text{ is a
bijection}\}$. Throughout this paper, $\Phic$ denotes the general,
not-necessarily-bijective solution set, matching their
$\Phi_C^{\mathrm{all}}$.
\end{remark}

Takemura et al.'s central definition targets the bijective case
directly. It requires $S_i\equiv S$ for every position, so that
$\Phic^{\mathrm{bij}}$ is well defined, and it requires $|N|=|S|$, since
a bijection $N\to S$ can only exist when the two sets have the same
size.

\begin{definition}[Global value symmetry; Takemura et al.~\citeyearonly{takemura2026}, Definition~7]
\label[definition]{def:takemura-autx}
Suppose $S_i\equiv S$ for all $i\in N$ and $|N|=|S|$. Each permutation
$\sigma\in\Sym(S)$ acts on mappings by post-composition,
$(\sigma\circ\phi)(i)=\sigma(\phi(i))$. The automorphism group of
$X=(N,S,C)$ is
\[
\Aut(X) \;:=\; \big\{\, \sigma\in\Sym(S) \;:\; \sigma\circ\phi\in
\Phic^{\mathrm{bij}} \text{ for all } \phi\in\Phic^{\mathrm{bij}} \,\big\}.
\]
\end{definition}

A single global permutation is applied identically at every position.
$\Aut(X)$ is the setwise stabilizer of $\Phic^{\mathrm{bij}}$ inside
$\Sym(S)$, acting diagonally.

\begin{proposition}[Value-symmetry elimination; Takemura et al.~\citeyearonly{takemura2026}, Proposition~2]
\label[proposition]{prop:value-symmetry-elimination}
If $\Aut(X)$ is trivial, that is $\Aut(X)=\{\mathrm{id}\}$, then no two
distinct solutions in $\Phic^{\mathrm{bij}}$ are related by a
permutation in $\Aut(X)$.
\end{proposition}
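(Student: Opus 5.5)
The statement is close to a tautology once the quantifiers are unpacked, so the plan is to make them explicit and then check the one degenerate point. Say that two distinct solutions $\phi,\psi\in\Phic^{\mathrm{bij}}$ are \emph{related by a permutation in $\Aut(X)$} if some $\sigma\in\Aut(X)$ has $\sigma\circ\phi=\psi$. Under the hypothesis $\Aut(X)=\{\mathrm{id}\}$, the only candidate is $\sigma=\mathrm{id}$. Then $\sigma\circ\phi=\phi\neq\psi$, so no relating permutation exists. That is the whole argument.

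The first step is to confirm that the action in \Cref{def:takemura-autx} is a genuine group action of $\Sym(S)$ on mappings $N\to S$. Post-composition satisfies $\mathrm{id}\circ\phi=\phi$ and $(\sigma\tau)\circ\phi=\sigma\circ(\tau\circ\phi)$. It also sends bijections to bijections, so $\sigma\circ\phi$ is again a candidate element of $\Phic^{\mathrm{bij}}$.

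The second step checks that $\Aut(X)$, defined as the setwise stabilizer, always contains the identity. This holds because $\mathrm{id}\circ\phi=\phi\in\Phic^{\mathrm{bij}}$ for every $\phi$, so the hypothesis $\Aut(X)=\{\mathrm{id}\}$ is consistent. Note also that $\Phic^{\mathrm{bij}}$ is finite because $S$ and $N$ are. Hence a permutation $\sigma$ mapping $\Phic^{\mathrm{bij}}$ into itself in fact permutes it. We do not need this for the statement, but it confirms that $\Aut(X)$ is a subgroup, so "trivial" means exactly $\{\mathrm{id}\}$.

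The third step is the core argument above. Suppose distinct $\phi\neq\psi$ in $\Phic^{\mathrm{bij}}$ and $\sigma\in\Aut(X)$ satisfy $\sigma\circ\phi=\psi$. Then $\sigma=\mathrm{id}$ forces $\psi=\phi$, a contradiction.

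There is no real obstacle. The only care needed is to use the correct notion of "related", namely by the diagonal post-composition action, not by some positionwise action. Equivalently, in orbit language: every orbit of $\Aut(X)$ on $\Phic^{\mathrm{bij}}$ is a singleton. So each of the $\mathrm{SM}(C)$ shortcuts lies in its own orbit, and none is explained by symmetry.
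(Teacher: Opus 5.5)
Your argument is correct and matches the paper's own one-line proof: triviality forces any relating $\sigma$ to equal $\mathrm{id}$, so $\psi=\phi$, and your group-action and stabilizer checks are harmless extras. One small slip in your closing aside: in this paper $\mathrm{SM}(C)=|\Phic|-1$ counts shortcuts in the general solution set, on which $\Aut(X)$ need not act (see the three-solution example after \Cref{obs:diagonal-containment}), so the singleton-orbit conclusion covers the $|\Phic^{\mathrm{bij}}|-1$ bijective shortcuts, not all $\mathrm{SM}(C)$ of them.
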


The proof is immediate. If $\phi'=\sigma\circ\phi$ for some
$\sigma\in\Aut(X)$, triviality forces $\sigma=\mathrm{id}$, so
$\phi'=\phi$. Proposition~\ref{prop:value-symmetry-elimination} rules
out one specific mechanism for multiple solutions: a global relabeling
drawn from $\Aut(X)$ itself. It does not imply
$|\Phic^{\mathrm{bij}}|=1$. Two solutions can be related by a
permutation outside $\Aut(X)$, or by no permutation at all. A complete
characterization of sufficient conditions for uniqueness beyond trivial
automorphism groups is, in Takemura et al.'s own framing, the most
pressing question their analysis leaves open.

The CP literature distinguishes several notions of symmetry for a CSP
instance. Cohen, Jeavons, Jefferson, Petrie and Smith~\citeyearonly{cohen2006symmetry} separate \emph{constraint symmetry}, a
permutation of variable-value pairs that preserves the constraint
relations themselves, from \emph{solution symmetry}, a permutation that
merely preserves the solution set. They prove every constraint symmetry
is a solution symmetry (their Theorem~1), though the converse can fail
badly. They exhibit a CSP with $n$ variables and $d$ values each, with a
unique solution, whose solution-symmetry group nonetheless has order
$n!\,(n(d-1))!$ (their Example~3): freely permuting the $n$ variable-value
pairs the one solution uses contributes the $n!$ factor, and freely
permuting the $n(d-1)$ pairs it leaves unused contributes the rest, since
acting within either of these two disjoint sets on its own cannot disturb
the one solution.
Definition~\ref{def:takemura-autx} is a solution symmetry of a narrow
kind. It never permutes positions. It fixes $N$ pointwise and applies
a single permutation of the shared value set $S$ identically at every
position. We call it \emph{global value symmetry}. The generalization
introduced in Section~\ref{sec:componentwise}, where each position
chooses its own value permutation independently, sits strictly between
global value symmetry and Cohen et al.'s unrestricted solution symmetry.
We call it \emph{componentwise}, or \emph{per-variable}, \emph{value
symmetry}.

None of the four rsbench benchmarks used to evaluate
Definition~\ref{def:takemura-autx} satisfies its shared-domain
hypothesis. Bortolotti et al.'s rsbench~\citeyearonly{rsbench2024} specifies
CLE4EVR concretely. Two rendered objects each carry four attributes,
color, shape, material and size, with public default domain sizes 2, 3,
2 and 3 respectively.
The eight resulting positions do not share one value set $S$. They range
over four semantically unrelated, differently sized sets. Kandinsky's
three objects are more regular: shape and color domains both have size
3.
But shape values (circle, square, triangle) and color values (red,
yellow, blue) remain two different sets, not one shared $S$. BDD-OIA and
SDD-OIA are the starkest case. Takemura et al.'s own Table~1 records
$N=21$ for both.
The public rsbench decision logic is a function of ``4 actions from 20
interrelated concepts''~\cite{rsbench2024}, and direct inspection of its
arguments confirms it takes exactly 20 free binary inputs.
The 21st recorded field, \texttt{road\_clear}, is logically determined
by four of the other twenty. An exhaustive check over all $2^{20}$
assignments confirms
$\texttt{road\_clear}\equiv\neg(\texttt{car}\lor\texttt{person}\lor
\texttt{rider}\lor\texttt{other\_obstacle})$, so it carries no
independent information.
Twenty positions, each with its own two-valued domain $\{0,1\}$, drive
the four action labels, and there is no natural sense in which a
red-light detector and a left-turn-lane detector take values from one
shared label set $S$. Definition~\ref{def:takemura-autx}'s bijective
restriction additionally requires $|N|=|S|$. For CLE4EVR ($|N|=8$ split
across four differently sized attribute types) and for BDD-OIA/SDD-OIA
($|N|=20$ free positions, $|S_i|=2$ at every position) this equality has
no natural reading. Applying Definition~\ref{def:takemura-autx} to these
benchmarks requires an extra step the definition itself does not supply:
some way of embedding heterogeneous attribute domains into one shared
$S$. Section~\ref{sec:padding-false-positive} shows this step is not
innocuous.

Takemura et al.'s own published evaluation is consistent with this
reading, not in tension with it. Their formalization fixes a single
shared domain from the start (their Definition~1: variables $N$, one
domain $S$), and their bijective results additionally force $|S|=|N|$.
Their Table~1 reports $5{,}759$ bijective shortcuts for CLE4EVR at
$N=8$, which under their own definitions presupposes an $8$-element
shared value set, while the benchmark's native typed structure carries
ten values across four semantically distinct attribute types. Some
re-encoding of typed attributes into one untyped domain was therefore
necessarily applied on their side; every encoding printed in their paper
declares a single shared \texttt{val(0..k)} domain (their toy examples),
and the specific re-encoding behind the benchmark rows is not given.
We make no claim that their published counts are incorrect under their
own encoding. The claim is about the step in between: the framework's
definition attaches to a typed benchmark only through a translation into
one shared domain, the translation is not part of the framework, and
Section~\ref{sec:padding-false-positive} shows the resulting verdict
depends materially on which translation is chosen.

\subsection{The independent-coordinate generalization}
\label{sec:componentwise}

Section~\ref{sec:takemura-framework} shows that
Definition~\ref{def:takemura-autx} needs a single shared value set $S$
that none of the four benchmarks naturally provides. We now generalize
the definition itself, rather than the benchmarks, letting every
position keep its own local domain $S_i$ and its own value permutation.

\begin{definition}[Componentwise value symmetry]
\label[definition]{def:componentwise-autx}
For $\sigma=(\sigma_1,\dots,\sigma_n)\in\Gind$ and
$\phi\in\prod_{i\in N}S_i$, write $\sigma\cdot\phi$ for the mapping with
$(\sigma\cdot\phi)(i)=\sigma_i(\phi(i))$ for every $i\in N$. The
automorphism group of $\Phic$ is the setwise stabilizer of $\Phic$ under
this action,
\[
\AutX \;=\; \big\{\, \sigma\in\Gind \;:\; \sigma\cdot\phi\in\Phic \text{
for all } \phi\in\Phic \,\big\}.
\]
\end{definition}

Every position chooses its own value permutation $\sigma_i\in\Sym(S_i)$
independently. Nothing requires $S_i=S_j$ for $i\neq j$, and nothing
requires $\sigma_i=\sigma_j$.

\begin{observation}[Componentwise symmetry contains global value
symmetry, on any common solution set]
\label[observation]{obs:diagonal-containment}
Suppose $S_i\equiv S$ for all $i\in N$. Let
$\Delta=\{(\sigma,\dots,\sigma):\sigma\in\Sym(S)\}\leq\Gind$ be the
diagonal subgroup; the map $\sigma\mapsto(\sigma,\dots,\sigma)$ is a
group isomorphism $\Sym(S)\to\Delta$, and
$(\sigma,\dots,\sigma)\cdot\phi=\sigma\circ\phi$. Consequently, for
\emph{any} fixed solution set $\Psi$, the diagonal stabilizer of $\Psi$
is exactly $\mathrm{Stab}_{\Gind}(\Psi)\cap\Delta$: read off one common
solution set, componentwise symmetry contains global value symmetry as
its diagonal part. In particular $\Aut(X)$ from
Definition~\ref{def:takemura-autx} is isomorphic to the diagonal part
of the componentwise stabilizer of $\Phic^{\mathrm{bij}}$.
\end{observation}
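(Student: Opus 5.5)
The plan is a direct unwinding of definitions in three steps: verify that the diagonal embedding is an isomorphism onto $\Delta$, check that the diagonal element acts exactly like the global permutation, and then intersect stabilizers.

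\emph{Step 1: the embedding.} Define $\iota:\Sym(S)\to\Gind$ by $\iota(\sigma)=(\sigma,\dots,\sigma)$. Multiplication in $\Gind$ is componentwise, so $\iota(\sigma\tau)=(\sigma\tau,\dots,\sigma\tau)=\iota(\sigma)\iota(\tau)$, and $\iota$ is a homomorphism. It is injective, since $\iota(\sigma)=\mathrm{id}$ forces $\sigma=\mathrm{id}$ (here $N\neq\emptyset$). Its image is $\Delta$ by definition. Hence $\iota:\Sym(S)\to\Delta$ is an isomorphism and $\Delta\le\Gind$.

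\emph{Step 2: the actions agree.} For any $\phi\in\prod_i S_i=S^N$ and any $i$, Definition~\ref{def:componentwise-autx} gives
\[
(\iota(\sigma)\cdot\phi)(i)=\sigma(\phi(i))=(\sigma\circ\phi)(i).
\]
So $\iota(\sigma)\cdot\phi=\sigma\circ\phi$, and $\iota$ intertwines the post-composition action of Definition~\ref{def:takemura-autx} with the componentwise action restricted to $\Delta$.

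\emph{Step 3: the stabilizers.} Fix any $\Psi\subseteq S^N$. By Step 2, $\sigma$ stabilizes $\Psi$ diagonally, that is $\sigma\circ\phi\in\Psi$ for all $\phi\in\Psi$, if and only if $\iota(\sigma)\cdot\phi\in\Psi$ for all $\phi\in\Psi$. This in turn holds if and only if $\iota(\sigma)\in\mathrm{Stab}_{\Gind}(\Psi)$. Because every element of $\Delta$ has the form $\iota(\sigma)$, $\iota$ maps the diagonal stabilizer of $\Psi$ bijectively onto $\mathrm{Stab}_{\Gind}(\Psi)\cap\Delta$. Identifying $\Sym(S)$ with $\Delta$ via $\iota$, the two sets are equal.

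\emph{The specialization.} Take $\Psi=\Phic^{\mathrm{bij}}$. Definition~\ref{def:takemura-autx} says literally that $\Aut(X)$ is the diagonal stabilizer of $\Psi$ in $\Sym(S)$. Step 3 then gives $\Aut(X)\cong\mathrm{Stab}_{\Gind}(\Phic^{\mathrm{bij}})\cap\Delta$, with $\iota|_{\Aut(X)}$ as the isomorphism.

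No step is a real obstacle; the argument is pure bookkeeping. The one point to handle with care is the qualifier "on any common solution set". The containment compares stabilizers of the \emph{same} set $\Psi$. It says nothing about comparing $\Aut(X)$, which stabilizes $\Phic^{\mathrm{bij}}$, with $\AutX$, which stabilizes $\Phic$. A permutation preserving $\Phic^{\mathrm{bij}}$ need not preserve $\Phic$, so the final clause must be stated for the componentwise stabilizer of $\Phic^{\mathrm{bij}}$, not for $\AutX$. I would add a one-line remark flagging this so the observation is not misread as $\Aut(X)\le\AutX$.
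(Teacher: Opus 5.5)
Your proposal is correct and follows the paper's own route: the observation is justified inline by the diagonal isomorphism and the identity $(\sigma,\dots,\sigma)\cdot\phi=\sigma\circ\phi$, and the stabilizer equality follows by unwinding the two definitions, exactly as in your three steps. Your closing caveat is the point the paper makes right after the observation, where the three-solution example $\Phic=\{(0,0),(0,1),(1,0)\}$ gives an element of $\Aut(X)$ whose diagonal copy is not in $\AutX$.
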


The qualifier ``on any common solution set'' is load-bearing. The two
definitions as published read their groups off \emph{different} sets,
$\Phic^{\mathrm{bij}}$ for Definition~\ref{def:takemura-autx} and the
general $\Phic$ for Definition~\ref{def:componentwise-autx}, and across
that divide neither group need contain the other. A three-solution
example settles it: with $N=S=\{0,1\}$ and
$\Phic=\{(0,0),(0,1),(1,0)\}$ (the rule $\neg(x_1\wedge x_2)$), the
value swap preserves $\Phic^{\mathrm{bij}}=\{(0,1),(1,0)\}$, so it lies
in $\Aut(X)$, yet it maps $(0,0)$ to $(1,1)\notin\Phic$, so its
diagonal copy is not in $\AutX$. The bijective restriction is not a
cosmetic convention: it changes the group. Computationally verified;
see \Cref{app:verification}.
Observation~\ref{obs:diagonal-containment} only applies on homogeneous
instances, where $S_i\equiv S$ for every position. On instances whose
domains differ in size or content, no diagonal subgroup exists and
Definition~\ref{def:takemura-autx} simply has nothing to say; CLE4EVR
and Kandinsky are heterogeneous in this sense
(Section~\ref{sec:takemura-framework}). BDD-OIA and SDD-OIA formally
share the two-element domain $\{0,1\}$, so their diagonal subgroup
exists, but the definition's second hypothesis, $|N|=|S|$, fails at
$20$ against $2$, and the shared set identifies the labels of
semantically unrelated detectors. On all four benchmarks the
definition's stated hypotheses fail; componentwise value symmetry is
the only one of the two definitions whose hypotheses hold on them.

Componentwise is not the only well-defined reading, however, and we do
not claim it is uniquely correct. Between one permutation for all
positions and one permutation per position sits a natural intermediate.

\begin{definition}[Typed value symmetry]
\label[definition]{def:typed-autx}
Let $\tau:N\to T$ assign each position a semantic type, with a shared
domain $S_t$ for all positions of type $t$ (so $S_i=S_{\tau(i)}$). The
typed group is $G_\tau:=\prod_{t\in T}\Sym(S_t)$, acting by
$(\sigma\cdot\phi)(i):=\sigma_{\tau(i)}(\phi(i))$, and the typed
automorphism group $\Aut_\tau(\Phic)$ is its setwise stabilizer of
$\Phic$.
\end{definition}

Taking every position as its own type recovers
Definition~\ref{def:componentwise-autx}; taking one type for all
positions recovers the diagonal action of a single shared permutation
on $\Phic$, which coincides with Definition~\ref{def:takemura-autx}
only when both groups are read off one common solution set
(Observation~\ref{obs:diagonal-containment}; the three-solution example
above shows the published bijective restriction can differ). Typed value symmetry is well
defined on all four benchmarks, with types given by the attribute kinds
(CLE4EVR: color, shape, material, size, each shared by the two
objects). Which rung of this hierarchy counts two shortcuts as ``the
same failure'' is a modeling choice with an architectural reading, to
which we return in \Cref{sec:realmodels}: a perception network shared
across same-type positions admits exactly the typed output-relabeling
reparameterizations, while independent per-position heads admit
componentwise ones. This paper measures exactly where the choice
changes the answer rather than assuming it. On
CLE4EVR's six constrained dimensions the typed and componentwise groups
coincide exactly, element for element, and so do their orbits
($|\mathrm{Aut}|=24$, transitive, $0\%$); on Kandinsky they coincide as
well ($|\mathrm{Aut}|=36$, the same six orbits, $81.99\%$). The two
readings genuinely part in two situations: on entirely unconstrained
positions, as in CLE4EVR's free size slots
(Table~\ref{tab:clevr-comparison}: componentwise stays transitive,
typed splits equal-size from unequal-size pairs, $44.86\%$), and on
\emph{equality patterns} inside constrained levels, where the diagonal
action preserves whether two same-type positions agree while
independent permutations do not. \Cref{sec:realmodels} meets thirteen
levels of the second kind and turns the divergence into this paper's
sharpest experiment, a three-way separation of what symmetry permits,
what architecture realizes, and what optimization selects. On the
heterogeneous CLE4EVR configurations of
Table~\ref{tab:clevr-comparison}, the padded reading agrees with no
rung of the hierarchy; on the measured all-different grids,
pair-richness forces every componentwise automorphism to be diagonal
(\Cref{lem:diagonal-collapse}), and the two readings coincide exactly
(\Cref{sec:empirical-extension}).
Throughout, componentwise is the default instrument, the coarsest-orbit
member (most generous to the symmetry hypothesis) requiring no typing
choice, and every place the rung changes an answer is flagged where it
occurs.

One property separates the well-defined rungs from the padded reading
before any measurement is taken, and it is worth stating as a fact
rather than a preference.

\begin{proposition}[Representation equivariance]
\label[proposition]{prop:naturality}
Let $\eta=(\eta_1,\dots,\eta_n)$ with each $\eta_i$ a bijection of
$S_i$ (an independent relabeling of each native domain), acting on
solution sets by $\eta\cdot\phi=(\eta_i(\phi_i))_i$. Then
\[
\Aut(\eta\,\Phic)\;=\;\eta\,\AutX\,\eta^{-1},
\]
and the same holds for the typed group when $\eta$ assigns equal
relabelings to positions of equal type. Consequently group order, orbit
count, orbit sizes, transitivity, and $\rho$ are invariant under how a
configuration file happens to name each domain's values.
\end{proposition}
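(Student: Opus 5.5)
The plan is to treat this as a transport-of-structure statement. The group $\Gind$ acts on $\prod_i S_i$, and the relabeling $\eta$ is itself an element of $\Gind$, so conjugation by $\eta$ is an inner automorphism of $\Gind$. The identity $\Aut(\eta\,\Phic)=\eta\,\AutX\,\eta^{-1}$ is then the familiar fact that the stabilizer of a translate $g\cdot\Psi$ is $g\,\mathrm{Stab}(\Psi)\,g^{-1}$. I would first record that $\eta\cdot(\sigma\cdot\phi)=(\eta\sigma)\cdot\phi$ componentwise, so the action is a genuine group action and $\eta$ acts bijectively on solution sets.

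\textbf{Step 1 (the conjugation identity).} For $\sigma\in\Gind$ I would chain equivalences:
\[
\sigma\in\Aut(\eta\Phic)\iff \sigma\eta\Phic=\eta\Phic\iff \eta^{-1}\sigma\eta\,\Phic=\Phic\iff \eta^{-1}\sigma\eta\in\AutX\iff \sigma\in\eta\,\AutX\,\eta^{-1}.
\]
This uses the fact that the stabilizer condition ``$\sigma\cdot\psi\in\Psi$ for all $\psi\in\Psi$'' is equivalent to the setwise equality $\sigma\Psi=\Psi$, because $\Psi$ is finite and $\sigma$ is injective.

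\textbf{Step 2 (the typed case).} If $\eta_i=\eta_j$ whenever $\tau(i)=\tau(j)$, then $\eta$ lies in the image of $G_\tau$ in $\Gind$, which is the subgroup of $\tau$-constant tuples. That image is closed under conjugation by its own elements, so the same chain holds with $\Gind$ replaced by $G_\tau$. This is the only point where the hypothesis on $\eta$ is used. Without it, conjugating a $\tau$-constant tuple by $\eta$ could leave the typed subgroup, and I would flag this as the reason the typed clause needs the hypothesis.

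\textbf{Step 3 (the invariants).} Conjugation is a group isomorphism, so the group order is preserved. The map $\phi\mapsto\eta\cdot\phi$ is a bijection $\Phic\to\eta\Phic$ that is equivariant for the two actions via $\sigma\mapsto\eta\sigma\eta^{-1}$:
\[
\eta\cdot(\sigma\cdot\phi)=(\eta\sigma\eta^{-1})\cdot(\eta\cdot\phi).
\]
Hence it carries orbits onto orbits, which preserves the orbit count and the orbit sizes, and transitivity is the statement that there is exactly one orbit. For $\rho$, which I take to be the fraction of solution pairs not related by an automorphism, the induced bijection on pairs maps same-orbit pairs to same-orbit pairs, so the ratio is unchanged.

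I expect the main obstacle to be bookkeeping rather than anything substantive. Specifically, one has to check that $\rho$ is defined purely in terms of the orbit partition of $\Phic$ (ordered or unordered pairs, with or without the diagonal). Any such convention is preserved by a bijection that maps the orbit partition onto the orbit partition.
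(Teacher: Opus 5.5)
Your proposal is correct and follows essentially the same route as the paper. The paper's proof uses the same three ingredients: the conjugation identity (stabilizer of $\eta\,\Phic$ equals $\eta\,\AutX\,\eta^{-1}$), closure of type-constant tuples under conjugation by a type-constant $\eta$ for the typed clause, and transport of orbits along $\phi\mapsto\eta\cdot\phi$ for the invariants. You additionally spell out the finiteness/injectivity step and the $\rho$ bookkeeping, which the paper's one-line argument leaves implicit.
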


\begin{proof}
$\sigma$ stabilizes $\Phic$ iff $\eta\sigma\eta^{-1}$ stabilizes
$\eta\,\Phic$, a one-line conjugation computation; conjugation by
$\eta$ is a group isomorphism carrying orbits to orbits bijectively.
For the typed group, conjugating a type-constant tuple by a
type-constant $\eta$ yields a type-constant tuple.
\end{proof}

Computationally verified; see \Cref{app:verification}. The padded
diagonal extension has no such property, and
Section~\ref{sec:padding-false-positive} does not merely claim this: it
measures the violation, exhibiting four independent relabelings of one
native domain that leave the padded group's order fixed while rotating
which values it declares interchangeable.

Componentwise value symmetry still sits inside Cohen et al.'s general
solution symmetry~\citeyearonly{cohen2006symmetry}. It fixes positions
pointwise and factors through the product $\Gind$, so it cannot reach
the unstructured pair permutations behind their $n!\,(n(d-1))!$
blow-up. It remains sensitive to a milder version of the same
underlying issue. A position that no constraint in $C$ actually
restricts contributes its full $\Sym(S_i)$ to $\AutX$ regardless of the
rest of the instance, inflating the group with automorphisms that
reflect an absent constraint rather than a genuine symmetry of the
rule. Section~\ref{sec:padding-false-positive} encounters exactly this
pattern in CLE4EVR's unconstrained size attributes, and we return to
recognizing such positions systematically, as a decision problem in its
own right, in \Cref{sec:complexity}.

\begin{observation}[Boolean domains give linear structure]
\label[observation]{obs:boolean-linear}
Suppose $S_i=\{0,1\}$ for every $i\in N$. Then $\Sym(S_i)$ has order 2,
generated by the bit flip, so $\Sym(S_i)\cong\Ftwo$ as a group.
Consequently $\Gind\cong\Ftwo^{n}$, coordinatewise. Since
$\AutX\leq\Gind$ is a subgroup, and every subgroup of an $\Ftwo$-vector
space is automatically an $\Ftwo$-linear subspace, $\AutX$ is an
$\Ftwo$-linear subspace of $\Ftwo^n$.
\end{observation}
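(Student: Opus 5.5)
The argument for Observation~\ref{obs:boolean-linear} has three steps, each elementary; the plan is simply to make each identification explicit.

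\textbf{Step 1: each factor is $\Ftwo$.} When $S_i=\{0,1\}$, the group $\Sym(S_i)$ has exactly two elements: the identity and the flip $x\mapsto 1-x$. Every group of order $2$ is cyclic, so the map sending the identity to $0$ and the flip to $1$ is an isomorphism onto $(\Ftwo,+)$. Under this identification the flip acts by $x\mapsto x\oplus 1$. More generally, the element $a_i\in\Ftwo$ acts on the bit $x$ by $x\mapsto x\oplus a_i$, where $\oplus$ is addition modulo $2$.

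\textbf{Step 2: the product is $\Ftwo^n$, and the action is translation.} Taking the product of the coordinatewise isomorphisms gives an isomorphism $\Gind\to\Ftwo^n$. It is a group isomorphism because the group operation in $\Gind$ is coordinatewise composition. The action of Definition~\ref{def:componentwise-autx} then reads $\sigma\cdot\phi=\phi\oplus a$, with $a\in\Ftwo^n$ the vector of flipped coordinates. This translation form is not needed for the statement itself, but I would record it because the later Fourier and coset results rely on it. Under this isomorphism, $\AutX$ corresponds to $\{a\in\Ftwo^n:\Phic\oplus a\subseteq\Phic\}$.

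\textbf{Step 3: subgroups are subspaces.} $\AutX$ is a subgroup of $\Gind$, being a setwise stabilizer of a finite set. To see this directly: it contains the identity and is closed under composition. Closure under inverses follows from finiteness, since a map of $\Phic$ into itself that is injective is onto, so the inverse also maps $\Phic$ into $\Phic$. It remains to show that a subgroup $H\le\Ftwo^n$ is closed under scalar multiplication. The only scalars are $0$ and $1$: we have $0\cdot h=0\in H$ and $1\cdot h=h\in H$. Together with closure under addition, this makes $H$ a linear subspace.

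No step is a genuine obstacle. The one point requiring care is the inclusion ``$\subseteq$'' in the stabilizer definition versus equality. I would settle it via the finiteness and injectivity argument in Step 3, or by noting that translations are involutions here, so $a=a^{-1}$.
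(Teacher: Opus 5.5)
Your proposal is correct and follows the paper's argument: identify each $\Sym(\{0,1\})$ with $\Ftwo$, take the coordinatewise product to get $\Ftwo^n$, and note that any subgroup of $\Ftwo^n$ is closed under the only two scalars. The extra details you add only make explicit what the paper leaves implicit and uses later in \Cref{prop:free-action}: the inclusion-versus-equality point, which is settled most simply by $a=a^{-1}$ in $\Ftwo^n$, and the translation form $\sigma\cdot\phi=\phi\oplus a$.
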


Observation~\ref{obs:boolean-linear} is a statement about the ambient
group. It does not say which subspace actually arises for a given rule.
BDD-OIA and SDD-OIA are Boolean in exactly this sense
(Section~\ref{sec:takemura-framework}), and the algebraic
characterization in \Cref{sec:algebra} determines which subspace $\AutX$
occupies once the rule structure is taken into account.

\subsection{The padded extension: a measured false positive}
\label{sec:padding-false-positive}

The generalization in Section~\ref{sec:componentwise} is only useful if
it changes the answer on real rules. This section measures that on
CLE4EVR, using rsbench's public default configuration
\cite{rsbench2024}, and finds that it does. The two definitions do not
just differ in generality. On the same data, they disagree about
whether a pathology exists at all.

Definition~\ref{def:takemura-autx} cannot be evaluated on CLE4EVR
directly. Its eight positions split across four differently sized
attribute domains (Section~\ref{sec:takemura-framework}), so there is no
shared $S$ and no bijective $\Phic^{\mathrm{bij}}$. The most direct way
to force the definition to apply, and the one a practitioner reaching
for Definition~\ref{def:takemura-autx} off the shelf would likely try
first, is to pad every position's domain to a common size
$M=\max_i|S_i|$, using the order in which values are listed in the
source configuration file as an arbitrary numeric labeling
$\{0,\dots,|S_i|-1\}$. The search then looks for a single permutation
$\sigma\in\Sym(\{0,\dots,M-1\})$, applied identically, by numeric
label, at every position. Padding is not the only such embedding:
tagging each value with its attribute type gives a disjoint-union
domain whose $\Phic$-stabilizing permutations act exactly as the typed
group of \Cref{def:typed-autx}, already measured as
\Cref{tab:clevr-comparison}'s typed rows; what the comparison below
isolates is what the unspecified translation step decides.

One thing must be pinned down before any number is reported: the padded
object is \emph{not} Definition~\ref{def:takemura-autx} verbatim, and
cannot be. Padding equalizes domain sizes, but it does not restore the
definition's bijective hypothesis: CLE4EVR has $|N|=6$ or $8$ against
$|S|=M=3$, so $\Phic^{\mathrm{bij}}$ is empty after padding, and the
literal Definition~\ref{def:takemura-autx} evaluates to the full group
$\Sym(M)$ stabilizing an empty solution set, a verdict about nothing.
The informative object, the one a practitioner in fact computes, is the
definition's action extended to the general solution set, the
\emph{padded diagonal extension}
\[
\Aut_{\mathrm{pad}} \;:=\; \big\{\,\sigma\in\Sym(\{0,\dots,M-1\}) \;:\;
\sigma\circ\phi\in\Phic \text{ for all } \phi\in\Phic \,\big\}.
\]
Every padded figure in this paper is $\Aut_{\mathrm{pad}}$, labeled as
the extension it is; that the port must already drop the bijective
restriction to produce any verdict at all is one more sense in which
the translation step is not innocuous. The comparison below is
therefore between the two objects one can actually evaluate on the
benchmark, $\Aut_{\mathrm{pad}}$ and the componentwise $\AutX$
(Definition~\ref{def:componentwise-autx}), on the same real rule and
the same real solution set.

CLE4EVR pairs two rendered objects, each with four attributes: color,
shape, material and size, with domain sizes 2, 3, 2 and 3 respectively.
The public default rule requires the two objects to match on the first
three attributes and leaves size unconstrained,
\[
C:\quad \mathrm{color}_1=\mathrm{color}_2 \ \wedge\
\mathrm{shape}_1=\mathrm{shape}_2 \ \wedge\
\mathrm{material}_1=\mathrm{material}_2 .
\]
Restricting to the six positions the rule actually mentions
(\texttt{color\_1}, \texttt{shape\_1}, \texttt{material\_1},
\texttt{color\_2}, \texttt{shape\_2}, \texttt{material\_2}), the raw
space has $2\cdot3\cdot2\cdot2\cdot3\cdot2=144$ tuples, of which
$|\Phic|=12$
satisfy $C$: the diagonal set of matching (color, shape, material)
triples, $2\cdot3\cdot2=12$.
Including the two free size coordinates gives a second, larger instance
with $|\Phic|=108=12\times 9$, confirming that the free coordinates
contribute exactly a multiplicative factor of $3\times3$ and nothing
else.

This is a different, well-defined quantity from Takemura et al.'s own
reported count of 5{,}759 bijective shortcuts for CLE4EVR. Their
Appendix~G states only that all 8 concepts are preserved and that 4
synthetic samples encode the classification rule, without specifying
how four differently sized attribute domains are packed into one
bijective label space.
This encoding is not reconstructible from the public rsbench
configuration. rsbench's own paper is not fully consistent about
CLE4EVR's shape and color domain sizes either, reporting 10 shapes and
10 colors in one table and nine shapes with eight predefined colors in
its appendix, against the repository default of 3 shapes and 2 colors
used throughout this paper.
We therefore measure the general solution set $\Phic$
(Definition~\ref{def:general-csp}) directly, on the real public rule and
domains. This is the reading that is actually well defined for this
data.

\begin{table}[htbp]
\centering
\small
\caption{Componentwise value symmetry (Definition~\ref{def:componentwise-autx})
and typed value symmetry (Definition~\ref{def:typed-autx})
against the padded diagonal extension $\Aut_{\mathrm{pad}}$ of
Takemura et al.'s global value symmetry
(Section~\ref{sec:padding-false-positive}; the literal
Definition~\ref{def:takemura-autx} degenerates after padding, since no
bijection $N\to S$ exists), on the same real rule and solution set. ``6-dim''
restricts to the six positions the rule mentions (the primary measurement);
``8-dim'' includes the two unconstrained size positions. Pairwise\% is the
fraction of unordered solution pairs $\{\phi,\phi'\}\subseteq\Phic$ that lie in
different orbits of the corresponding group. On the constrained six
dimensions the two well-defined readings agree exactly, as groups and
orbit by orbit; they part only on the free size slots, and the padded
verdict agrees with neither on either instance.}
\label{tab:clevr-comparison}
\begin{tabular}{llrrrrcr}
\toprule
Definition & Dim.\ & $|\Phic|$ & $|\mathrm{Aut}|$ & \%\,of full group & orbits & transitive & pairwise\% \\
\midrule
Componentwise & 6 & 12 & 24 & 4.17\% & 1 & yes & 0.00\% \\
Componentwise & 8 & 108 & 864 & 4.17\% & 1 & yes & 0.00\% \\
Typed & 6 & 12 & 24 & 100\% & 1 & yes & 0.00\% \\
Typed & 8 & 108 & 144 & 100\% & 2 & no & 44.86\% \\
Padded global & 6 & 12 & 2 & 33.33\% & 6 & no & 90.91\% \\
Padded global & 8 & 108 & 2 & 33.33\% & 54 & no & 99.07\% \\
\bottomrule
\end{tabular}
\end{table}

\Cref{fig:padding-false-positive} visualizes this same contrast directly,
including how the padded definition's choice of which shape counts as
``fixed'' rotates with attribute list order alone.

\begin{figure}[htbp]
\centering
%
%
\definecolor{figblue}{HTML}{0072B2}
\definecolor{figverm}{HTML}{D55E00}
\tikzset{
  fpx/.style={
    circle, draw=figblack, fill=figdark, minimum size=2.2mm, inner sep=0pt
  },
  fpbigorbit/.style={
    draw=figblue, thick, rounded corners=4mm, fill=figblue!8, inner sep=0.24cm
  },
  fpsmallorbit/.style={
    draw=figverm, thick, rounded corners=2.2mm, fill=figverm!10, inner sep=0.115cm
  },
  fpinvis/.style={draw=none, fill=none, inner sep=0.24cm},
  fphead/.style={font=\small\bfseries, align=center},
  fpstat/.style={font=\footnotesize, align=center, inner sep=0.06cm},
  fpmid/.style={font=\footnotesize\itshape, text=figdark, align=center},
  fprotbox/.style={
    draw=figdark, rounded corners=1.5pt, fill=white, align=center,
    inner sep=0.10cm, font=\scriptsize, minimum width=2.28cm, minimum height=0.72cm
  },
  fprotlab/.style={font=\footnotesize\itshape, text=figdark, align=right},
  fprotarrow/.style={-{Stealth[length=1.5mm]}, draw=figmid, semithick}
}

\begin{tikzpicture}[x=1cm, y=1cm]

  \foreach \c in {0,...,5} {
    \node[fpx] (Lt\c) at (\c*0.85, 0.38) {};
    \node[fpx] (Lb\c) at (\c*0.85, -0.38) {};
  }
  \begin{scope}[on background layer]
    \node[fpbigorbit, fit=(Lt0)(Lt5)(Lb0)(Lb5)] (Lenv) {};
  \end{scope}

  \node[fphead, above=0.16cm of Lenv.north] (Lhead)
    {Componentwise (Definition~\ref{def:componentwise-autx})};
  \node[fpstat, below=0.16cm of Lenv.south] (Lstat)
    {$|\AutX|=24$, \, 1 orbit\\[1pt] \textcolor{figblue!75!black}{\textbf{0.00\% unexplained}}};

  \begin{scope}[xshift=6.85cm]
    \foreach \c in {0,...,5} {
      \node[fpx] (Rt\c) at (\c*0.85, 0.38) {};
      \node[fpx] (Rb\c) at (\c*0.85, -0.38) {};
    }
    \node[fpinvis, fit=(Rt0)(Rt5)(Rb0)(Rb5)] (Renv) {};
  \end{scope}
  \begin{scope}[on background layer]
    \foreach \c in {0,...,5} {
      \node[fpsmallorbit, fit=(Rt\c)(Rb\c)] {};
    }
  \end{scope}

  \node[fphead, above=0.16cm of Renv.north] (Rhead)
    {Padded extension of Definition~\ref{def:takemura-autx}};
  \node[fpstat, below=0.16cm of Renv.south] (Rstat)
    {$|\Aut_{\mathrm{pad}}|=2$, \, 6 orbits\\[1pt] \textcolor{figverm!80!black}{\textbf{90.91\% unexplained}}};

  \node[fpmid] at ($(Lenv.east)!0.5!(Renv.west)$) {same 12\\solutions};

  \coordinate (stripmid) at ($(Lstat.south)!0.5!(Rstat.south)$);
  \node[fprotbox, below=0.42cm of stripmid, xshift=-2.72cm] (ord2)
    {(toroid, star, cube)\\[1.5pt] fixed: \textcolor{figverm!80!black}{\textbf{cube}}};
  \node[fprotbox, left=0.42cm of ord2] (ord1)
    {(cube, star, toroid)\\[1.5pt] fixed: \textcolor{figverm!80!black}{\textbf{toroid}}};
  \node[fprotbox, right=0.42cm of ord2] (ord3)
    {(star, cube, toroid)\\[1.5pt] fixed: \textcolor{figverm!80!black}{\textbf{toroid}}};
  \node[fprotbox, right=0.42cm of ord3] (ord4)
    {(toroid, cube, star)\\[1.5pt] fixed: \textcolor{figverm!80!black}{\textbf{star}}};
  \node[fprotlab, left=0.30cm of ord1] {config\\order:};

  \draw[fprotarrow] (ord1.east) -- (ord2.west);
  \draw[fprotarrow] (ord2.east) -- (ord3.west);
  \draw[fprotarrow] (ord3.east) -- (ord4.west);

\end{tikzpicture}
\caption{The same $|\Phic|=12$ CLE4EVR solutions from
Table~\ref{tab:clevr-comparison}'s 6-dim row, under the componentwise
definition (Definition~\ref{def:componentwise-autx}) versus the padded
diagonal extension $\Aut_{\mathrm{pad}}$ of Takemura et al.'s global
definition (the literal Definition~\ref{def:takemura-autx} has no
bijective solutions to act on after padding). Left: $|\AutX|=24$, a single orbit,
$0.00\%$ of solution pairs unexplained. Right: $|\Aut_{\mathrm{pad}}|=2$,
the same 12 solutions split into 6 orbits of size 2, $90.91\%$ of pairs
unexplained. Bottom: re-running the padded search after reordering only
the shape attribute's listing in the source config file leaves the group
size and orbit count unchanged but changes which shape is left unpaired
(toroid, then cube, then toroid, then star across four orderings)---the
padded pathology's \emph{content} is an artifact of list order, not of
rule structure.}
\label{fig:padding-false-positive}
\end{figure}

The padded definition's headline number, $90.91\%$ of solution pairs
unexplained by any automorphism ($60$ of $\binom{12}{2}=66$ pairs, since
6 orbits of size 2 each account for 6 same-orbit pairs),
looks like a serious pathology. Decoding the one non-trivial padded
automorphism back into real attribute values shows what it actually is.
Under the natural ordering, the order in which colors, shapes and
materials are listed in the public config file, the non-trivial element simultaneously
maps $\mathrm{red}\leftrightarrow\mathrm{blue}$,
$\mathrm{rubber}\leftrightarrow\mathrm{metal}$, and swaps two of the
three shapes while fixing the third
($\mathrm{cube}\leftrightarrow\mathrm{star}$, toroid fixed).
Padding forces three semantically unrelated attributes, color, material
and shape, to share one abstract label space, so the search for a
single global permutation finds one only by inventing cross-attribute
correspondences the rule never states. Which shape ends up the odd one
out is decided entirely by list order, not by the rule. Re-running the
same search after permuting only the order in which shapes are listed
changes which shape is fixed, while the group's size stays exactly 2
throughout:
\begin{itemize}
\item shape order \texttt{(cube, star, toroid)}, the order in the
public config file: toroid is fixed;
\item reversed, \texttt{(toroid, star, cube)}: cube is fixed;
\item \texttt{(star, cube, toroid)}: toroid is fixed;
\item \texttt{(toroid, cube, star)}: star is fixed.
\end{itemize}
The group's order is stable under these relabelings. Three further
reorderings, reversing the shape list, reversing the color list, and
swapping the color list, all leave $|\mathrm{Aut}_{\mathrm{pad}}|=2$, 6
orbits, and the $90.91\%$ pairwise figure unchanged.
So the magnitude of the padded pathology is not noise. Its content,
which values it identifies as interchangeable, is bookkeeping, not
structure. Nothing in the rule distinguishes toroid from cube from star.
Padding manufactures a distinction anyway, and hands out a different one
depending on how the config file happens to list its attributes.

The componentwise definition disagrees with this completely. On the
same rule and the same solution set, $\AutX$ is transitive: all 12
solutions form a single orbit, and $0\%$ of pairs are unexplained.
This is not a matter of the componentwise definition being more
permissive and therefore finding more structure by construction. The
separating property is Proposition~\ref{prop:naturality}: the
componentwise verdict is equivariant under independent relabelings of
each native domain, so it is a property of the rule, invariant under
how the configuration file names values, while the padded verdict's
content rotates under exactly such relabelings (the four orderings
above, group order fixed at $2$ throughout, identity of the
``interchangeable'' values changing every time). The disagreement is
not a rounding difference. One definition reports a rule with no unexplained
shortcuts. The other reports a rule that is $90.91\%$ pathological. Both
cannot be describing the same underlying structure well.
Observation~\ref{obs:diagonal-containment} explains why: on a
heterogeneous instance like CLE4EVR, Definition~\ref{def:takemura-autx}
is not merely coarser than Definition~\ref{def:componentwise-autx}. It
is undefined. Padding does not fix that. It silently substitutes a
different, config-order-dependent question and answers that one
instead.

We take this as the paper's central methodological finding.
Generalizing Definition~\ref{def:takemura-autx} to heterogeneous domains
is not a matter of elegance or coverage. Applying the original, global
definition to a heterogeneous benchmark by the most direct means
available does not merely lose precision. It produces a confident,
wrong answer, and it produces a different wrong answer depending on an
arbitrary choice, attribute list order, that carries no information
about the rule under analysis.

\begin{remark}
This is not specific to CLE4EVR. A synthetic sweep over 5{,}433
randomly generated heterogeneous constraint sets, spanning three
domain-size profiles ($[2,3,4]$, $[2,2,3,4]$ and $[2,3,3]$) and a range
of solution-set sizes, finds the padded definition collapses to the
trivial group in $97.00\%$ of instances, against $61.72\%$ for the
componentwise definition.
A definition that reports no structure on heterogeneous instances
nearly all of the time is not a safe default either. \Cref{sec:empirical}
extends the comparison in Table~\ref{tab:clevr-comparison} to Kandinsky,
BDD-OIA and SDD-OIA, using the componentwise definition throughout, once
this section has established why the padded one cannot be trusted.
\end{remark}

\section{Empirical Measurements: Structural Determinants of Pathology}
\label{sec:empirical}

\Cref{sec:background} establishes the two competing readings of value symmetry used throughout this paper: Takemura et al.'s global value symmetry $\Aut(X)$ (Definition~\ref{def:takemura-autx}), a single permutation applied identically at every position, and the componentwise generalization $\AutX$ (Definition~\ref{def:componentwise-autx}) this paper argues for, in which every position chooses its own value permutation. \Cref{sec:background} also shows the two disagree in kind, not just in generality: on CLE4EVR's real rule, forcing Definition~\ref{def:takemura-autx} to apply by padding every attribute domain to a common size reports $90.91\%$ ($6$-dimensional) to $99.07\%$ ($8$-dimensional) spurious pairwise pathology (\Cref{tab:clevr-comparison}), traced there to attribute-list bookkeeping rather than rule structure, where $\AutX$ finds none.
This section asks the constructive question that finding leaves open: once the padding artifact is removed, how often does $\AutX$-orbit membership actually account for the alternative solutions in $\Phic$, on real neurosymbolic rule sets? We extend Table~\ref{tab:clevr-comparison} to two further published benchmark families (\Cref{sec:empirical-three}), stress-test the resulting pattern on eight further rule families spanning 15 predictions fixed before measurement (\Cref{sec:empirical-extension}), and distill the result into a four-mechanism characterization (\Cref{sec:empirical-mechanisms}). Every reported number is machine-computed; LaTeX comments cite the source JSON file and field in the project's verification-data directory.

\subsection{Setup and measurement protocol}
\label{sec:empirical-setup}

\paragraph{The object of measurement.}
We use $\Phic$, $\Gind=\prod_{i=1}^n\Sym(S_i)$, and $\AutX$ exactly as fixed in Definitions~\ref{def:general-csp} and~\ref{def:componentwise-autx}: every position $i\in N$ keeps its own local domain $S_i$, and
\[
\AutX \;=\; \bigl\{\, \sigma = (\sigma_1,\dots,\sigma_n) \in \Gind \;:\; \sigma\cdot\varphi \in \Phic \text{ for all } \varphi \in \Phic \,\bigr\} \;\le\; \Gind
\]
is the coordinatewise value permutations that fix $\Phic$ setwise. The padded diagonal extension $\Aut_{\mathrm{pad}}$ of Definition~\ref{def:takemura-autx}'s action (\Cref{sec:background}'s notation and object, reused here) reappears twice below, for BDD-OIA/SDD-OIA and for the Latin-square/Sudoku family, where the two readings behave differently for reasons specific to those rules.

For $\varphi,\varphi' \in \Phic$ write $\varphi \sim \varphi'$ when they lie in the same $\AutX$-orbit. \Cref{sec:background} informally reports the fraction of unordered solution pairs not related by any automorphism as ``pairwise\%,'' in the caption of \Cref{tab:clevr-comparison}; we use it as the headline statistic throughout this paper and fix notation for it here, for $|\Phic|\geq2$,
\[
\rho(\Phic) \;=\; \frac{\bigl|\{\{\varphi,\varphi'\} \subseteq \Phic : \varphi \neq \varphi',\ \varphi \not\sim \varphi'\}\bigr|}{\binom{|\Phic|}{2}} \;\in\; [0,1],
\]
with $\rho = 0$ exactly when $\AutX$ acts transitively on $\Phic$ (a single orbit accounts for every solution pair) and $\rho \to 1$ when almost no pair of solutions is related by any automorphism. A companion \emph{existence} statistic, $\rho_\exists(\Phic)$, is the fraction of $\varphi \in \Phic$ with \emph{some} orbit-external partner elsewhere in $\Phic$; because a single non-trivial orbit already pulls $\rho_\exists$ close to $1$, it is close to a step function (0 iff transitive, close to 1 otherwise) and adds little beyond transitivity itself. We report it only in passing and use $\rho$ as the headline statistic throughout.

$\rho$ has an exact reading in terms of the intended mapping. Define,
again for $|\Phic|\geq2$,
the \emph{orbit coverage} of a candidate intended solution as
$\kappa(\varphi^*) := (|\mathrm{Orb}(\varphi^*)|-1)/(|\Phic|-1)$, the
fraction of alternative solutions that symmetry generates from
$\varphi^*$; then $1-\rho = \mathbb{E}[\kappa(\varphi^*)]$ under a
uniform draw of $\varphi^*$ from $\Phic$, since
$1-\rho=\sum_j\frac{|O_j|}{|\Phic|}\cdot\frac{|O_j|-1}{|\Phic|-1}$
over the orbits $O_j$. So $\rho$ is the expected unexplained fraction
over intended mappings, and conditioning on a specific $\varphi^*$ can
only sharpen it: on Kandinsky, $\kappa$ is $35/161=21.7\%$ when
$\varphi^*$ lies in a large orbit and $17/161=10.6\%$ in a small one,
so no single intended mapping has more than $21.7\%$ of its
alternatives explained, a statement stronger than the pooled
$\rho=81.99\%$. In the Boolean case the free action of
\Cref{prop:free-action} makes $\kappa$ constant across $\varphi^*$ and
the two views coincide.

\paragraph{Toolchain and cross-validation.}
Automorphism groups and orbits are computed two ways. For small instances, \texttt{real\_data\_measurement.py} enumerates $\Gind$ directly and tests setwise stabilization by brute force. When $\Gind$ is too large to enumerate (e.g.\ $(10!)^2 \approx 1.3\times10^{13}$ for two free MNIST digits), \texttt{empirical\_ext\_nauty\_indep.py} reduces the independent-coordinate automorphism problem to a colored-graph automorphism problem solved with \texttt{pynauty} \cite{mckay2014nauty}, giving each coordinate's value domain its own color class. The two engines were cross-checked against each other on CLE4EVR ($|\AutX|=24$, one orbit) and Kandinsky ($|\AutX|=36$, six orbits of sizes $36,36,36,18,18,18$): they agree not only on group order and orbit-size multiset but on the exact partition of $\Phic$ into orbits.
Group-axiom checks (closure under composition; every element's inverse present) and a second, independent orbit algorithm (direct image-set computation rather than union-find) were additionally run on the Kandinsky result, since it is this paper's first non-degenerate positive finding and the one result most exposed if the enumeration code itself had a bug.

\paragraph{Label-driven reformulation (BDD-OIA/SDD-OIA).}
BDD-OIA and SDD-OIA share one decision function, \texttt{sddoiaK}, mapping 20 free binary concepts to 4 binary actions (\Cref{sec:empirical-three}). \Cref{sec:background} shows Definition~\ref{def:takemura-autx} cannot even be posed here: the detectors share the truth-value domain $\{0,1\}$ but not a concept-label space, and Definition~\ref{def:takemura-autx} needs $S$ to serve as concept labels with $|N|=|S|$: twenty positions over a two-element set admit no bijection, so its bijective restriction $\Phic^{\mathrm{bij}}$ is empty and $\mathrm{Aut}(X)$ collapses to the full, uninformative $\Sym(S)$ under the universally-quantified-over-the-empty-set reading of that definition. Definition~\ref{def:componentwise-autx} does not need a shared $S$, but it does need a fixed intended mapping $\varphi^*$ to define shortcuts relative to, and BDD-OIA/SDD-OIA is naturally posed the other way around, as a decision rule that maps concepts to an observed action. We therefore reformulate around a fixed \emph{label} rather than a fixed mapping: for a reachable action vector $y^*$, define
\[
\PhiY{y^*} \;=\; \bigl\{\, c \in \{0,1\}^{20} \;:\; \texttt{sddoiaK}(c) = y^* \,\bigr\},
\]
directly specializing Definition~\ref{def:general-csp}'s $\Phic$ (which never requires bijectivity) to the single functional constraint ``$\texttt{sddoiaK}(\cdot)=y^*$,'' syntactically on the same footing as Takemura et al.'s own arithmetic-equality examples. $\Aut(\PhiY{y^*})$ is then defined exactly as $\AutX$ above, under $\Gind = \Sym(\{0,1\})^{20} \cong (\Ftwo)^{20}$.

The reformulation is a direction-dual of Takemura et al.'s framework, and the two directions are worth setting side by side. Definition~\ref{def:general-csp} (Takemura et al.'s Definitions~3/4) asks, for a fixed intended mapping $\varphi^*$, which \emph{other} concept assignments satisfy the same constraints: shortcuts relative to a ground-truth \emph{mapping}. $\PhiY{y^*}$ asks, for a fixed observed \emph{label}, which concept assignments produce it: shortcuts relative to a ground-truth \emph{label}. Both are specializations of the same general $\Phic = \{\varphi: N\to S \mid \varphi \text{ satisfies } C\}$, and the label-driven direction is the one BDD-OIA's own structure poses, a decision rule mapping concepts to an observed action; it is also the direction under which rsbench's flagship failure mode, the pedestrian confused for a red light, becomes a measurable orbit fact rather than an anecdote (\Cref{sec:empirical-three}).

\paragraph{The object ladder.}
The two directions, and the per-instance classification of
\Cref{sec:realmodels}, occupy exactly one ladder, and one identity
connects its rungs. For a rule $R:\prod_iS_i\to Y$ and any group
$H\leq\Gind$, write $\Aut_H(R):=\{\sigma\in H : R(\sigma\cdot
c)=R(c)\ \forall c\}$ for the \emph{global} symmetries, the relabelings
a system could apply uniformly everywhere, and
$\Aut_H(\PhiY{y})$ for each label's \emph{fiber} symmetries.

\begin{proposition}[Fiber intersection]
\label[proposition]{prop:fiber-intersection}
$\Aut_H(R)=\bigcap_{y\in\mathrm{im}(R)}\Aut_H(\PhiY{y})$. In
particular, for binary-output rules the two rungs coincide:
stabilizing $\Phic$ inside $\prod_iS_i$ stabilizes its complement, so
$\Aut_H(\Phic)=\Aut_H(R)$ exactly.
\end{proposition}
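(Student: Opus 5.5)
The plan is a double inclusion, using only that each $\sigma\in H$ acts on $\prod_iS_i$ as a bijection and that the fibers $\PhiY{y}=R^{-1}(y)$, for $y\in\mathrm{im}(R)$, partition $\prod_iS_i$.

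For $\subseteq$, take $\sigma\in\Aut_H(R)$ and $c\in\PhiY{y}$. Then $R(\sigma\cdot c)=R(c)=y$, so $\sigma\cdot\PhiY{y}\subseteq\PhiY{y}$ for every $y$, and $\sigma$ lies in each fiber stabilizer. (Setwise stabilization here means $\sigma\cdot\Psi\subseteq\Psi$, as in Definition~\ref{def:componentwise-autx}. On a finite set, injectivity of $\sigma$ upgrades this to equality, so either reading of ``stabilizer'' gives the same group.)

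For $\supseteq$, take $\sigma$ in the intersection and any $c$, and set $y=R(c)\in\mathrm{im}(R)$. Then $c\in\PhiY{y}$, and since $\sigma$ stabilizes that fiber, $\sigma\cdot c\in\PhiY{y}$, i.e.\ $R(\sigma\cdot c)=R(c)$. Hence $\sigma\in\Aut_H(R)$.

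For the binary-output case, identify $\Phic$ with one fiber, say $\PhiY{1}$, of the indicator rule $R=\mathbf 1_{\Phic}$. Here $\Phic$ and its complement may be empty; an empty fiber is stabilized by every $\sigma\in H$, so including it in the intersection is harmless. The key step is that if $\sigma\cdot\Phic=\Phic$ (equality by the finiteness remark above), then bijectivity of $\sigma$ on $\prod_iS_i$ forces $\sigma\cdot\Phic^{c}=\Phic^{c}$. Therefore $\Aut_H(\Phic)\subseteq\Aut_H(\Phic^c)$, and the intersection over the at most two fibers is just $\Aut_H(\Phic)$. The first part of the statement then gives $\Aut_H(\Phic)=\Aut_H(R)$.

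The only point needing care, and it is minor, is this passage from ``maps into'' to ``maps onto.'' It uses finiteness and injectivity, and it is exactly what lets the complement be stabilized for free. Nothing else is an obstacle.
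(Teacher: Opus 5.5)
Your proof is correct and follows the paper's argument: $R\circ\sigma=R$ holds iff $\sigma$ maps every fiber into itself, and in the binary case the complement of a stabilized set is stabilized because $\sigma$ is a bijection. The only difference is how you upgrade ``into'' to ``onto'': you use finiteness and injectivity, while the paper uses the partition structure (a bijection that maps every block of a partition into itself maps each block onto itself), which would also hold on infinite domains.
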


\begin{proof}
$R\circ\sigma=R$ holds iff $\sigma$ maps every fiber into itself, and
a bijection mapping each block of a partition into itself maps each
onto itself. For binary output the fibers are $\Phic$ and its
complement, and a bijection stabilizing a set setwise stabilizes its
complement.
\end{proof}

The identity settles which rung each measurement in this paper lives
on, and why. CLE4EVR and Kandinsky are binary-output rules, so their
fiber and global groups are one object and no gap exists. For
\texttt{sddoiaK}, the eight fiber groups all equal
$\{\mathrm{id},\mathrm{flip}_{\texttt{follow}}\}$, so the intersection,
the global group, is that same order-2 group: BDD-OIA's dead variable
is a full global grounding symmetry, symmetry of the strongest kind.
For the arithmetic rules the ladder separates: the global groups of
\texttt{sum} and \texttt{product} are provably trivial (a shift
$\sigma_1(a)=a+k$ cannot permute $\{0,\dots,9\}$ unless $k=0$;
$\sigma_1(a)\sigma_2(b)=ab$ at $b=1$ forces integrality and hence
identity), while individual fibers carry the nontrivial structure
\Cref{sec:empirical-extension} measures. This is not a mismatch to
apologize for; it is the finding. On arithmetic tasks the
global-relabeling story is vacuous by theorem, the fiber level is where
identifiable structure lives, and per-instance errors, each of which
satisfies exactly one fiber's constraint, are classified against the
finest rung that is sound for them. One rung sits above the rule
itself: a fixed system-wide grounding shortcut in Takemura et al.'s
original sense is a relabeling $\sigma$ with $\sigma\circ\gamma^*$
valid on every training pair, and when the data's concept support
covers the domain, $\sigma\mapsto\sigma\circ\gamma^*$ identifies such
groundings exactly with $\Aut_H(R)$, the top of this ladder. For the
arithmetic rules that top rung is provably empty of nonidentity
elements, so instance-level errors are the only form value-relabeling
ambiguity can take there, which is precisely what
\Cref{sec:realmodels} goes on to measure. Computationally verified; see
\Cref{app:verification}.

\subsection{Three benchmark families}
\label{sec:empirical-three}

\paragraph{CLE4EVR.}
\Cref{sec:background} already reports CLE4EVR in full (\Cref{tab:clevr-comparison}): on rsbench's real rule \cite{rsbench2024}, $\AutX$ is fully transitive at both $6$ dimensions ($|\Phic|=12$, $|\AutX|=24$) and $8$ dimensions ($|\Phic|=108$, $|\AutX|=864$), $\rho=0\%$ throughout, while the padded global reading reports $90.91\%$--$99.07\%$ pathology traced to an arbitrary attribute-list ordering rather than to the rule. CLE4EVR is this paper's cleanest transitive case, and the reference point the next two families are measured against.

\paragraph{Kandinsky.}
All 6 symbols are constrained: three objects, each with a shape (circle, square, or triangle) and a color (red, yellow, or blue), under
\[
\bigl(c_1{=}c_2 \wedge s_1{=}s_2 \wedge s_1{\neq}s_3\bigr) \;\vee\; \bigl(c_1{=}c_3 \wedge s_1{=}s_3 \wedge s_1{\neq}s_2\bigr) \;\vee\; \bigl(c_2{=}c_3 \wedge s_2{=}s_3 \wedge s_1{\neq}s_3\bigr),
\]
writing $c_i,s_i$ for object $i$'s color and shape: exactly one pair of the three objects matches on both attributes while differing from the third. Exhaustive enumeration gives $|\Phic|=162$ ($3\times54$, matching an independent hand combinatorial count of the three mutually exclusive disjuncts), $|\AutX|=36$, and \emph{six} orbits of sizes $36,36,36,18,18,18$: $\rho = 81.99\%$ ($10{,}692/13{,}041$ pairs).
This is this paper's first non-degenerate positive result, and it is not a weak-definition artifact: even under the independent-coordinate group (the more defensible reading of value symmetry argued for in \Cref{sec:background}), $100\%$ of candidate solutions have at least one orbit-external shortcut.
The mechanism is structural. No automorphism relabels a solution in which objects 1 and 2 match into one where objects 1 and 3 match: \Cref{sec:algebra} proves that every element of $\AutX$ preserves each of the three branches setwise, so the disjunction's branches are three orbit-separated position patterns. \Cref{sec:empirical-extension} sharpens this into a general branch-exchange criterion.

\paragraph{BDD-OIA and SDD-OIA.}
Both benchmarks share \texttt{sddoiaK}, extracted directly from the rsbench source and independently reproducing rsbench's own Table~28 worked example exactly \cite{rsbench2024}.
\Cref{sec:background} already establishes that \texttt{sddoiaK} takes exactly 20 free boolean parameters, not 21, since the 21st tabulated concept, \texttt{clear}, is identically $\neg(\texttt{car}\vee\texttt{person}\vee\texttt{rider}\vee\texttt{other\_obstacle})$, double-locked there by \texttt{inspect.signature} on the live function and by an exhaustive $2^{20}$ check, and independently corroborated by rsbench's own text \cite{rsbench2024}. Completing the measurement needs one further structural fact \Cref{sec:background} does not use: \texttt{move\_forward} is symbolically \emph{identical} to $\neg\texttt{stop}$ (proved by an UNSAT check over all $2^{20}$ assignments, then independently reconfirmed by full brute-force enumeration of the same $2^{20}$ points), cutting the 16 syntactically possible (stop, move-forward, turn-left, turn-right) combinations to the 8 that are actually reachable. The 20 free variables factor into three disjoint blocks (front-8, left-6, right-6) governing (stop/move-forward), turn-left, and turn-right respectively.
Exhaustive per-block stabilizer search (768 candidates checked, not sampled) gives $\Aut(\PhiY{y^*})$ exactly for all 8 reachable labels, independently reconfirmed by $3{,}000/3{,}000$ random cross-block masks tested directly against the full $2^{20}$ indicator array without relying on the block-factorization assumption. \Cref{tab:three-families} reports all eight.

\begin{table}[t]
\centering
\small
\begin{tabular}{lrrrrrc}
\toprule
Instance & $|\Phi|$ & $|\mathrm{Aut}|$ & orbits & max orbit & pairwise $\rho$ & transitive \\
\midrule
CLE4EVR (6-dim) & 12 & 24 & 1 & 12 & 0.00\% & yes \\
Kandinsky & 162 & 36 & 6 & 36 & 81.99\% & no \\
\midrule
\multicolumn{7}{l}{\textit{BDD-OIA/SDD-OIA: label-driven $\PhiY{y^*}$, $y^*=$(stop, fwd, left, right)}} \\
\midrule
$(1,0,0,0)$ & 812,250 & 2 & 406,125 & 2 & 99.9999\% & no \\
$(1,0,0,1)$ & 99,750 & 2 & 49,875 & 2 & 99.9990\% & no \\
$(1,0,1,0)$ & 99,750 & 2 & 49,875 & 2 & 99.9990\% & no \\
$(1,0,1,1)$ & 12,250 & 2 & 6,125 & 2 & 99.9918\% & no \\
$(0,1,0,0)$ & 19,494 & 2 & 9,747 & 2 & 99.9949\% & no \\
$(0,1,0,1)$ & 2,394 & 2 & 1,197 & 2 & 99.9582\% & no \\
$(0,1,1,0)$ & 2,394 & 2 & 1,197 & 2 & 99.9582\% & no \\
$(0,1,1,1)$ & 294 & 2 & 147 & 2 & 99.6587\% & no \\
\bottomrule
\end{tabular}
\caption{Three benchmark families under $\AutX$, extending Table~\ref{tab:clevr-comparison}; the CLE4EVR row repeats that table's 6-dimensional measurement for comparison. BDD-OIA/SDD-OIA figures are shown to four decimal places because every value lies within $0.35$ percentage points of $100\%$; two decimals would flatten all eight rows to $100.00\%$.}
\label{tab:three-families}
\end{table}

\Cref{tab:three-families}'s dominant finding is that $|\Aut(\PhiY{y^*})| = 2$ on \emph{every} reachable label, and this constant is driven by a single coordinate. One consequence should be read off before the percentages are: for Boolean instances the group acts freely, so every orbit has size exactly $|\Aut|$ and $\rho = 1-(|\Aut|-1)/(|\Phi|-1)$ is an exact law, proved as \Cref{prop:free-action}; every BDD-OIA/SDD-OIA row of the table (orbit count, maximum orbit, and each percentage) is that law evaluated at its $(|\Phi|,|\Aut|)$ pair. The nines are a corollary of the group order and the solution count, so the informative claim in these rows is $|\Aut|=2$ itself, not the percentage.
\texttt{follow} is more subtle than CLE4EVR's free size coordinates: it is not syntactically absent from the rule. It appears explicitly in the unsimplified formula $\texttt{move\_forward\_cond} = \mathrm{Or}(\texttt{green\_light}, \texttt{follow}, \texttt{road\_clear})$, but once $\texttt{stop}=\mathrm{False}$ forces $\texttt{road\_clear}=\mathrm{True}$, Boolean absorption erases its contribution. This fact needed symbolic proof to surface, not inspection of the rule text. This is a second, independent trigger of the same dead-coordinate pathology already documented for CLE4EVR's \texttt{size\_1}/\texttt{size\_2} (syntactic absence there; semantic absorption here), an instance of a phenomenon long known in the classical CSP symmetry-breaking literature: variables the constraints never really pin down contribute spurious automorphisms \cite{cohen2006symmetry}.

Because flipping \texttt{follow} is a fixed-point-free involution, $|\PhiY{y^*}|$ is even whenever $\PhiY{y^*}\neq\varnothing$, so $\mathrm{SM}(y^*) = |\PhiY{y^*}|-1 \geq 1$ for every one of the 8 reachable labels: under Definition~\ref{def:general-csp}, this decision logic cannot be shortcut-free for \emph{any} label a real vehicle can reach.
The diagonal shared-value extension of Definition~\ref{def:takemura-autx}'s action (one global bit-flip applied to all 20 positions at once; the literal definition again has no bijective solutions to act on, since $|N|=20$ against $|S|=2$) does \emph{worse} here than padding does on CLE4EVR. Tested on the two most extreme labels, it preserves neither level set: $\Aut_{\mathrm{pad}}(\PhiY{y^*})$ collapses to the trivial group exactly where the independent-coordinate reading still finds the genuine order-2 symmetry.
rsbench's own named example (a model that cannot tell a pedestrian from a red light, since both correctly imply the \emph{stop} action \cite{rsbench2024}) is a direct instance of the unexplained $99.9999\%$: the all-\texttt{person} and all-\texttt{red\_light} concept vectors both map to $y^*=(1,0,0,0)$, but they differ in two coordinates, not the one (\texttt{follow}) that $\Aut(\PhiY{(1,0,0,0)})$ can flip, so they lie in different orbits.

The quantity measured in all 8 BDD-OIA/SDD-OIA rows is fixed precisely: $\PhiY{y^*}$ is the preimage of $y^*$ under \texttt{sddoiaK} over the full $\{0,1\}^{20}$ hypercube, the object the rule itself defines, independent of any sampling distribution. Restricting to a physically consistent support is a different, distribution-relative measurement, and we ran it. Writing $\Aut_H(\PhiY{y}\!\cap\Omega)$ for the relabelings that stabilize a label's support-restricted fiber, and taking $\Omega$ to be the vectors satisfying the mutual-exclusion and presupposition constraints readable off the concept vocabulary (no simultaneous red and green light, no lane both present and absent, no left green light without a left lane, and symmetrically on the right), the solution sets shrink by factors of $2.6$ to $6.7$ and $\rho$ stays between $96.97\%$ and $99.9993\%$: the nines are a property of the rule's near-rigidity, not an artifact of impossible concept vectors.
Support restriction also does something more interesting than shrinking $\Phi$, and it is worth recording because it is a fact about the object rather than about our instrument: on the four \texttt{move\_forward} labels it \emph{creates} a second dead coordinate. \texttt{green\_light} is live on the full hypercube but inert once contradictory light states are excluded, so the group of fiber-preserving coordinate flips rises from order $2$ to $4$ on those labels while remaining order $2$ on the four \texttt{stop} labels. The support-restricted search tests single-coordinate flips, so these groups, orbit counts, and $\rho$ values are those of the flip-generated subgroup of $\Aut_H(\PhiY{y}\!\cap\Omega)$, which the exhaustive full-hypercube computation confirms is the whole stabilizer there; an element the flips miss could only enlarge the group and lower $\rho$. The global support-restricted flip group, the intersection over all eight fibers, stays at order $2$, since the \texttt{green\_light} flip fails on the \texttt{stop} fibers. A support is not a neutral filter on a symmetry analysis; it is part of the instance.

Across all three families the headline ranges over $0\%$, $81.99\%$, and $99.66$--$99.9999\%$: not uniformly high, not uniformly low. CLE4EVR's conjunction of independent equality constraints is exactly transitive; Kandinsky's and BDD-OIA/SDD-OIA's disjunctive decision structure is not. \Cref{sec:empirical-extension} tests whether this pattern, and not just its two positive instances, generalizes.

\subsection{Eight further rule families, under pre-specified predictions}
\label{sec:empirical-extension}

Three benchmark families are not enough to tell a real structural pattern from a coincidence built on two hand-picked disjunctive examples. This section extends the measurement to eight further rule families, seven independent sources, and 15 separately pre-specified structure-to-pathology predictions.

\paragraph{Prediction discipline.}
Each family's structural classification and predicted outcome are recorded as a comment block in that family's released measurement script, beside the code that tests them.
``Pre-specified'' throughout this paper refers to predictions recorded this way: the checkable fact is the recording in the released source, which the reader can inspect; the artifact carries no independent timestamp beyond it.

\paragraph{Sources and toolchain.}
Seven sources feed the eight families of \Cref{tab:eight-families}: (i) Takemura et al.'s own Example~2 \citeyearonly{takemura2026} (MNIST-Half), the one further fully worked example in their paper, with zero reconstruction uncertainty; (ii)--(iv) three further \texttt{get\_label} rules in rsbench's own MNIST-addition source \cite{rsbench2024} (\texttt{sum}, tracing to DeepProbLog's MNIST-addition task \cite{manhaeve2018deepproblog}; \texttt{product}; and a previously untested third branch, \texttt{multiop}), plus a same-family structural analogue restricted to even digits (MNAdd-EvenOdd); (v) an ensemble of 36 random $l$-CNF instances built from rsbench's own MNLogic recipe (own generator, not a byte-exact port); (vi) order-3 Latin squares and order-4 mini-Sudoku, a public combinatorial rule outside rsbench; (vii) CLEVR-Hans3 \cite{clevrhans2021}, ground-truth class rules fetched verbatim from the released \texttt{Clevr\_Hans\_GTClasses\_3.json}, a second source outside rsbench. All are measured with the same two engines as \Cref{sec:empirical-setup}, plus the nauty-based scalable engine for the largest domains (e.g.\ two free MNIST digits over $\{0,\dots,9\}$, $(10!)^2\approx1.3\times10^{13}$), which is the same engine already cross-validated at the orbit level against the brute-force results on CLE4EVR and Kandinsky.

\Cref{tab:eight-families} reports all eight families. Family-level rows aggregate a variable number of individually pre-specified sub-predictions (17 summed target sums for MNAdd-sum, 3 template classes for CLEVR-Hans3, and so on); the total across all eight rows is 15 pre-specified units of varying granularity, of which \textbf{13 (86.7\%) were confirmed}.

\begin{table}[t]
\centering
\footnotesize
\setlength{\tabcolsep}{4pt}
\begin{tabular}{p{1.9cm}p{2.1cm}p{2.6cm}p{2.4cm}p{3.4cm}c}
\toprule
Family & Source & Structural class (\emph{a priori}) & Prediction & Measured ($\AutX$) & Hit \\
\midrule
MNIST-Half & Takemura Ex.~2 & arithmetic conjunction, 4 sum-equalities & transitive & $|\Phi|{=}3$, transitive, $0\%$ & 1/1 \\
MNAdd-sum & rsbench & single linear equality & transitive, all targets & 17/17 targets transitive, $0\%$ & 1/1 \\
MNAdd-product & rsbench & single non-linear equality & $\geq 1$ target non-transitive & 31/32 transitive; $y{=}0$ non-transitive ($57.89\%$) & 1/1 \\
MNAdd-EvenOdd & rsbench, domain analogue & single linear equality & transitive & 7/7 targets transitive, $0\%$ & 1/1 \\
MNIST-multiop & rsbench, 3rd \texttt{get\_label} branch & 3 conjunctive branches + 1 exclusion residual & branches 0--2 transitive, branch 3 not & classes 0--2 transitive ($0\%$); class 3 non-transitive ($79.43\%$) & 4/4 \\
MNLogic ($l$-CNF) & rsbench recipe, own generator & disjunctive / CNF & majority non-transitive & 36/36 non-transitive, mean $\rho{=}94.55\%$ & 1/1 \\
Latin squares \& Sudoku & public combinatorics & all-different (permutation) & non-transitive (pigeonhole) & order-3 (12, $54.55\%$); Sudoku (288, $91.99\%$); order-4 Latin (576, $96.00\%$) & 3/3 \\
CLEVR-Hans3 & Stammer et al.\ 2021 & disjunctive slot-assignment & non-transitive, all 3 classes & cls.\,0 transitive $0\%$ (miss); cls.\,1 non-transitive $75.29\%$ (hit); cls.\,2 transitive $0\%$ (miss) & 1/3 \\
\bottomrule
\end{tabular}
\caption{Eight further rule families. Structural class and prediction are recorded in each measurement script (Prediction discipline, above). 13/15 pre-specified units confirmed; both misses are CLEVR-Hans3, discussed below.}
\label{tab:eight-families}
\end{table}

\paragraph{The refined criterion: branch exchangeability.}
Both misses are CLEVR-Hans3 classes predicted non-transitive, by direct analogy with Kandinsky's disjunctive slot-matching, that measured fully transitive instead: class~0 ($|\Phi|=512$) and class~2 ($|\Phi|=8$), both $\rho=0\%$.
Each class rule is a disjunction over which of two generic object slots matches which of two attribute templates. Class~0's templates differ only in shape (\texttt{cube} vs.\ \texttt{cylinder}), with size tied to the same value (\texttt{large}) in both and material/color free in both; class~2's templates differ in size and color together (\texttt{large}+\texttt{blue} vs.\ \texttt{small}+\texttt{yellow}), with shape (\texttt{sphere}) fixed in both.
In both cases, swapping the two templates leaves the constraint syntactically unchanged, so there is a coordinatewise permutation (flip the discriminating attribute values in lockstep) that exchanges the two disjuncts; and within each branch the pinned attributes are constant and the rest fully free, a product set on which the branch-preserving coordinatewise permutations already act transitively, so both hypotheses of \Cref{thm:branch-swap} hold and the branches merge into one orbit. Class~1 is different: it pins \texttt{material=metal} in one template and leaves material entirely free in the other, an asymmetric pinning that no per-slot value permutation can exchange, and it measures non-transitive as originally predicted ($\rho = 75.29\%$).
This sharpens Kandinsky's and MNLogic's ``disjunction $\to$ pathology'' pattern into a precise, checkable sufficient condition (\Cref{thm:branch-swap}): branches exchangeable by a coordinatewise permutation, each internally transitive, force transitivity, and the one class that measured non-transitive is exactly the one whose asymmetric pinning defeats exchangeability. Syntactic disjunction by itself decides nothing.

\paragraph{A conjunction exception: multiplicative zero.}
MNAdd-product's 32 reachable target products are 31/32 transitive, matching the ``mild conjunction $\to$ transitive'' pattern, except at $y=0$ ($|\Phi|=19$, every pair with $c_1{=}0$ or $c_2{=}0$), which is $57.89\%$ non-transitive.
Zero is a multiplicative absorbing element: exactly three orbits result, of sizes 9, 9, and 1: $\{(0,0)\}$, $\{(0,k):k\neq0\}$, and $\{(k,0):k\neq0\}$.
Any automorphism fixing the doubly-zero solution $(0,0)$ cannot also merge the two single-zero families, since that would require one coordinate's permutation to send $0$ to a non-zero value while a bijection has already fixed that coordinate at $0$. This is the cleanest counterexample yet to ``conjunction $\Rightarrow$ transitive'': a single arithmetic equality, syntactically as mild as MNAdd-sum's, made pathological by one exceptional value.

\paragraph{A third, independent mechanism: the counting bound.}
Latin squares and mini-Sudoku are all-different constraints -- syntactically a large conjunction of pairwise inequalities, closer to CLE4EVR's side of the divide than to Kandinsky's -- yet all three instances tested are strongly non-transitive: $54.55\%$ (order-3 Latin square, $|\Phi|=12$), $91.99\%$ (order-4 Sudoku, $|\Phi|=288$), $96.00\%$ (order-4 Latin square, no boxes, $|\Phi|=576$).
The mechanism is pure pigeonhole, recorded before running any code: an orbit has size at most $|\mathrm{Aut}|$, and $|\Phi|$ ($12$, $288$, $576$, literature-known counts) exceeds the diagonal cap ($3!{=}6$ or $4!{=}24$) in every case, so a transitive action is numerically impossible, independent of any conjunction/disjunction argument.
The recorded argument took the diagonal cap for granted, and under Definition~\ref{def:componentwise-autx} it is not free: the ambient componentwise group of even a $3\times3$ grid has order $6^9$, so the cap has to be earned. Lemma~\ref{lem:diagonal-collapse} earns it by a forcing argument (pair-rich all-different lines admit only diagonal automorphisms), and the exact groups land on the cap's structure precisely: every element diagonal, $|\AutX|=6$, $24$, $24$.
Grids are also the family on which the hierarchy's rungs can be tested against each other: because every cell draws from the same alphabet, both the componentwise instrument and the diagonal shared-value extension $\Aut_{\mathrm{pad}}$ (\Cref{sec:background}) are defined, and they coincide exactly, $|\AutX| = |\Aut_{\mathrm{pad}}| = 6, 24, 24$, each computed independently rather than copied. The pigeonhole verdict is therefore invariant to the choice of rung, and the padded reading's failures in \Cref{sec:background} are confined to where the hierarchy says they should be: heterogeneous domains. \Cref{tab:eight-families} reports the $\AutX$ figures for consistency with the rest of this paper.

Combined with \Cref{sec:empirical-three}, 15 pre-specified predictions across 8 families and 3 independent benchmark families give the clearest evidence so far that pathology under $\AutX$ tracks specific, checkable structural properties of a rule, not a crude conjunction/disjunction label.

\subsection{The four-mechanism characterization}
\label{sec:empirical-mechanisms}

The coarse pattern from \Cref{sec:empirical-three} (conjunction is safe, disjunction is not) does not survive \Cref{sec:empirical-extension} intact: it is directionally right ($13/15$) but wrong in exactly the two ways that matter most for a mechanistic account. \Cref{tab:four-mechanisms} replaces it with four mechanisms, each tied to specific instances measured above.

\begin{table}[t]
\centering
\small
\begin{tabular}{p{2.3cm}p{4.0cm}p{2.0cm}p{4.2cm}}
\toprule
Mechanism & Criterion & Effect & Instances ($N$) \\
\midrule
1.\ Mild conjunction & arithmetic equalities / diagonal constraints, no absorbing or excluded special value & transitive & CLE4EVR; MNIST-Half; MNAdd-sum (17); MNAdd-EvenOdd (7); MNIST-multiop branches 0--2 (3); MNAdd-product (31) -- $N{=}60$ \\
2.\ Exceptional element & an absorbing/zero value or an exclusion (``otherwise'') branch inside an otherwise mild conjunction & can still be non-transitive & MNAdd-product $y{=}0$; MNIST-multiop residual branch -- $N{=}2$ \\
3.\ Branch (a)symmetry & disjunction: branches exchangeable by a coordinatewise permutation, each internally transitive (sufficiency proved, \Cref{thm:branch-swap}; both hypotheses checked on the three CLEVR-Hans3 classes, matching all three outcomes) & depends & transitive, criterion verified: CLEVR-Hans3 cls.~0, 2 ($N{=}2$); non-transitive: CLEVR-Hans3 cls.~1 (asymmetric pinning), Kandinsky (\Cref{thm:free-slot}), BDD-OIA/SDD-OIA (8), MNLogic (36), MNIST-multiop residual (observed) -- $N{=}47$ \\
4.\ Orbit-stabilizer counting & $|\Phic|$ exceeds the order of the group that could act on it, regardless of syntax & non-transitive (pigeonhole) & 3 Latin-square/Sudoku instances -- $N{=}3$ \\
\bottomrule
\end{tabular}
\caption{Four-mechanism characterization of transitivity under $\AutX$, replacing the coarse conjunction/disjunction split. $N$ counts individual rule/label instances already reported in \Cref{tab:three-families,tab:eight-families}.}
\label{tab:four-mechanisms}
\end{table}

Mechanisms 1 and 3 recover the coarse pattern in the regime where it holds; mechanism 2 shows conjunction is not automatically safe (one exceptional value is enough); mechanism 4 shows non-transitivity can be forced by counting alone, with no reference to conjunction or disjunction. An all-different constraint is, syntactically, exactly the kind of mild conjunction mechanism 1 predicts should be transitive, and it is not, for a reason that has nothing to do with mechanism 1's failure mode.

The four mechanisms are an empirical characterization with an 8-family, 15-prediction basis, and \Cref{sec:algebra} then proves each at the level it admits: mechanism 1 becomes the matching-decomposition condition (\Cref{thm:matching-decomposition}), mechanism 2 the degree invariant (\Cref{thm:degree-invariant}), mechanism 3's symmetric half the branch-swapping condition (\Cref{thm:branch-swap}) and its Kandinsky-type residual the Free Slot Lemma (\Cref{thm:free-slot}), and mechanism 4 the orbit-stabilizer bound (\Cref{thm:orbit-stabilizer}); \Cref{thm:anchor-forcing} additionally exhibits purely conjunctive counterexamples showing mechanism 2 cannot be removed by any syntactic strengthening of mechanism 1. \Cref{sec:algebra-synthesis} tabulates this correspondence and its coverage boundary. The one mechanism \Cref{sec:algebra} leaves as an empirical criterion rather than a closed-form test is branch (a)symmetry (mechanism 3) for disjunctions with more than two branches of unequal syntactic shape; Kandinsky's three-way disjunction and CLEVR-Hans3's two-way disjunctions anchor it, and \Cref{sec:empirical-extension}'s 8 families are its current basis.

\section{Algebraic Conditions for Transitivity and Its Failure}
\label{sec:algebra}

Section~\ref{sec:background} defined componentwise value symmetry $\AutX$
(\Cref{def:componentwise-autx}), and \Cref{sec:empirical} measured its
action on $\Phic$ across real rule families: sometimes transitive,
sometimes not, with both outcomes traceable to specific features of the
constraint set rather than to whether the constraint is written as a
conjunction or a disjunction. This section makes that dependence precise. We give six
results, each a sufficient condition on the syntactic shape of $C$ for
transitivity or for its failure, built on a single forcing technique that
also clarifies what value symmetry can and cannot see. The results are
sufficient conditions, not a classification, and we say so precisely:
Section~\ref{sec:algebra-synthesis} states exactly which of the four
empirical mechanisms from \Cref{sec:empirical} each theorem covers, and which real
instance resists all six.

\subsection{Setup}
\label{sec:algebra-setup}

Fix an instance $X=(N,S,C)$ in the sense of \Cref{def:general-csp}, with
solution set $\Phic\subseteq\prod_{i\in N}S_i$. Write $G:=\Gind$ for the
ambient group and $\AutX\leq G$ for its componentwise automorphism group
(\Cref{def:componentwise-autx}), acting on $\prod_{i\in N}S_i$
coordinatewise by $(\sigma\cdot\phi)_i=\sigma_i(\phi_i)$. Since $\AutX$
stabilizes $\Phic$ as a set, it restricts to an action of $\AutX$ on
$\Phic$ itself. This section asks when that restricted action is
\emph{transitive}: a single orbit, so every two solutions are related by
some element of $\AutX$.

The pair $(G,\Phic)$ is the direct generalization, to arbitrary
finite local domains, of the \emph{autotopism group} of a Latin square or
quasigroup in design theory, where $\Phic$ is the set of triples
satisfying a Latin condition and $G$ is a product of row, column, and
symbol permutation groups; \Cref{sec:related} discusses that connection. In the
Boolean case $S_i\equiv\{0,1\}$, \Cref{obs:boolean-linear} already showed
$\AutX$ is always an $\Ftwo$-linear subspace of $\Ftwo^n$; \Cref{sec:complexity}
develops that special structure. The results below make no assumption on
$|S_i|$ and use only the group action itself.

\subsection{The Forcing Lemma}
\label{sec:algebra-forcing}

Every intransitivity result in this section traces back to one
observation: a sufficiently rigid two-coordinate projection of $\Phic$
forces the corresponding two coordinates of every automorphism to agree.

\begin{lemma}[Forcing]
\label[lemma]{lem:forcing}
Let $X=(N,S,C)$ be an instance in the sense of \Cref{def:general-csp} and
let $i,j\in N$ satisfy $S_i=S_j=:D$. Write $\pi_{ij}(\phi):=(\phi_i,\phi_j)$
for the projection $\prod_{k\in N}S_k\to D\times D$ and
$\Delta_D:=\{(d,d):d\in D\}$ for the diagonal. If
\[
\pi_{ij}(\Phic)=\Delta_D \qquad\text{or}\qquad \pi_{ij}(\Phic)=D\times D\setminus\Delta_D,
\]
then every $\sigma=(\sigma_1,\dots,\sigma_n)\in\AutX$ satisfies
$\sigma_i=\sigma_j$.
\end{lemma}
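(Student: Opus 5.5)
The plan is to observe that the projection $\pi_{ij}$ intertwines the action of $\AutX$ on $\Phic$ with the action of the pair $(\sigma_i,\sigma_j)$ on $D\times D$. Concretely, for $\sigma\in\AutX$ and $\phi\in\Phic$ we have $\pi_{ij}(\sigma\cdot\phi)=(\sigma_i(\phi_i),\sigma_j(\phi_j))$. Since $\sigma\cdot\phi\in\Phic$, this pair lies in $\pi_{ij}(\Phic)$. So the map $(a,b)\mapsto(\sigma_i(a),\sigma_j(b))$ sends $\pi_{ij}(\Phic)$ into itself. It is a bijection of the finite set $D\times D$, so it maps $\pi_{ij}(\Phic)$ onto itself. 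No argument about the other coordinates is needed; that is the whole content of the reduction.

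I then treat the two cases separately. In the diagonal case $\pi_{ij}(\Phic)=\Delta_D$, every $d\in D$ appears as some $(d,d)=\pi_{ij}(\phi)$. The image $(\sigma_i(d),\sigma_j(d))$ must again be diagonal, so $\sigma_i(d)=\sigma_j(d)$ for all $d$, and hence $\sigma_i=\sigma_j$.

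In the off-diagonal case $\pi_{ij}(\Phic)=D\times D\setminus\Delta_D$, the product bijection preserves the off-diagonal set, and therefore also its complement $\Delta_D$. Here I would use the same complement trick as in \Cref{prop:fiber-intersection}: a bijection of a finite set stabilizing a subset stabilizes the complement. Thus for every $d$ the pair $(\sigma_i(d),\sigma_j(d))$ is diagonal, and again $\sigma_i=\sigma_j$. Note that this step does not need $(d,d)$ to be realized by a solution; it only needs the stabilization of the subset of $D\times D$.

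The one point deserving care, and the only real obstacle, is the surjectivity and complement step in the second case. We cannot argue pointwise from realized pairs there, since the diagonal pairs are precisely the ones not realized. The fix is to use bijectivity of $(\sigma_i,\sigma_j)$ on all of $D\times D$ together with finiteness, as above.

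A degenerate edge case is $|D|=1$. Then the off-diagonal set is empty, so $\Phic=\varnothing$ and every $\sigma$ is an automorphism. But $\Sym(D)$ is trivial, so the conclusion still holds vacuously. I would remark on this in one line.
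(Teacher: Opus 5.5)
Your proof is correct. The reduction to the pair map $(a,b)\mapsto(\sigma_i(a),\sigma_j(b))$ stabilizing $\pi_{ij}(\Phic)$ is the same as the paper's, and so is your diagonal case.

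The difference is the off-diagonal case. You upgrade ``into'' to ``onto'' using injectivity on the finite set $D\times D$, pass to the complement, and conclude that the pair map stabilizes $\Delta_D$. The paper instead argues pointwise from the realized off-diagonal pairs. For fixed $d$, every $(d,d')$ with $d'\neq d$ is realized, so $\sigma_i(d)\neq\sigma_j(d')$ for all $d'\neq d$. Since $\sigma_j$ is a bijection of $D$, the only value $\sigma_i(d)$ can still take is $\sigma_j(d)$. So your remark that one ``cannot argue pointwise from realized pairs'' in this case is not right: the realized off-diagonal pairs, together with bijectivity of the single permutation $\sigma_j$, already suffice.

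What each route buys:
\begin{itemize}
\item The paper's argument uses only the forward inclusion $\sigma\cdot\Phic\subseteq\Phic$ and bijectivity of one coordinate permutation. It is the same exclusion pattern reused in the proof of Lemma~\ref{lem:diagonal-collapse}.
\item Your argument needs the onto upgrade, which relies on finiteness or on closure of $\AutX$ under inverses; both are available here. In exchange, it makes the two cases visibly one statement: in both, the pair map fixes $\Delta_D$ setwise.
\end{itemize}
Your $|D|=1$ remark is harmless; both arguments cover that case without special treatment.
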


\begin{proof}
Projection commutes with the coordinatewise action: for every
$\phi\in\Phic$, $\pi_{ij}(\sigma\cdot\phi)=(\sigma_i(\phi_i),\sigma_j(\phi_j))$.
Since $\sigma\in\AutX$ maps $\Phic$ onto itself, the pair
$(\sigma_i,\sigma_j)$ maps $\pi_{ij}(\Phic)$ onto itself.

Suppose $\pi_{ij}(\Phic)=\Delta_D$. Fix $d\in D$; since $\Delta_D$
contains $(d,d)$, so does $\pi_{ij}(\Phic)$, so
$(\sigma_i(d),\sigma_j(d))=(\sigma_i,\sigma_j)\cdot(d,d)\in\Delta_D$,
giving $\sigma_i(d)=\sigma_j(d)$. As $d$ was arbitrary, $\sigma_i=\sigma_j$.

Suppose instead $\pi_{ij}(\Phic)=D\times D\setminus\Delta_D$. Fix $d\in D$
and let $d'$ range over $D\setminus\{d\}$. Each $(d,d')$ lies in
$\pi_{ij}(\Phic)$, so $(\sigma_i(d),\sigma_j(d'))\notin\Delta_D$, that is
$\sigma_i(d)\neq\sigma_j(d')$ for every $d'\neq d$. Since $\sigma_j$ is a
bijection of $D$, this forces $\sigma_i(d)=\sigma_j(d)$: it is the one
value of $D$ that $\sigma_i(d)$ is not excluded from. As $d$ was
arbitrary, $\sigma_i=\sigma_j$.
\end{proof}

The lemma says nothing about whether $\AutX$ is transitive; it only pins
down a relationship between two coordinates of every automorphism, whether
or not one exists. Theorem~\ref{thm:anchor-forcing} and
Theorem~\ref{thm:free-slot} both build intransitivity results on top of
this forced relationship, and the proof of
Theorem~\ref{thm:matching-decomposition} uses the same commuting-projection
identity to establish the opposite conclusion on a different family of
instances.

\subsection{Structural criteria forcing transitivity}
\label{sec:algebra-transitivity}

The first two results identify constraint shapes under which $\AutX$ is
guaranteed to act transitively on $\Phic$.

\begin{definition}[Matching decomposition]
\label[definition]{def:matching-decomposition}
A partition $N=N_1\sqcup\cdots\sqcup N_r\sqcup F$ is a \emph{matching
decomposition} of $\Phic$ if, for each block $N_l=\{i_{l,1},\dots,
i_{l,m_l}\}$, there is a nonempty set $T_l\subseteq S_{i_{l,1}}$ and
injections $f_{l,j}:T_l\to S_{i_{l,j}}$ for $j=2,\dots,m_l$ (with
$f_{l,1}:=\mathrm{id}_{T_l}$), such that
\[
\Phic \;=\; \prod_{l=1}^{r}\Psi_l \;\times\; \prod_{i\in F}S_i,
\qquad
\Psi_l:=\big\{(f_{l,1}(t),\dots,f_{l,m_l}(t)):t\in T_l\big\}\subseteq
\textstyle\prod_{j=1}^{m_l}S_{i_{l,j}}.
\]
\end{definition}

Each block is parametrized by a single value $t\in T_l$ that determines
every coordinate in the block through the injections $f_{l,j}$; blocks and
free positions $F$ combine freely. Taking every $f_{l,j}$ to be an
identity map on a shared domain recovers ``all positions in the block are
equal,'' the pattern behind CLE4EVR's three matching-attribute constraints;
taking $f_{l,j}(t)=k-t$ or another arithmetic bijection recovers a sum or
difference constraint such as MNIST-Half's four summation equalities,
chained across a tree of positions by composing the bijections along the
tree.

\begin{theorem}[Matching decomposition, Theorem A]
\label{thm:matching-decomposition}
If $N=N_1\sqcup\cdots\sqcup N_r\sqcup F$ is a matching decomposition of
$\Phic$, then $\AutX$ acts transitively on $\Phic$.
\end{theorem}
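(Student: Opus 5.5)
The plan is to exhibit, for any two solutions $\phi,\phi'\in\Phic$, an explicit $\sigma\in\AutX$ with $\sigma\cdot\phi=\phi'$. The construction works block by block, plus the free positions. Because $\Phic$ is a product $\prod_l\Psi_l\times\prod_{i\in F}S_i$ and the action is coordinatewise, it suffices to do two things: build, for each block $N_l$, a tuple of permutations $(\sigma_i)_{i\in N_l}$ that maps $\Psi_l$ onto itself and sends the $N_l$-part of $\phi$ to that of $\phi'$; and build, for each free position, a permutation of $S_i$ sending $\phi_i$ to $\phi'_i$. For the free positions any transposition (or the identity) of $S_i$ works, and it stabilizes the factor $S_i$ trivially. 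Stabilization of $\Phic$ then follows factor by factor, since a product of setwise stabilizers of the factors stabilizes the product set.

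Inside a block $N_l$, write the $N_l$-parts of $\phi$ and $\phi'$ as $(f_{l,j}(t))_j$ and $(f_{l,j}(t'))_j$ for some $t,t'\in T_l$. First choose a permutation $\tau$ of $T_l$ with $\tau(t)=t'$; the transposition $(t\,t')$ suffices. Then transport $\tau$ to each coordinate $i_{l,j}$ by defining $\sigma_{i_{l,j}}$ on the image $f_{l,j}(T_l)$ as $f_{l,j}\circ\tau\circ f_{l,j}^{-1}$. This is well defined because $f_{l,j}$ is injective. Outside that image, on $S_{i_{l,j}}\setminus f_{l,j}(T_l)$, let $\sigma_{i_{l,j}}$ be the identity. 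The result is a bijection of $S_{i_{l,j}}$: it permutes the image and fixes the complement. For $j=1$ this gives $\tau$ extended by the identity on $S_{i_{l,1}}\setminus T_l$.

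Two checks finish the argument. The first is the computation $\sigma\cdot(f_{l,j}(s))_j=(f_{l,j}(\tau(s)))_j$ for every $s\in T_l$. It shows that the block tuple maps $\Psi_l$ into itself, hence onto itself since $\Psi_l$ is finite and the map is injective. It also sends the $t$-point to the $t'$-point. The second check is that assembling these block and free-position choices gives an element of $\Gind$ lying in $\AutX$ and mapping $\phi$ to $\phi'$.

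The only step that needs care is making the transported permutation a genuine bijection of $S_{i_{l,j}}$ when the image of $f_{l,j}$ is a proper subset. Extending by the identity on the complement settles this, because it keeps the image and the complement as two invariant pieces. One should also confirm that nothing couples different blocks, which holds because the product decomposition in Definition~\ref{def:matching-decomposition} is an equality of sets. No use of the Forcing Lemma is needed here beyond the commuting-projection identity $\pi(\sigma\cdot\phi)=\sigma|\cdot\pi(\phi)$ applied to each block.
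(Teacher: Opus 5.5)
Your proposal is correct and matches the paper's proof: you transport a permutation of $T_l$ to each coordinate as $f_{l,j}\circ\tau\circ f_{l,j}^{-1}$, extend it by the identity off the image, and handle free positions with arbitrary permutations of $S_i$. The only difference is packaging. The paper assembles these maps into an embedded copy of $\Sym(T_l)$ and then invokes transitivity of a product action, whereas you build the single required element for a given pair $\phi,\phi'$ directly.
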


\begin{proof}
Fix a block $N_l$ and $\alpha\in\Sym(T_l)$. Define
$\sigma^{(l)}(\alpha)=(\sigma_1,\dots,\sigma_n)\in G$ by: $\sigma_{i_{l,1}}$
extends $\alpha$ by the identity on $S_{i_{l,1}}\setminus T_l$;
$\sigma_{i_{l,j}}$ for $j\geq 2$ extends
$f_{l,j}\circ\alpha\circ f_{l,j}^{-1}$, a permutation of $f_{l,j}(T_l)$, by
the identity on $S_{i_{l,j}}\setminus f_{l,j}(T_l)$; every coordinate
outside $N_l$ is the identity. Composing two such maps composes the
underlying permutations of $T_l$ coordinatewise, so
$\alpha\mapsto\sigma^{(l)}(\alpha)$ is a group homomorphism
$\Sym(T_l)\to G$, injective because $\sigma^{(l)}(\alpha)$ determines
$\alpha$ on $i_{l,1}$; write $G_l\leq G$ for its image.

$G_l$ stabilizes $\Phic$: $\sigma^{(l)}(\alpha)$ is the identity outside
$N_l$, so it suffices to check it maps $\Psi_l$ onto itself. For
$t\in T_l$, the point $(f_{l,1}(t),\dots,f_{l,m_l}(t))\in\Psi_l$ maps to
$\big(\alpha(t),\, f_{l,2}(\alpha(f_{l,2}^{-1}(f_{l,2}(t)))),\,\dots\big)
=\big(\alpha(t),f_{l,2}(\alpha(t)),\dots,f_{l,m_l}(\alpha(t))\big)$,
using that $f_{l,j}$ is injective; this is again the point of $\Psi_l$
indexed by $\alpha(t)$. Hence $G_l\leq\AutX$, and $G_l$ acts on $\Psi_l$
exactly as $\Sym(T_l)$ acts on $T_l$ under the bijection
$t\leftrightarrow(f_{l,1}(t),\dots,f_{l,m_l}(t))$, transitively.

For the free positions, $G_F:=\prod_{i\in F}\Sym(S_i)\leq\AutX$ trivially,
and $G_F$ is transitive on $\prod_{i\in F}S_i$.

The subgroup $\prod_{l=1}^r G_l\times G_F\leq\AutX$ acts on
$\Phic=\prod_l\Psi_l\times\prod_F S_i$ coordinatewise, factor by factor, on
pairwise disjoint sets of positions. A product of transitive actions on
disjoint coordinates is transitive on the product: given two points of
$\Phic$, transitivity of each factor supplies a group element matching
that factor, and the factors compose into a single element of
$\prod_l G_l\times G_F$ matching every factor at once. Hence $\AutX$,
which contains this subgroup, is transitive on $\Phic$.
\end{proof}

Computationally verified; see \Cref{app:verification}.

The second transitivity result covers instances built from several
disjoint syntactic cases, or branches, rather than a single matching
pattern.

\begin{theorem}[Branch swapping, Theorem B]
\label{thm:branch-swap}
Let $\Phic=\Phi_1\cup\cdots\cup\Phi_r$. Suppose there is a subgroup
$G_1\leq\AutX$ with $G_1(\Phi_1)=\Phi_1$ acting transitively on $\Phi_1$,
and elements $\kappa_2,\dots,\kappa_r\in\AutX$ with $\kappa_l(\Phi_1)=
\Phi_l$ for $l=2,\dots,r$. Then $\AutX$ acts transitively on $\Phic$.
\end{theorem}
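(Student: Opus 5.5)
The plan is to reduce everything to a single base point. Fix any $\phi_0\in\Phi_1$ (nonempty; if $\Phi_1=\varnothing$ then every $\Phi_l=\kappa_l(\Phi_1)$ is empty, $\Phic$ is empty, and transitivity holds vacuously). It then suffices to show that every $\psi\in\Phic$ lies in the $\AutX$-orbit of $\phi_0$. Since orbits partition $\Phic$, any two solutions would then lie in a common orbit, which is exactly transitivity of the restricted action described in \Cref{sec:algebra-setup}.

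Take $\psi\in\Phic$. By the covering hypothesis $\Phic=\Phi_1\cup\cdots\cup\Phi_r$, there is some $l$ with $\psi\in\Phi_l$. Set $\kappa_1:=\mathrm{id}$, so the case $l=1$ is uniform with the rest.

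Since $\kappa_l(\Phi_1)=\Phi_l$ with equality of sets, there is $\phi\in\Phi_1$ with $\kappa_l\cdot\phi=\psi$. This uses surjectivity onto $\Phi_l$, which is why the hypothesis is stated as an equality rather than an inclusion.

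By transitivity of $G_1$ on $\Phi_1$, there is $g\in G_1$ with $g\cdot\phi_0=\phi$. Then $\kappa_l g\in\AutX$, because $G_1\leq\AutX$, $\kappa_l\in\AutX$, and $\AutX$ is a group under coordinatewise composition. Finally, $(\kappa_l g)\cdot\phi_0=\kappa_l\cdot\phi=\psi$, using the fact that the coordinatewise action is an action: $(\sigma\tau)\cdot\phi=\sigma\cdot(\tau\cdot\phi)$, because $(\sigma_i\circ\tau_i)(\phi_i)=\sigma_i(\tau_i(\phi_i))$.

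There is no substantial obstacle here; the argument is the standard fact that one transitive piece together with translates covering the whole set gives a transitive action. The only points to state carefully are the following.

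\begin{itemize}
\item The branches need not be disjoint. Overlaps are harmless, since only membership of $\psi$ in some $\Phi_l$ is used.
\item The hypothesis $G_1(\Phi_1)=\Phi_1$ is what makes ``$G_1$ acts transitively on $\Phi_1$'' meaningful.
\item Closure of $\AutX$ under composition is immediate from \Cref{def:componentwise-autx}, as the setwise stabilizer of $\Phic$.
\end{itemize}

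I would close by noting how the hypotheses are checked in practice. In applications (CLEVR-Hans3 classes 0 and 2), $G_1$ comes from \Cref{thm:matching-decomposition} applied to the product-shaped branch $\Phi_1$, suitably extended by the identity. The $\kappa_l$ are the lockstep swaps of discriminating attribute values. The only thing genuinely requiring verification is that those $G_1$-elements and the $\kappa_l$ stabilize all of $\Phic$, not merely $\Phi_1$. That is a hypothesis of the theorem, not part of its proof.
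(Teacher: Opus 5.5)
Your proof is correct and is essentially the paper's argument. The paper takes two arbitrary solutions $\phi\in\Phi_l$ and $\phi'\in\Phi_{l'}$, pulls both back into $\Phi_1$ via $\kappa_l^{-1}$ and $\kappa_{l'}^{-1}$, connects them with some $h\in G_1$, and uses $\kappa_{l'}h\kappa_l^{-1}$; you instead route everything through a fixed base point $\phi_0\in\Phi_1$ and the element $\kappa_l g$, which is the same idea phrased as ``one orbit covers $\Phic$.''
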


\begin{proof}
Set $\kappa_1:=\mathrm{id}$. Given $\phi\in\Phi_l$ and $\phi'\in\Phi_{l'}$
(possibly $l=l'$), $\kappa_l^{-1}(\phi)$ and $\kappa_{l'}^{-1}(\phi')$
both lie in $\Phi_1$, since $\kappa_l(\Phi_1)=\Phi_l$. Transitivity of
$G_1$ on $\Phi_1$ supplies $h\in G_1$ with
$h(\kappa_l^{-1}(\phi))=\kappa_{l'}^{-1}(\phi')$. Then
$g:=\kappa_{l'}\circ h\circ\kappa_l^{-1}\in\AutX$ satisfies $g(\phi)=\phi'$.
\end{proof}

\begin{remark}[Value symmetry is not positional symmetry]
\label[remark]{rem:value-vs-position}
Theorem~\ref{thm:branch-swap} requires the connecting elements
$\kappa_l$ to lie in $\AutX\leq\Gind$: permutations of \emph{values},
fixing every position. This is a real restriction, not a technicality,
and it is exactly what separates two benchmarks that look alike at first
glance. CLEVR-Hans3's class-0 and class-2 rules each split into two
branches distinguished only by which of two attribute values a shared
position takes; swapping those two values, with every object staying in
its own position, carries one branch onto the other, and each branch is
a product set (pinned attributes constant, the rest free) whose
branch-preserving permutations act transitively on it, so both of
Theorem~\ref{thm:branch-swap}'s hypotheses hold. Kandinsky's three branches are
instead indexed by \emph{which pair of objects matches}: branch $l$
requires the objects of one designated pair to agree. Elements of
$\Gind$ carrying one branch onto another do exist, since independent
coordinate permutations can break one cross-position equality while
creating another; what fails is membership in $\AutX$.
Section~\ref{sec:algebra-kandinsky} proves that every element of
$\AutX$ preserves each of the three branches setwise, the opposite of
what a connecting $\kappa_l$ requires, so
Theorem~\ref{thm:branch-swap} cannot apply to Kandinsky's branch
structure no matter how the rule is written: exchanging which object
plays which role is a permutation of $N$, and among value relabelings
that stabilize the full solution set, none simulates it. The
boundary this draws is quantified, not just named: extending $\AutX$ by
the three object-slot permutations, positional symmetries outside every
rung of the value-symmetry hierarchy, yields an order-216 group whose
action merges Kandinsky's six orbits into two, of sizes $[108,54]$,
with $\rho=44.72\%$. Which reading is right is an architectural
question in exactly the sense of \Cref{def:typed-autx}'s discussion, an
object-slot-exchangeable perception design admits the positional
relabelings and an ordered one does not; the value-symmetry figure
$81.99\%$ and the position-extended figure $44.72\%$ bracket
Kandinsky's ambiguity between the two designs, and the $44.72\%$
residue is what no relabeling of either kind explains.
\end{remark}

Computationally verified; see \Cref{app:verification}.

\subsection{Structural criteria forcing intransitivity, I: forced fibers}
\label{sec:algebra-anchor}

The next result gives a sufficient condition for the failure of
transitivity, built directly on the Forcing Lemma. It shows that a
conjunctive constraint set, with no disjunction anywhere in it, can still
fail to be transitive: transitivity is not implied by conjunction, even
though every diagonal-equality example in
Section~\ref{sec:algebra-transitivity} is a conjunction.

\begin{theorem}[Anchor forcing, Theorem C]
\label{thm:anchor-forcing}
Let $X=(N,S,C)$ be an instance in the sense of \Cref{def:general-csp} and
let $i,j,k\in N$ satisfy $S_i=S_j=S_k=:D$. Suppose
\begin{enumerate}
\item[(a)] $\pi_{ij}(\Phic)$ equals $\Delta_D$ or $D\times D\setminus\Delta_D$;
\item[(b)] $\pi_{ik}(\Phic)$ equals $\Delta_D$ or $D\times D\setminus\Delta_D$, independently of the choice made in (a);
\item[(c)] both $\phi_j=\phi_k$ and $\phi_j\neq\phi_k$ occur among solutions $\phi\in\Phic$.
\end{enumerate}
Then $\AutX$ does not act transitively on $\Phic$.
\end{theorem}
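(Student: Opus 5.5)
Apply the Forcing Lemma (\Cref{lem:forcing}) twice to conclude that $\sigma_j=\sigma_k$ for every $\sigma\in\AutX$. Then $\AutX$ preserves the equality pattern of coordinates $j,k$, and hypothesis (c) supplies solutions on both sides of that pattern, so no automorphism can connect them.

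First, by hypothesis (a) and \Cref{lem:forcing} applied to the pair $(i,j)$, every $\sigma\in\AutX$ satisfies $\sigma_i=\sigma_j$. By hypothesis (b) and the same lemma applied to $(i,k)$, every $\sigma\in\AutX$ satisfies $\sigma_i=\sigma_k$. Hence $\sigma_j=\sigma_k=:\tau$, a single permutation of $D$. The lemma's case split (diagonal versus off-diagonal) is handled inside its proof, so the independence of the choices in (a) and (b) costs nothing here.

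Next, define the predicate $E(\phi):=[\phi_j=\phi_k]$ on $\prod_{l\in N}S_l$ and show it is $\AutX$-invariant. For $\sigma\in\AutX$ we have $(\sigma\cdot\phi)_j=\tau(\phi_j)$ and $(\sigma\cdot\phi)_k=\tau(\phi_k)$. Since $\tau$ is a bijection, $\tau(\phi_j)=\tau(\phi_k)$ holds iff $\phi_j=\phi_k$. So $E(\sigma\cdot\phi)=E(\phi)$, and each $\AutX$-orbit in $\Phic$ lies entirely inside $\{\phi\in\Phic:\phi_j=\phi_k\}$ or entirely inside $\{\phi\in\Phic:\phi_j\neq\phi_k\}$.

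Finally, hypothesis (c) says both of these sets are nonempty. Take $\phi$ with $\phi_j=\phi_k$ and $\phi'$ with $\phi'_j\neq\phi'_k$. No $\sigma\in\AutX$ sends $\phi$ to $\phi'$, so $\AutX$ has at least two orbits on $\Phic$ and is not transitive.

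The only step that needs care is the first: the Forcing Lemma needs a common domain $D$ for the coordinates involved, and the hypothesis $S_i=S_j=S_k=D$ provides exactly that, so I expect no real obstacle. It is also worth noting that (a) and (b) together already constrain $\pi_{jk}(\Phic)$ (for instance, two diagonal choices would force $\phi_j=\phi_k$ always), so (c) is what rules out the degenerate combinations. In particular, (c) can hold only when at least one of (a), (b) is the off-diagonal case, or when $|D|\ge 3$ with two off-diagonal choices.
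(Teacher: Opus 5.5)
Your proof is correct and matches the paper's own argument: two applications of the Forcing Lemma give $\sigma_j=\sigma_k$ for every $\sigma\in\AutX$, so the predicate $\phi_j=\phi_k$ is constant on orbits, and (c) then yields at least two orbits. One correction to your closing aside: two diagonal choices force $\phi_j=\phi_k$ on every solution and a mixed choice forces $\phi_j\neq\phi_k$ on every solution, so (c) can hold only when both projections are off-diagonal and $|D|\geq 3$, not merely when at least one is off-diagonal.
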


\begin{proof}
By Lemma~\ref{lem:forcing} applied to $(i,j)$ and to $(i,k)$, every
$\sigma\in\AutX$ satisfies $\sigma_i=\sigma_j$ and $\sigma_i=\sigma_k$,
hence $\sigma_j=\sigma_k=:\gamma$, a single permutation of $D$ shared by
every automorphism. If $\phi'=\sigma\cdot\phi$ for some $\sigma\in\AutX$,
then $\phi'_j=\gamma(\phi_j)$ and $\phi'_k=\gamma(\phi_k)$, so
$\phi'_j=\phi'_k$ if and only if $\phi_j=\phi_k$, since $\gamma$ is a
bijection. The predicate ``$\phi_j=\phi_k$'' is therefore constant on every
$\AutX$-orbit. By (c) it takes both values on $\Phic$, so $\Phic$ is not a
single orbit.
\end{proof}

The smallest instance exhibiting this mechanism has $D=\{0,1,2\}$ and
\[
\Phic=\{(a,b,c)\in D^3 : a\neq b,\ a\neq c\},
\]
a conjunction of two inequalities with no disjunction. Here
$|\Phic|=12$, $\pi_{ij}(\Phic)=\pi_{ik}(\Phic)=D\times D\setminus\Delta_D$
and $|\AutX|=6$ (the diagonal copy of $\Sym(D)$), splitting into two
orbits of size 6, exactly the classes $b=c$ and $b\neq c$ predicted by the
proof.
This directly falsifies the conjecture that a conjunctive constraint set
implies transitivity: two inequality constraints, sharing no disjunction
with any other structure, already break transitivity, by leaving open
whether the two non-anchor legs agree with each other.

The counterexample predates the theorem: it surfaced in a randomized
search and was then proved, and the searches around it carry
falsification value of their own. A broader random search over
conjunctive and disjunctive constraint sets confirms neither connective
determines the outcome on its own; a second, independent search targeting
false positives of the theorem's hypothesis found none; and the
anchor-and-two-legs pattern generalizes to more attached legs without
degenerating, confirming the mechanism is not an artifact of the smallest
case. Computationally verified; see \Cref{app:verification}.

\subsection{Structural criteria forcing intransitivity, II: two invariants}
\label{sec:algebra-invariants}

The anchor-forcing pattern needs two positions sharing a domain with the
anchor. The next two results give intransitivity criteria that need no
such coincidence, built from invariants any $\sigma \in\AutX$ must
preserve.

\begin{lemma}
\label[lemma]{lem:degree-invariant}
For $\sigma\in\AutX$, position $i\in N$, and value $v\in S_i$, write
$\deg_i(v):=|\{\phi\in\Phic:\phi_i=v\}|$. Then
$\deg_i(v)=\deg_i(\sigma_i(v))$.
\end{lemma}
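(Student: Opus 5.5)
The plan is to exhibit an explicit bijection between the two fibers $\{\phi\in\Phic:\phi_i=v\}$ and $\{\phi\in\Phic:\phi_i=\sigma_i(v)\}$ induced by $\sigma$ itself. The natural candidate is the restriction of the map $\phi\mapsto\sigma\cdot\phi$. The only properties needed are already recorded in \Cref{def:componentwise-autx}: the action is coordinatewise, each $\sigma_i$ is a bijection of $S_i$, and $\sigma$ maps $\Phic$ onto itself.

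First, I would note that $\phi\mapsto\sigma\cdot\phi$ is a bijection of $\prod_{k\in N}S_k$, with inverse $\phi\mapsto\sigma^{-1}\cdot\phi$. Since $\sigma\in\AutX$, it maps $\Phic$ into $\Phic$. Because $\Phic$ is finite and the map is injective, this is a bijection of $\Phic$ onto itself. (Alternatively, $\AutX$ is a group, so $\sigma^{-1}\in\AutX$ also maps $\Phic$ into itself.)

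Second, I would check that this bijection carries the fiber over $v$ into the fiber over $\sigma_i(v)$. If $\phi_i=v$, then $(\sigma\cdot\phi)_i=\sigma_i(v)$. Conversely, let $\psi\in\Phic$ with $\psi_i=\sigma_i(v)$. Then $\sigma^{-1}\cdot\psi\in\Phic$ and $(\sigma^{-1}\cdot\psi)_i=\sigma_i^{-1}(\sigma_i(v))=v$. So the restriction is onto the target fiber, and injectivity is inherited. Equal cardinalities, $\deg_i(v)=\deg_i(\sigma_i(v))$, follow at once.

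There is no real obstacle here. The one point to state explicitly is that $\sigma$ maps $\Phic$ \emph{onto} itself, not merely into it, since the definition only demands the inclusion $\sigma\cdot\Phic\subseteq\Phic$. I would settle this with the finiteness or group argument above, before using surjectivity in the second step.
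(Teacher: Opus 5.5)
Your proof is correct and follows the paper's argument: restrict $\phi\mapsto\sigma\cdot\phi$ to a bijection of $\Phic$ and use that $\phi_i=v$ iff $(\sigma\cdot\phi)_i=\sigma_i(v)$, so $\sigma$ carries the fiber over $v$ bijectively onto the fiber over $\sigma_i(v)$. You also justify explicitly that $\sigma$ maps $\Phic$ onto itself, via finiteness. The paper leaves this implicit when it asserts $\sigma^{-1}\in\AutX$.
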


\begin{proof}
$\sigma$ restricts to a bijection $\Phic\to\Phic$, since $\sigma$ and
$\sigma^{-1}$ both lie in $\AutX$. It carries $\{\phi\in\Phic:\phi_i=v\}$
bijectively onto $\{\phi\in\Phic:\phi_i=\sigma_i(v)\}$, because
$\phi_i=v$ holds if and only if $(\sigma\cdot\phi)_i=\sigma_i(v)$ holds.
Bijective images have equal cardinality.
\end{proof}

\begin{theorem}[Degree invariant, Theorem D]
\label{thm:degree-invariant}
If some position $i\in N$ has two values $v,v'\in S_i$ that both occur in
$\Phic$ with $\deg_i(v)\neq\deg_i(v')$, then $\AutX$ does not act
transitively on $\Phic$.
\end{theorem}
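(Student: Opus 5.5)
The plan is to argue by contradiction, with Lemma~\ref{lem:degree-invariant} doing essentially all the work. Suppose $\AutX$ acts transitively on $\Phic$, and fix the position $i$ and the two values $v,v'\in S_i$ from the hypothesis. Since both values occur in $\Phic$, choose solutions $\phi,\phi'\in\Phic$ with $\phi_i=v$ and $\phi'_i=v'$. Transitivity then supplies some $\sigma\in\AutX$ with $\sigma\cdot\phi=\phi'$.

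Reading off the $i$-th coordinate of $\sigma\cdot\phi=\phi'$ gives $\sigma_i(v)=v'$, because the action is coordinatewise. Applying Lemma~\ref{lem:degree-invariant} to this $\sigma$, $i$ and $v$ gives $\deg_i(v)=\deg_i(\sigma_i(v))=\deg_i(v')$. This contradicts the hypothesis $\deg_i(v)\neq\deg_i(v')$, so $\AutX$ is not transitive on $\Phic$.

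An equivalent way to phrase the argument is orbit by orbit: the function $\phi\mapsto\deg_i(\phi_i)$ is constant on every $\AutX$-orbit by Lemma~\ref{lem:degree-invariant}. The two chosen solutions therefore lie in different orbits, and this is the form of the argument to keep for the MNAdd-product $y=0$ application, where $\deg_1(0)=10$ and $\deg_1(k)=1$ for $k\neq0$.

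There is no real obstacle. The only point needing care is the occurrence hypothesis. It is what guarantees that $\phi$ and $\phi'$ exist, and without it a value of degree zero could differ in degree from an occurring value without obstructing transitivity. The rest of the proof is the one-line coordinate reading combined with the already-proved lemma.
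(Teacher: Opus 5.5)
Your proof is correct and follows the paper's argument exactly: assume transitivity, pick solutions taking the values $v$ and $v'$ at position $i$, read $\sigma_i(v)=v'$ off the $i$-th coordinate, and contradict Lemma~\ref{lem:degree-invariant}. Your orbit-invariant rephrasing, the MNAdd-product degrees ($\deg_1(0)=10$, $\deg_1(k)=1$ for $k\neq0$), and your remark on why the occurrence hypothesis is needed are all accurate.
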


\begin{proof}
Suppose $\AutX$ is transitive. Choose $\phi,\phi'\in\Phic$ with $\phi_i=v$
and $\phi'_i=v'$; transitivity supplies $\sigma\in\AutX$ with
$\sigma\cdot\phi=\phi'$, so $\sigma_i(v)=\sigma_i(\phi_i)=(\sigma\cdot\phi)_i
=\phi'_i=v'$. Lemma~\ref{lem:degree-invariant} then gives
$\deg_i(v)=\deg_i(\sigma_i(v))=\deg_i(v')$, contradicting the hypothesis.
\end{proof}

Theorem~\ref{thm:degree-invariant} directly matches a real reasoning
shortcut mechanism: a value with a special algebraic role, such as a
multiplicative zero, is taken by a different number of solutions than an
ordinary value. On an abstract zero-factor family, exact enumeration
confirms the predicted degree asymmetry and non-transitivity at every
tested size. Computationally verified; see \Cref{app:verification}.
A parallel measurement identifies the same mechanism in
rsbench's real multiplication task: 31 of 32 achievable products $y$
give a transitive $\AutX$, but $y=0$ (the 19 factor pairs
$c_1\times c_2=0$) does not, because degree-0 factor pairs are strictly
more numerous than degree-1 pairs and no coordinate permutation can
equalize them. An arithmetic control with no absorbing element is
constant-degree, so the theorem's hypothesis vanishes with the
mechanism; the control measures transitive throughout, established by
the separate computation, since the theorem states no converse. A
random battery of instances built from equality and inequality atoms
produces neither false nor true positives for the theorem, confirming
the degree invariant targets absorbing-element structure specifically
rather than substituting for Theorem~\ref{thm:anchor-forcing}'s
equality-atom mechanism. Computationally verified; see \Cref{app:verification}.

\begin{theorem}[Orbit-stabilizer counting, Theorem E]
\label{thm:orbit-stabilizer}
If $\AutX$ acts transitively on $\Phic$, then $|\Phic|$ divides
$|\AutX|$; in particular $|\Phic|\leq|\AutX|\leq|G|=\prod_i|S_i|!$.
\end{theorem}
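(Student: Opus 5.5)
The plan is to apply the orbit--stabilizer theorem to the restricted action of $\AutX$ on $\Phic$ set up in \Cref{sec:algebra-setup}, and then bound $|\AutX|$ through the ambient group $G=\Gind$. The result needs nothing from the Forcing Lemma: it is a general fact about finite group actions, used here in contrapositive form as a numerical obstruction.

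First, I will record that $\AutX$ is a finite group acting on the finite set $\Phic$. It is a subgroup of the finite group $G$, and by \Cref{def:componentwise-autx} each of its elements maps $\Phic$ into itself. Applying this to $\sigma^{-1}$ as well shows each element maps $\Phic$ onto itself, which is the same observation used in the proof of Lemma~\ref{lem:degree-invariant}. Second, I will fix any $\phi\in\Phic$ and note that, under the transitivity hypothesis, $\mathrm{Orb}(\phi)=\Phic$. The orbit--stabilizer theorem then gives
\[
|\Phic| = |\mathrm{Orb}(\phi)| = [\AutX:\mathrm{Stab}_{\AutX}(\phi)] = \frac{|\AutX|}{|\mathrm{Stab}_{\AutX}(\phi)|}.
\]
Hence $|\Phic|$ divides $|\AutX|$. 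For completeness I would include the one-line bijection between left cosets $\sigma\,\mathrm{Stab}(\phi)$ and orbit points $\sigma\cdot\phi$. Third, divisibility between positive integers gives $|\Phic|\le|\AutX|$; here I use that $\Phic$ is nonempty, because transitivity on the empty set is vacuous and the divisibility claim is then read trivially. Lagrange's theorem for $\AutX\le G$, or simply the subset inclusion, gives $|\AutX|\le|G|=\prod_i|S_i|!$, because $|\Sym(S_i)|=|S_i|!$ and $G$ is a direct product.

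None of these steps is a real obstacle. The only point needing care is the degenerate case $\Phic=\varnothing$. I will either exclude it, in keeping with the convention $|\Phic|\ge2$ under which $\rho$ is defined, or note that the statement is meant for nonempty solution sets. I will also remark on how the theorem is used downstream: it is exactly mechanism 4 of \Cref{tab:four-mechanisms}. Combined with a sharper bound on $|\AutX|$, such as the diagonal cap supplied for all-different grids by Lemma~\ref{lem:diagonal-collapse}, the contrapositive certifies intransitivity whenever $|\Phic|>|\AutX|$.
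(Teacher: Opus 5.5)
Your proposal is correct and follows the paper's proof exactly: fix a solution, apply orbit--stabilizer so that its orbit, which transitivity makes all of $\Phic$, has size $|\AutX|/|\mathrm{Stab}(\phi)|$, then bound $|\AutX|\le|G|$ by Lagrange. On the degenerate case, the right move is to exclude $\Phic=\varnothing$ (transitive meaning exactly one orbit) rather than to read the claim trivially there, since $0$ does not divide the positive integer $|\AutX|$; the paper's proof makes the same nonemptiness assumption implicitly when it fixes $\phi_0\in\Phic$.
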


\begin{proof}
Fix $\phi_0\in\Phic$. By the orbit-stabilizer theorem, the orbit of
$\phi_0$ has size $|\AutX|/|\mathrm{Stab}(\phi_0)|$, a divisor of
$|\AutX|$. Transitivity
means this orbit is all of $\Phic$. The bound $|\AutX|\leq|G|$ is
Lagrange's theorem applied to $\AutX\leq G$.
\end{proof}

The proof is four lines, a direct instance of a standard fact, and its
value is entirely in the observation that it applies. Applying it to
grid puzzles takes one preparatory step. Under
Definition~\ref{def:componentwise-autx} the ambient group of a $3\times3$
grid is $\Sym(3)^9$, of order $6^9$, so the raw Lagrange bound of
Theorem~\ref{thm:orbit-stabilizer} obstructs nothing; the useful bound
is that the constraints collapse $\AutX$ to the \emph{diagonal}, and
that is a forcing argument in the pattern of
Lemma~\ref{lem:forcing}, not an ambient fact.

\begin{lemma}[Pair-rich all-different lines force the diagonal]
\label[lemma]{lem:diagonal-collapse}
Let every position of $X$ share one value domain $D$, and let some
family of all-different constraints (``lines'') connect all positions.
Suppose $\Phic$ is \emph{pair-rich}: for every line, every ordered pair
$i\neq j$ of positions on it, and every ordered pair $a\neq b$ of
values, some solution assigns $(a,b)$ to $(i,j)$. Then every
$\sigma\in\AutX$ has all components equal, so
$\AutX\leq\{(\pi,\dots,\pi):\pi\in\Sym(D)\}$ and $|\AutX|\leq|D|!$.
\end{lemma}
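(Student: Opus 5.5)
The plan is to reduce the lemma to the second case of the Forcing Lemma (\Cref{lem:forcing}) and then propagate equality of components along the lines. Fix a line $L$ and two distinct positions $i,j\in L$. Because $L$ is an all-different constraint and every solution satisfies it, $\pi_{ij}(\Phic)\subseteq D\times D\setminus\Delta_D$. Pair-richness gives the reverse inclusion directly: every ordered pair $(a,b)$ with $a\neq b$ is realized at $(i,j)$ by some solution. So $\pi_{ij}(\Phic)=D\times D\setminus\Delta_D$ exactly, and \Cref{lem:forcing} yields $\sigma_i=\sigma_j$ for every $\sigma\in\AutX$. One degenerate case needs a separate look. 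If $|D|=1$, the complement of the diagonal is empty, so no line with two positions can have a solution; a line with two or more positions then makes $\Phic$ empty, while pair-richness holds vacuously. I would either note that then every $\sigma_i$ is the identity of a one-element set, so the conclusion is trivial, or observe that $\Sym(D)$ is trivial anyway. Either way the conclusion holds.

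Next comes the connectivity step. Define a graph on $N$ in which $i\sim j$ whenever $i$ and $j$ lie on a common line. ``Lines connect all positions'' I read as: this graph is connected, which is the natural reading for grids, since rows and columns link every pair of cells. The previous step shows that $\sigma_i=\sigma_j$ along every edge. Transitivity of equality along paths then gives a single permutation $\pi\in\Sym(D)$ with $\sigma_i=\pi$ for all $i\in N$. Hence $\AutX$ lies in the diagonal subgroup $\{(\pi,\dots,\pi)\}$, which is isomorphic to $\Sym(D)$, and so $|\AutX|\le|D|!$.

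I expect no serious obstacle; the proof is essentially two lines once \Cref{lem:forcing} is in hand. The points most likely to need care are interpretive rather than mathematical. The first is making ``connect all positions'' precise as connectivity of the line-incidence graph; for a position lying on no line, the hypothesis excludes it. The second is being explicit that the all-different constraint supplies the inclusion $\subseteq$ while pair-richness supplies $\supseteq$, so that the projection hits the complement of the diagonal exactly, as the Forcing Lemma requires.
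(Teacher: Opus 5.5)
Your proposal is correct and is essentially the paper's proof. The paper inlines the second case of \Cref{lem:forcing} rather than citing it: pair-richness supplies every off-diagonal pair at $(i,j)$, the all-different line forces $\sigma_i(a)\neq\sigma_j(b)$, and bijectivity of $\sigma_i$ then forces $\sigma_j(b)=\sigma_i(b)$. It then propagates equality along line-connectivity exactly as you do, and your treatment of the degenerate case $|D|=1$ is harmless extra care.
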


\begin{proof}
Fix a line, positions $i\neq j$ on it, and values $a\neq b$. Pair-richness
gives $\phi\in\Phic$ with $\phi(i)=a$, $\phi(j)=b$. Since
$\sigma\cdot\phi\in\Phic$ and $i,j$ share an all-different line,
$\sigma_i(a)\neq\sigma_j(b)$, for every such pair $a \neq b$. Fixing $b$ and letting $a$ range over
$D\setminus\{b\}$: $\sigma_j(b)$ avoids
$\sigma_i(D\setminus\{b\})=D\setminus\{\sigma_i(b)\}$, forcing
$\sigma_j(b)=\sigma_i(b)$. So $\sigma_i=\sigma_j$ whenever $i,j$ share a
line, and line-connectivity propagates equality to every position.
\end{proof}

Pair-richness is a property of the solution set, checked mechanically
(it is a finite scan), and it holds for all three grid instances
measured in \Cref{sec:empirical}: 3rd-order Latin squares
($|\Phic|=12$), $4\times4$ Sudoku with $2\times2$ boxes ($|\Phic|=288$),
and 4th-order Latin squares without boxes ($|\Phic|=576$), over value
domains of size 3 or 4. Lemma~\ref{lem:diagonal-collapse} then caps
$|\AutX|$ at $3!=6$ or $4!=24$, and the exact groups, computed
independently by backtracking, hit the cap's structure exactly: every
element diagonal, orders $6$, $24$, and $24$. Since a transitive action
would make $\Phic$ a single orbit of size at most $|\AutX|$, and
$12>6$, $288>24$, $576>24$, Theorem~\ref{thm:orbit-stabilizer} rules
out transitivity for the whole family: a counting argument, once the
collapse lemma supplies the bound the count runs against.
Computationally verified; see \Cref{app:verification}.
This is not automatic for every all-different instance: a single row of
three pairwise-distinct cells, $D=\{0,1,2\}$, has $|\Phic|=6=|\AutX|$ and
is transitive, since the count only obstructs transitivity once
$|\Phic|$ exceeds $|\AutX|$, which a single row does not.
Row and column constraints compounded together grow $|\Phic|$ past the
diagonal bound well before $N$ grows large, and
Theorem~\ref{thm:orbit-stabilizer} then applies for a combinatorial
reason that has nothing to do with equalities, inequalities, or
absorbing elements.

\subsection{The Kandinsky residual and the Free Slot Lemma}
\label{sec:algebra-kandinsky}

Kandinsky's rule, three branches indexed by which pair of three objects
matches on shape and color, is the single most consequential positive
result in the empirical measurements of \Cref{sec:empirical}: $81.99\%$ of solution
pairs are not explained by any automorphism, split across 6 orbits of
sizes $[36,36,36,18,18,18]$ out of $|\Phic|=162$ solutions and
$|\AutX|=36$. It sits outside the reach of
Theorems~\ref{thm:matching-decomposition}
through~\ref{thm:orbit-stabilizer}: its branches are exchanged only by
positional swaps, which $\AutX$ does not contain
(Remark~\ref{rem:value-vs-position}); its degrees are uniform
($\deg_{\mathrm{shape}_1}(v)=54$ for every shape value $v$, likewise
for color), so Theorem~D is silent; no two-coordinate projection is
pure in Lemma~\ref{lem:forcing}'s sense; and the counting bound
confirms intransitivity only after $|\AutX|=36$ is already computed.
The sixth result certifies Kandinsky's branch structure directly from
the rule, without first computing $\AutX$.

\begin{theorem}[Free Slot Lemma, Theorem F]
\label{thm:free-slot}
Let $\Phic=\Phi_1\sqcup\cdots\sqcup\Phi_r$ and fix a position $f\in N$ and
an index $l_0$. Suppose:
\begin{description}
\item[(H-free)] $\Phi_{l_0}$ is saturated at $f$: there is a set
$\Psi\subseteq\prod_{i\neq f}S_i$ with $\Phi_{l_0}=\{z\cup\{f{:}v\} :
z\in\Psi,\ v\in S_f\}$, so membership in $\Phi_{l_0}$ places no
restriction at all on the value at $f$;
\item[(H-pin)] for every $l\neq l_0$ and every assignment $z$ to the
positions other than $f$, at most one value $v\in S_f$ has
$z\cup\{f{:}v\}\in\Phi_l$;
\item[(H-count)] $|S_f|>r-1$.
\end{description}
Then every $\sigma\in\AutX$ satisfies $\sigma(\Phi_{l_0})=\Phi_{l_0}$.
\end{theorem}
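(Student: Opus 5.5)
The plan is to give $\Phi_{l_0}$ an intrinsic description in terms of $\Phic$ alone and the coordinatewise action, a description that makes no reference to the branch decomposition. Once such a description exists, every element of $\AutX$ must preserve it. For $\phi\in\prod_{i\in N}S_i$, let the \emph{$f$-line} through $\phi$ be
\[
L_f(\phi):=\{\phi'\in\textstyle\prod_{i\in N}S_i : \phi'_i=\phi_i \text{ for all } i\neq f\},
\]
the $|S_f|$ assignments that agree with $\phi$ off $f$. The candidate characterization is
\[
\Phi_{l_0} \;=\; \{\phi\in\Phic : L_f(\phi)\subseteq\Phic\}.
\]

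I would prove the two inclusions separately. For ``$\subseteq$'': if $\phi\in\Phi_{l_0}$, write $\phi=z\cup\{f{:}v\}$ with $z\in\Psi$. By (H-free), every $z\cup\{f{:}v'\}$ also lies in $\Phi_{l_0}\subseteq\Phic$, so $L_f(\phi)\subseteq\Phic$.

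For ``$\supseteq$'', the counting step, which I expect to be the only real content: suppose $\phi=z\cup\{f{:}v\}\in\Phic\setminus\Phi_{l_0}$.
\begin{enumerate}
\item First I will show $z\notin\Psi$, since otherwise (H-free) would put $\phi$ in $\Phi_{l_0}$.
\item Hence no point of $L_f(\phi)$ lies in $\Phi_{l_0}$, again by (H-free).
\item By (H-pin), each of the other $r-1$ branches contains at most one point of $L_f(\phi)$.
\item Therefore $|L_f(\phi)\cap\Phic|\le r-1<|S_f|=|L_f(\phi)|$ by (H-count), so the line is not contained in $\Phic$.
\end{enumerate}
The disjointness of the union matters here only through the fact that $\Phic$ is exactly the union of the $\Phi_l$, so every point of $\Phic$ on the line is charged to some branch.

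Next comes equivariance: for $\sigma\in G$, I claim $\sigma\cdot L_f(\phi)=L_f(\sigma\cdot\phi)$. Off $f$, $\sigma$ sends the common values $\phi_i$ to the common values $\sigma_i(\phi_i)$. At $f$, $\sigma_f$ is a bijection of $S_f$, so the image sweeps out all values. Now take $\sigma\in\AutX$ and $\phi\in\Phi_{l_0}$. Then $\sigma\cdot\phi\in\Phic$ and $L_f(\sigma\cdot\phi)=\sigma\cdot L_f(\phi)\subseteq\sigma\cdot\Phic=\Phic$, so $\sigma\cdot\phi\in\Phi_{l_0}$ by the characterization. Thus $\sigma(\Phi_{l_0})\subseteq\Phi_{l_0}$. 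Since $\sigma$ acts injectively and $\Phi_{l_0}$ is finite, the inclusion is an equality; alternatively, apply the same argument to $\sigma^{-1}\in\AutX$.

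The main obstacle is the ``$\supseteq$'' direction, specifically making sure that a line through a non-$l_0$ solution cannot contain points of $\Phi_{l_0}$. This is where the product form in (H-free) is used essentially (all-or-nothing along $f$-lines), and it is where (H-count)'s strict inequality closes the argument.
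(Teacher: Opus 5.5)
Your proof is correct and is essentially the paper's argument. The paper also shows that the whole $f$-line through $\sigma(x)$ lies in $\Phic$, using (H-free) and the coordinatewise action, then uses (H-pin) and (H-count) to force that line's off-$f$ part into $\Psi$, and finishes with $\sigma^{-1}$ to get equality; you have simply packaged the counting step as the intrinsic characterization $\Phi_{l_0}=\{\phi\in\Phic : L_f(\phi)\subseteq\Phic\}$, which makes it transparent that any bijection preserving $\Phic$ and carrying $f$-lines to $f$-lines must preserve $\Phi_{l_0}$.
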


\begin{proof}
Write $x=z_0\cup\{f{:}x_f\}\in\Phi_{l_0}$ with $z_0\in\Psi$, and let
$z_0'$ be the image of $z_0$'s non-$f$ coordinates under $\sigma$; it does
not depend on $x_f$, since $\sigma$ acts coordinatewise. By (H-free),
$z_0\cup\{f{:}v\}\in\Phi_{l_0}\subseteq\Phic$ for \emph{every} $v\in S_f$,
not only $v=x_f$, so $\sigma(z_0\cup\{f{:}v\})=z_0'\cup\{f{:}\sigma_f(v)\}
\in\Phic$ for every such $v$. As $v$ ranges over $S_f$, so does
$\sigma_f(v)$, since $\sigma_f$ is a bijection; hence
$z_0'\cup\{f{:}w\}\in\Phic$ for every $w\in S_f$.

Suppose toward a contradiction that $z_0'\notin\Psi$. Then by (H-free),
none of these $|S_f|$ points lies in $\Phi_{l_0}$, so every one lies in
some $\Phi_l$ with $l\neq l_0$. By (H-pin), each such $l$ accounts for at
most one value of $w$, and there are at most $r-1$ choices of $l$, so at
most $r-1$ of the $|S_f|$ points are covered. This contradicts
(H-count). Hence $z_0'\in\Psi$, and (H-free) gives
$z_0'\cup\{f{:}w\}\in\Phi_{l_0}$ for every $w\in S_f$; taking
$w=\sigma_f(x_f)$ gives $\sigma(x)\in\Phi_{l_0}$.

This holds for every $\sigma\in\AutX$, in particular for $\sigma^{-1}$,
giving $\sigma^{-1}(\Phi_{l_0})\subseteq\Phi_{l_0}$ and hence
$\Phi_{l_0}\subseteq\sigma(\Phi_{l_0})$; together with
$\sigma(\Phi_{l_0})\subseteq\Phi_{l_0}$ this gives equality.
\end{proof}

The proof never refers to $\AutX$'s order or to any other branch's
internal structure; it only uses the three named hypotheses, each checked
directly against the rule. Applied to Kandinsky with $f=$ the color
position of the object not mentioned by branch $l_0$'s matching pair
(color$_3$ for branch 1, color$_2$ for branch 2, color$_1$ for branch 3):
(H-free) holds because that color is unconstrained given the rest of the
solution (54 solutions per branch, 3 color values each, $54\times3=162$
substitutions checked, all remaining in $\Phic$ and in the same branch);
(H-pin) holds because the other two branches pin that color to at most
one value given the rest ($243\times2=486$ checks, maximum attainable
count exactly 1); and (H-count) holds since $3>3-1$. Theorem~\ref{thm:free-slot}
then gives that each of the three branches is setwise invariant under
every $\sigma\in\AutX$, and three nonempty invariant branches already
force at least 3 orbits: Kandinsky is not transitive, established from
the rule's branch structure, without enumerating $\AutX$.

\begin{remark}[The bound is not tight, and the hypothesis is not necessary]
Theorem~\ref{thm:free-slot} gives at least 3 orbits; Kandinsky's true
orbit count is 6, since each branch splits further once the shape
coordinates are taken into account, a fact the theorem does not reach.
Nor is (H-count) necessary: on a parametrized family with $r$ branches
and a free slot of size $n_C$, exhaustive enumeration shows the branches
stay intact not only when $n_C>r-1$, as the theorem predicts, but
throughout $n_C\geq2$; only at $n_C=1$ do the branches actually merge
into a single transitive orbit. Computationally verified; see \Cref{app:verification}.
Reproducing this threshold on Kandinsky's own rule syntax, varying only
the color domain size, gives the identical pattern: branches merge at
$n_{\mathrm{color}}=1$ and stay separate at $n_{\mathrm{color}}=2$ and
$3$. A second, independent protection mechanism appears at
$n_{\mathrm{shape}}=4$: with four shape values the branches stay
separate even at $n_{\mathrm{color}}=1$, because the stabilizer of the
shape-triple pattern in $\Sym(4)^3$ reduces to the diagonal subgroup,
which does not mix branches. At $n_{\mathrm{shape}}=3$ (Kandinsky's
actual value) that stabilizer is strictly larger than the diagonal, so
the shape side offers no protection, and only the color-side Free Slot
Lemma keeps the branches apart. The two protections fail independently,
and the branches merge only when both are absent, that is, only when
$n_{\mathrm{color}}=1$ and $n_{\mathrm{shape}}\leq3$ hold together.
Kandinsky's non-transitivity therefore rests on a single active
mechanism, not a redundant pair: at its actual parameters the shape-side
protection is already absent, and shrinking the color domain to 1
collapses the branches.
Extending the pattern to four objects (six candidate pairs) meets two
obstructions. The direct
syntactic generalization loses branch-disjointness: two disjoint pairs
among four objects can match simultaneously, so $\Phi_1,\dots,\Phi_6$
overlap and Theorem~\ref{thm:free-slot}'s hypothesis is not even
well-posed. Rewriting the rule as a global ``exactly one pair matches''
condition restores disjointness but breaks (H-free): the supposedly free
slot at another object becomes free only conditionally, since some of its
values would create a second matching pair and are therefore excluded.
Theorem~\ref{thm:free-slot} does not cover this conditional-freedom
variant as stated, and the four-object case at the parameter values where
the hypothesis would apply ($n_{\mathrm{link}}\geq 6$) sits beyond what
exhaustive enumeration reaches in this work. Both gaps are structural, not
computational shortfalls hidden behind a claim of generality.
\end{remark}

\subsection{Synthesis: coverage of the four measured mechanisms}
\label{sec:algebra-synthesis}

\Cref{tab:mechanism-coverage} lines up the six results above against the
four constraint-structure mechanisms \Cref{sec:empirical} isolates empirically.

\begin{table}[htbp]
\centering
\small
\caption{The six algebraic results against the four mechanisms identified
empirically in \Cref{sec:empirical}. ``Full'' means every measured instance of the
mechanism is accounted for by a proof; ``partial'' means the theorem
covers the minimal instances tested but a fully general statement is
open; ``none'' means no result in this section applies and the outcome
is established only by exhaustive computation.}
\label{tab:mechanism-coverage}
\begin{tabular}{@{}p{1.65in}p{1.35in}p{1.25in}p{0.75in}@{}}
\toprule
Mechanism (\Cref{sec:empirical}) & Representative instances & Result & Coverage \\
\midrule
Mild conjunction $\to$ transitive & CLE4EVR, MNIST-Half, MNAdd sum & Thm.~\ref{thm:matching-decomposition} (A) & full \\[2pt]
Absorbing/special element $\to$ intransitive & MNAdd product at $y=0$ & Thm.~\ref{thm:degree-invariant} (D) & full \\[2pt]
Symmetric disjunction $\to$ transitive & CLEVR-Hans3 classes 0, 2 & Thm.~\ref{thm:branch-swap} (B) & full \\[2pt]
Asymmetric disjunction $\to$ intransitive & CLEVR-Hans3 class 1, MNLogic & Thm.~\ref{thm:degree-invariant} (D) & partial \\[2pt]
All-different counting $\to$ intransitive & Latin squares, Sudoku & Thm.~\ref{thm:orbit-stabilizer} (E) & full \\[2pt]
Residual branch structure / dead coordinate & Kandinsky / BDD-OIA & Thm.~\ref{thm:free-slot} (F) / \Cref{sec:complexity} & partial / see \Cref{sec:complexity} \\
\bottomrule
\end{tabular}
\end{table}

Two entries need qualification beyond the table. Asymmetric disjunction is
only \emph{partially} covered: the minimal two-branch example behind
Theorem~\ref{thm:branch-swap}'s counterpart has a degree asymmetry that
Theorem~\ref{thm:degree-invariant} also detects, but nothing in this
section proves that every asymmetric disjunction produces a degree
asymmetry, and Kandinsky's own branches are a case where it does not
(uniform degree throughout, Section~\ref{sec:algebra-kandinsky}), so the
two phenomena the empirical mechanisms group together, asymmetric pinning
and degree imbalance, are related but not shown to coincide in general.
Kandinsky itself is covered by Theorem~\ref{thm:free-slot} for the reason
Section~\ref{sec:algebra-kandinsky} gives, but BDD-OIA and SDD-OIA are
not: their pathology is a single dead coordinate absorbed by a logical
implication rather than a multi-branch disjunction, a mechanism \Cref{sec:complexity}
addresses with different tools (the $\Ftwo$-linear structure of
\Cref{obs:boolean-linear} and the AND-gain lemma of
Section~\ref{sec:complexity}), not with Theorem~\ref{thm:free-slot}. That
two independent toolkits, orbit combinatorics here and linear algebra over
$\Ftwo$ in \Cref{sec:complexity}, both explain the same dead-variable phenomenon by
different means is a cross-check worth noting rather than a duplication:
neither result was built to reproduce the other.

None of the six results is a disguised instance of Schaefer's dichotomy
for CSP satisfiability~\citeyearonly{schaefer1978complexity}. Schaefer's theorem
classifies the complexity of deciding whether \emph{some} solution exists,
as a function of which relations the constraint language may use; this
section instead takes a single, already nonempty $\Phic$ as given and
asks a group-theoretic question about its orbit structure under $G$. The
connection is real but indirect: the affine class in Schaefer's
dichotomy is generated by a Maltsev polymorphism, and the
Bulatov--Dalmau family of algorithms for such languages is correct
because of a rectangularity property proved through absorption
arguments~\cite{bartokozik2015absorption}, the same ``coordinates copy a
single parameter'' intuition behind
Theorem~\ref{thm:matching-decomposition}'s matching decomposition. No
proof in this section rewrites a Schaefer-style algorithm, and none of
the five other results has any counterpart in that literature at all,
since none of it addresses the automorphism group of a fixed solution
set. We flag the connection because a reader who knows the CSP dichotomy
literature will look for it, not because it does any of the work here.

Finally, every result in this section is a sufficient condition, by
design. Theorem~\ref{thm:orbit-stabilizer} gives a
necessary numerical condition (transitivity implies $|\Phic|$ divides
$|\AutX|$) that happens to be violated, hence sufficient for
intransitivity, on every all-different instance measured; nothing here
gives a necessary and sufficient characterization of transitivity for
general finite-domain constraint sets, and \Cref{sec:algebra-setup}'s discussion of
autotopism groups suggests why: the analogous question for Latin squares
has resisted a clean characterization for decades and is answered, on
each instance individually, by direct computation. What this section adds
is a set of proved, checked reasons that cover four of the four empirical
mechanisms at least partially and one real benchmark, Kandinsky, in full,
leaving BDD-OIA's dead-variable mechanism to the linear-algebraic
treatment in \Cref{sec:complexity} and the general classification question open.

%
%
\section{Computational Complexity}
\label{sec:complexity}

Section~\ref{sec:algebra} gave sufficient conditions on the shape of a
constraint set for $\AutX$ to be transitive or not. This section asks
how hard the two decisions underneath any such audit, whether a
designated coordinate is symmetry-inert and whether any nontrivial
automorphism exists, become when the constraint set arrives as a
compact circuit rather than written out atom by atom. The first is
coNP-complete. In the Boolean case the automorphism
group carries an exact structure theory, an $\Ftwo$-linear subspace
equal to the orthogonal complement of the Fourier support, whose free
action classifies transitivity outright: automorphisms explain
everything exactly when the solution set is an affine coset. Deciding
whether any nontrivial automorphism exists is coNP-hard under
randomized reductions, lies in $\SigmaTwoP$, and in the Boolean case is
not $\SigmaTwoP$-complete unless the polynomial hierarchy collapses; on
monotone circuits the question closes completely, coNP-complete under
deterministic reductions, because monotone functions provably cannot
carry the camouflage symmetries that force the general reduction to
randomize. \Cref{sec:complexity-diagnosis} distills eight independent
attempts on the one remaining gap into the three structural facts any
future attack must clear.

\subsection{Succinct instances and two decision problems}
\label{sec:complexity-defs}

Throughout, $X=(N,S,C)$ is an instance in the sense of
\Cref{def:general-csp}, with $\AutX\leq\Gind$ its componentwise
automorphism group (\Cref{def:componentwise-autx}). We now add a
representational assumption that Section~\ref{sec:algebra} did not need:
$C$ is a Boolean circuit of size $\mathrm{poly}(n+\sum_i|S_i|)$ deciding
membership in $\Phic\subseteq\prod_i S_i$. This is the regime CLE4EVR and
BDD-OIA already forced on us in \Cref{sec:background}: $C$ is a few lines of symbolic
logic, while $\Phic$ itself can have thousands to millions of elements.
The input is the circuit together with an explicit listing of each
domain, so the input length is $L:=|C|+\sum_i|S_i|$, with domains in
unary. This matches CSP practice, where domains are enumerated, and it
deliberately excludes succinctly specified exponential domains (a
single position with $|S_1|=2^k$ given in binary), a different regime
in which even writing down one permutation of one domain takes
$\Theta(2^kk)$ bits and every question below changes character. A point
of $\prod_iS_i$ is named in
$n_{\mathrm{bits}}:=\sum_i\lceil\log_2|S_i|\rceil\leq L$ bits, and an
element of $\Gind$ is specified explicitly in
$\sum_i|S_i|\lceil\log_2|S_i|\rceil=O(L\log L)$ bits, polynomial in the
input; every hardness construction in this section uses $|S_i|\leq2$,
where the two conventions coincide.

\begin{definition}[\DEADVAR]
\label[definition]{def:dead-var}
An input is a circuit $C$ and a position $i\in N$ with $|S_i|=2$, written
$\{0,1\}$. \DEADVAR{} accepts $(C,i)$ if
$\mathrm{flip}_i:=(\mathrm{id},\dots,\mathrm{id},\sigma_i,\mathrm{id},\dots,
\mathrm{id})\in\AutX$, where $\sigma_i$ swaps $0$ and $1$: equivalently,
$C(c)=C(\mathrm{flip}_i(c))$ for every $c\in\prod_j S_j$.
\end{definition}

\begin{definition}[\NONTRIV]
\label[definition]{def:nontriv-aut}
An input is a circuit $C$. \NONTRIV{} accepts $C$ if $\AutX\neq\{\mathrm{id}\}$,
that is, some $\sigma\in\Gind\setminus\{\mathrm{id}\}$ satisfies $C(c)=C(\sigma\cdot c)$
for every $c\in\prod_j S_j$.
\end{definition}

\DEADVAR{} asks about one designated candidate; \NONTRIV{} asks whether
any candidate at all works. \Cref{sec:background}'s observation that a
position absent from every constraint inflates $\AutX$ with
automorphisms that carry no information about the rule is exactly the
phenomenon \DEADVAR{} is built to detect.

\subsection{Dead variables: a coNP-complete baseline}
\label{sec:complexity-deadvar}

\begin{theorem}
\label{thm:deadvar-conp-complete}
\DEADVAR{} is coNP-complete.
\end{theorem}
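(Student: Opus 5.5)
The proof has two halves: membership in coNP and coNP-hardness. For membership, the complement of \DEADVAR{} has a short certificate. A no-instance $(C,i)$ is witnessed by a single point $c\in\prod_j S_j$ with $C(c)\neq C(\mathrm{flip}_i(c))$, by the equivalent formulation in \Cref{def:dead-var}. The point $c$ takes $n_{\mathrm{bits}}\leq L$ bits to name. The verifier forms $\mathrm{flip}_i(c)$ by changing one coordinate, evaluates the circuit twice in time polynomial in $|C|$, and compares the outputs. So the complement lies in NP, and \DEADVAR{} lies in coNP. One point needs care: a circuit with $n_{\mathrm{bits}}$ input wires may have bit-patterns that encode no value when $|S_j|$ is not a power of two. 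I would fix the convention that the certificate ranges only over genuine points of $\prod_j S_j$, each checked for validity coordinate by coordinate. The definition quantifies only over those points, so nothing else changes.

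For hardness I would reduce from TAUTOLOGY, the coNP-complete set of Boolean formulas or circuits $F(x_1,\dots,x_m)$ true at every input. Given $F$, build the Boolean instance with $n=m+1$ positions, all domains $\{0,1\}$, and circuit
\[
C(x_1,\dots,x_m,y) \;:=\; y\wedge\neg F(x_1,\dots,x_m),
\]
and designate $i=m+1$, the position of $y$. The circuit has size $|F|+O(1)$, so the reduction runs in polynomial time and uses only $|S_i|\le 2$, as the section's convention requires.

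Correctness is a direct check. If $F$ is a tautology, then $C\equiv 0$, and flipping $y$ trivially preserves $C$, so $(C,i)$ is accepted. If $F(a)=0$ for some $a$, then $C(a,1)=1$ and $C(a,0)=0$, so $\mathrm{flip}_{m+1}$ does not preserve $C$, and $(C,i)$ is rejected. Hence $F\in$ TAUTOLOGY iff $(C,m+1)\in\DEADVAR$.

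I expect no real obstacle here; the result is the known baseline of \cite{beyersdorff2009implication}. The only delicate step is the encoding convention for non-power-of-two domains in the membership argument, and it does not arise in the hardness direction, which uses only binary domains.
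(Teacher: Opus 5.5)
Your proof is correct and matches the paper's: membership comes from guessing a point $c$ with $C(c)\neq C(\mathrm{flip}_i(c))$, and hardness uses the anchor-conjunction gadget $y\wedge\psi$ with the designated position as the anchor. Your reduction from TAUTOLOGY with $\psi=\neg F$ is the paper's reduction from \textsc{unsat} with $\psi=\chi$ up to negating the input formula, and your remark about invalid bit-patterns for non-power-of-two domains is a harmless refinement the paper leaves implicit.
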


\begin{proof}
\emph{Membership.} The complement is NP: guess $c\in\prod_j S_j$ and
verify $C(c)\neq C(\mathrm{flip}_i(c))$ in two circuit evaluations.

\emph{Hardness.} Reduce from \textsc{unsat}. Given a 3-CNF formula
$\chi(z_1,\dots,z_k)$, let $N=\{i,z_1,\dots,z_k\}$, all with domain
$\{0,1\}$, and $C(x_i,z):=x_i\wedge\chi(z)$. Then $C(0,z)\equiv 0$ and
$C(1,z)=\chi(z)$, so $\mathrm{flip}_i\in\AutX$ (that is, $C(0,z)=C(1,z)$
for every $z$) if and only if $\chi(z)=0$ for every $z$, if and only if
$\chi$ is unsatisfiable.
\end{proof}

\DEADVAR{} is the special case
$C\restriction_{x_i=0}\equiv C\restriction_{x_i=1}$ of deciding whether
two circuits compute the same Boolean function, and its classification
is inherited from Beyersdorff, Meier, Thomas, and Vollmer, who give a
complete dichotomy for that equivalence problem, restricted to any
fixed set of allowed connectives, over the full Post
lattice~\citeyearonly{beyersdorff2009implication}. The three-line reduction
above keeps Sections~\ref{sec:complexity-nontriv}
through~\ref{sec:complexity-monotone} self-contained: every harder
result in this section is built on top of this base case, and a reader
should not have to leave the paper to check it. When the allowed
connectives are restricted to a fixed finite set $B$, \DEADVAR{}
restricted to $B$ coincides with their equivalence problem $\mathrm{EQ}(B\cup\{0,1\})$
after substituting the two constants for $x_i$, and inherits their
dichotomy (coNP-complete, $\mathsf{AC}^0[2]$-complete, or in $\mathsf{AC}^0$,
depending on where $B$ sits in Post's lattice) without further proof.
(Their Theorem~4.1 for the unrestricted implication problem $\mathrm{IMP}(B)$
has a fourth, $\oplus L$-complete case at $L_2\subseteq[B]\subseteq L$; Corollary~5.2
records that this case collapses to $\mathsf{AC}^0[2]$ once the problem is
specialized to equivalence, which is the dichotomy \DEADVAR{} actually inherits.)

\subsection{Nontrivial automorphisms: upper and lower bounds}
\label{sec:complexity-nontriv}

\begin{theorem}
\label{thm:nontriv-sigma2p}
$\NONTRIV\in\SigmaTwoP$.
\end{theorem}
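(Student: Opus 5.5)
The plan is to write \NONTRIV{} directly as an $\exists\forall$ statement with polynomially bounded quantifiers and a polynomial-time matrix:
\[
C\in\NONTRIV \iff \exists\,\sigma\in\Gind\setminus\{\mathrm{id}\}\ \ \forall\, c\in\textstyle\prod_j S_j:\ C(c)=C(\sigma\cdot c).
\]
By \Cref{def:nontriv-aut}, the right-hand side is exactly the acceptance condition: a nonidentity $\sigma$ with $C\circ\sigma=C$ pointwise. Since $\sigma$ is a bijection of $\prod_jS_j$, this is the same as $\sigma\cdot\Phic=\Phic$ setwise. So the equivalence itself needs no argument, and the work is to check that each piece meets the $\SigmaTwoP$ format.

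\emph{Existential witness.} Under the input convention of \Cref{sec:complexity-defs}, domains are listed explicitly, so $\sigma$ is written as $n$ lookup tables, one per position. Each table lists $\sigma_i(v)$ for every $v\in S_i$, for a total of $O(L\log L)$ bits, which is polynomial in $L$. Strings that do not encode a tuple of bijections are rejected in polynomial time: check each table for injectivity and for values in $S_i$. Strings encoding the identity are rejected the same way, by checking whether some $\sigma_i(v)\neq v$.

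\emph{Universal witness.} The point $c$ is written in $n_{\mathrm{bits}}\le L$ bits. Encodings that fall outside $\prod_jS_j$, when $|S_j|$ is not a power of two, are treated as vacuously satisfying the matrix, so they never cause a spurious rejection.

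\emph{Matrix.} Given $\sigma$ and $c$, compute $\sigma\cdot c$ by $n$ table lookups, evaluate $C$ twice, and compare the two outputs. All of this is polynomial in $L$.

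The only point needing any care, and I expect it is minor rather than a real obstacle, is the encoding bookkeeping: the domain-in-unary convention is what makes $\sigma$ polynomial size. Under binary-specified exponential domains this argument would fail, and the paper explicitly excludes that regime. With the convention fixed, the proof is a direct unfolding of the definition, in the same spirit as the membership half of \Cref{thm:deadvar-conp-complete}.
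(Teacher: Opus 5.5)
Your proof is correct and is the paper's own argument: both unfold \Cref{def:nontriv-aut} into the sentence $\exists\sigma\,\forall c$. The witness $\sigma$ takes $O(L\log L)$ bits under the unary-domain convention, and the matrix is evaluable in polynomial time, which is Stockmeyer's normal form for $\SigmaTwoP$. Your extra bookkeeping (rejecting non-bijective or identity tables, and treating out-of-range encodings of $c$ as vacuously satisfying the matrix) only spells out details the paper leaves implicit.
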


\begin{proof}
An element of $\Gind$ is specified in $O(L\log L)$ bits, polynomial in
the input under Section~\ref{sec:complexity-defs}'s unary-domain
convention. $\NONTRIV$ accepts $C$ iff $\exists\sigma\in\Gind\
(\sigma\neq
\mathrm{id})\ \forall c\in\prod_j S_j:\ C(c)=C(\sigma\cdot c)$, an
$\exists y\,\forall z\,R(x,y,z)$ sentence with $R$ evaluable in
polynomial time, Stockmeyer's normal form for $\SigmaTwoP$.
\end{proof}

\begin{observation}
\label[observation]{obs:deadvar-implies-nontriv}
If $\DEADVAR(C,i)$ holds, so does $\NONTRIV(C)$: $\mathrm{flip}_i$ is a
nonidentity element of $\AutX$.
\end{observation}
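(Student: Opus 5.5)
The observation follows by unpacking the two definitions. I will exhibit $\mathrm{flip}_i$ itself as the witness that \NONTRIV{} requires.

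First, suppose $\DEADVAR(C,i)$ accepts. By \Cref{def:dead-var}, this means $C(c)=C(\mathrm{flip}_i(c))$ for every $c\in\prod_jS_j$. I need to check that this circuit-level identity is exactly membership in $\AutX$ in the sense of \Cref{def:componentwise-autx}, namely $\mathrm{flip}_i\cdot\phi\in\Phic$ for all $\phi\in\Phic$. Since $C$ decides membership in $\Phic$, the identity $C(c)=C(\mathrm{flip}_i\cdot c)$ gives directly that $\phi\in\Phic$ implies $\mathrm{flip}_i\cdot\phi\in\Phic$. (The converse direction also holds, because $\mathrm{flip}_i$ is an involution, but it is not needed.) So $\mathrm{flip}_i\in\AutX$.

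Second, $\mathrm{flip}_i$ is not the identity of $\Gind$. Its $i$-th component is the transposition $\sigma_i$ of $S_i=\{0,1\}$, and since \Cref{def:dead-var} requires $|S_i|=2$, this transposition is a nontrivial permutation. Hence $\mathrm{flip}_i\in\Gind\setminus\{\mathrm{id}\}$ satisfies the condition in \Cref{def:nontriv-aut}, and $\NONTRIV(C)$ accepts.

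There is no real obstacle here. The only point needing care is the first step: confirming that \Cref{def:dead-var}'s circuit-equality phrasing and \Cref{def:componentwise-autx}'s setwise-stabilizer phrasing describe the same condition. That identification is already stated as an ``equivalently'' inside \Cref{def:dead-var}.

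The map $(C,i)\mapsto C$ is a trivial reduction, so the observation lets hardness-relevant structure of \DEADVAR{} feed into \NONTRIV{}. However, it is only a one-way implication. It does not by itself transfer coNP-hardness, because \NONTRIV{} may accept through automorphisms other than single-coordinate flips. That gap is precisely what any later hardness argument will have to control.
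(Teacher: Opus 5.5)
Your proof is correct and matches the paper's one-line argument: the \DEADVAR{} condition $C(c)=C(\mathrm{flip}_i(c))$ for every $c$ is exactly the condition of \Cref{def:nontriv-aut} with $\sigma=\mathrm{flip}_i$. So $\mathrm{flip}_i\in\AutX$, and it is not the identity because its $i$-th component swaps the two distinct values of $S_i$. Your closing caveat is also the point the paper makes immediately afterwards: the implication is one-way and does not by itself transfer coNP-hardness, because a satisfiable $\chi$ can carry its own internal XOR symmetry.
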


Observation~\ref{obs:deadvar-implies-nontriv} is immediate, but it is the
only unconditional bridge from \DEADVAR{} to \NONTRIV{} we have. It is
tempting to read Theorem~\ref{thm:deadvar-conp-complete}'s reduction as
already proving $\NONTRIV$ is coNP-hard: exhibit the same $C(x_i,z)=x_i
\wedge\chi(z)$ and note $\mathrm{flip}_i\in\AutX$ exactly when $\chi$ is
unsatisfiable. This is correct in one direction and incomplete in the
other. If $\chi$ is unsatisfiable, $\Phic=\emptyset$ and every $\sigma\in
\Gind$ trivially stabilizes it, so $\NONTRIV(C)$ holds. If $\chi$ is
satisfiable \emph{and} has no nonzero $b\in\{0,1\}^k$ with $\chi(z\oplus
b)=\chi(z)$ for every $z$, then $(\mathrm{id},\sigma_b)\notin\AutX$ for
every candidate built from such a $b$, and one checks directly that
$\AutX=\{\mathrm{id}\}$, so $\NONTRIV(C)$ correctly fails. The gap is
formulas that are satisfiable \emph{and} happen to have such an internal
symmetry: a randomly chosen $z\mapsto z\oplus b$ symmetry of $\chi$ makes
$(\mathrm{id},\sigma_b)$ a nonidentity element of $\AutX$ regardless of
whether $\chi$ is satisfiable, so the reduction reports $\NONTRIV(C)$
even on some satisfiable instances. This is not a rare edge case: a
pressure test shows a nontrivial automorphism from $\chi$'s own
accidental symmetry, unrelated to $i$, is common among satisfiable
instances. Computationally verified; see \Cref{app:verification}.

Closing this gap does not take a cleverer deterministic gadget. Three
natural deterministic fixes (padding with a fixed unary tag,
encoding $z$ as a single large-domain position, and presenting a
\DEADVAR{} instance unmodified) fail for one shared reason, not three
unrelated ones: each is a fixed, $\chi$-independent transformation, and no
fixed transformation can know, for the specific $\chi$ handed to it, which
internal symmetry, if any, to destroy. A fixed tag adds a position with no
interaction with $z$ at all, so it leaves whatever symmetry $\chi$ already
has completely untouched. Encoding $z$ as a single position with an
abstract $2^k$-element domain replaces the XOR shifts $\chi$ might have
with the full symmetric group on all $2^k$ values $z$ could take, not just
its XOR shifts, so almost any $\chi$ with at least two satisfying and two
falsifying assignments picks up spurious automorphisms that have nothing
to do with $\chi$'s own structure. Presenting the instance unmodified is
just the construction the pressure test above already broke. What removes
the gap is randomization, which does not need to know $\chi$'s symmetry in
advance: perturb $\chi$ with a small number of random affine constraints
before handing it to the Theorem~\ref{thm:deadvar-conp-complete} gadget,
so that a satisfiable $\chi$ is cut down to a single satisfying assignment
with non-negligible probability. A one-element solution set has no nonzero
internal symmetry at all, whatever symmetry $\chi$ itself started with: if
$\{p\}$ is fixed setwise by $z\mapsto z\oplus b$, then $p\oplus b=p$,
forcing $b=0$.

\begin{lemma}[Automorphisms of the anchor-conjunction gadget]
\label[lemma]{lem:and-gadget-conp-hard-fix}
Fix $k\geq1$ and a Boolean formula $\psi:\{0,1\}^k\to\{0,1\}$. Let
$N=\{x_i,z_1,\dots,z_k\}$, all with domain $\{0,1\}$, and
$C(x_i,z):=x_i\wedge\psi(z)$, so an element $(a_i,b)\in\Gind$
($a_i\in\{0,1\}$, $b\in\{0,1\}^k$) acts by
$(a_i,b)\cdot(x_i,z)=(x_i\oplus a_i,\,z\oplus b)$.
\begin{enumerate}
\item[(i)] If $\psi$ is unsatisfiable, $\AutX=\Gind$.
\item[(ii)] If $\psi$ is satisfiable, $\AutX=\{(0,b):b\in\{0,1\}^k,\
\psi(z\oplus b)=\psi(z)\text{ for every }z\}$.
\end{enumerate}
\end{lemma}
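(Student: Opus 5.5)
We work directly with the solution set, which is the support of $C$:
\[
\Phic=\{(1,z):\psi(z)=1\}.
\]
With all domains Boolean, the identification $\Gind\cong\Ftwo^{k+1}$ of \Cref{obs:boolean-linear} lets us write every element as $(a_i,b)$, acting by XOR. The element $(a_i,b)$ lies in $\AutX$ exactly when XOR-ing by it maps $\Phic$ into $\Phic$. Since it is a bijection of the finite cube, this is the same as mapping $\Phic$ onto $\Phic$, and also the same as $C(x_i\oplus a_i,z\oplus b)=C(x_i,z)$ for all inputs. We check this condition separately in the two cases.

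For (i), if $\psi$ is unsatisfiable then $C\equiv0$ and $\Phic=\emptyset$. Every element of $\Gind$ stabilizes the empty set, so $\AutX=\Gind$.

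For (ii), fix a satisfying assignment $z_0$ of $\psi$, so that $(1,z_0)\in\Phic$.

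\emph{Every automorphism has $a_i=0$.} Suppose $a_i=1$. Then $(a_i,b)\cdot(1,z_0)=(0,z_0\oplus b)$, which has $x_i=0$ and so is not in $\Phic$. Hence $(a_i,b)\notin\AutX$.

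\emph{Which elements $(0,b)$ are automorphisms.} Such an element fixes the $x_i$ coordinate. It therefore maps the slice $\{x_i=1\}$ to itself, and $\Phic$ lies inside that slice. So $(0,b)$ stabilizes $\Phic$ iff $z\mapsto z\oplus b$ stabilizes $\psi^{-1}(1)$. Because this map is a bijection of $\{0,1\}^k$, stabilizing $\psi^{-1}(1)$ is equivalent to $\psi(z\oplus b)=\psi(z)$ for every $z$. This is the same complement argument used in \Cref{prop:fiber-intersection}. The set of such $(0,b)$ is exactly the set displayed in (ii).

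The only step that needs care is excluding $a_i=1$. It relies on a satisfying assignment existing, and that dependence is exactly why the lemma splits into (i) and (ii). Everything else is a direct unwinding of \Cref{def:componentwise-autx}. I do not expect a real obstacle beyond stating the onto-versus-into equivalence for finite bijections explicitly.
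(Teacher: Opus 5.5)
Your proof is correct and is essentially the paper's argument. The paper writes membership as the identity $x_i\wedge\psi(z)=(x_i\oplus a_i)\wedge\psi(z\oplus b)$ and reads off $a_i=0$ from the $x_i=0$ slice (when $\psi$ is satisfiable) and the XOR-symmetry condition from the $x_i=1$ slice; this is the functional form of your solution-set argument, and your witness $(1,z_0)\mapsto(0,z_0\oplus b)$ is the same violating pair seen from the other end, with case (i) handled identically via $C\equiv 0$.
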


\begin{proof}
$(a_i,b)\in\AutX$ says $x_i\wedge\psi(z)=(x_i\oplus a_i)\wedge\psi(z\oplus
b)$ for every $x_i\in\{0,1\}$ and every $z$. At $x_i=0$ this reads
$0=a_i\wedge\psi(z\oplus b)$ for every $z$, and as $z$ ranges over
$\{0,1\}^k$ so does $z\oplus b$, so this forces $a_i=0$ whenever $\psi$ is
satisfiable, and holds for every $a_i$ when $\psi$ is unsatisfiable. If
$\psi$ is satisfiable, $a_i=0$ turns the $x_i=1$ condition into
$\psi(z)=\psi(z\oplus b)$ for every $z$, giving (ii). If $\psi$ is
unsatisfiable, $x_i\wedge\psi(z)\equiv0$ identically, so every
$(a_i,b)\in\Gind$ vacuously satisfies the automorphism condition, giving
(i).
\end{proof}

Lemma~\ref{lem:and-gadget-conp-hard-fix} says the false positives above are
not a defect in the gadget. Whenever $\psi$ is satisfiable, $\AutX$ is
exactly $\{0\}$ times $\psi$'s own internal XOR symmetries, so the gadget
correctly reports $\psi$'s symmetry back to us rather than manufacturing
any of its own. Part~(i) also settles, with no extra gadgetry, the
empty-instance convention this construction leans on: $N$ always contains
$x_i$ regardless of $\psi$, so $\Gind$ always has $\Sym(\{0,1\})$ as a
factor and is never the trivial group, and an unsatisfiable $\psi$ always
produces the full group, a clean nontrivial instance under
Definition~\ref{def:nontriv-aut}'s stated convention with no promise or
side condition needed.

To make part~(ii) trivial when $\chi$ is satisfiable, we do not need to
know or control $\chi$'s own symmetries. We only need $\chi$'s solution set
to have no nonzero internal XOR symmetry, and cutting it down to a single
point is enough for that, by the one-line argument closing the transition
paragraph above. This is exactly what the Valiant--Vazirani isolation
lemma~\citeyearonly{valiantvazirani1986} gives, applied here to an arbitrary
nonempty subset of $\{0,1\}^k$ rather than to $\AutX$ itself, and that
distinction matters for which form of the lemma applies. $\AutX$ is always
closed under coordinatewise XOR (if $a,a'\in\AutX$, applying $a$ and then
$a'$ shows $a\oplus a'\in\AutX$ too), so isolating a nonzero element of
$\AutX$ can use a stronger, constant-probability construction that
repeatedly intersects this closed set with a random hyperplane. $\Phic$
for an arbitrary 3-CNF $\chi$ has no such structure. It is just some
subset of $\{0,1\}^k$, so we need the general form of the isolation lemma,
and its success probability is correspondingly weaker. For a nonempty
$S\subseteq\{0,1\}^k$, drawing $j$ uniformly from $\{0,\dots,k\}$ and then
$j$ affine constraints $w_1\cdot z=c_1,\dots,w_j\cdot z=c_j$ uniformly over
$\Ftwo^k$ cuts $S$ down to exactly one element with probability at least
$c_0/(k+1)$ for an absolute constant $c_0>0$ not depending on $S$ or $k$.
The standard analysis, which we cite rather than re-derive, gives
$c_0=1/8$ for the correctly guessed $j$, combined with a $1/(k+1)$ chance
of guessing $j$ correctly. We use this as a standard imported fact, in the
same spirit as this section's use of Beyersdorff et al.'s dichotomy above.

\begin{corollary}
\label[corollary]{cor:nontriv-conp-hard}
$\NONTRIV$ is coNP-hard under randomized polynomial-time many-one
reductions with one-sided error and success probability $\Omega(1/k)$,
the same reduction type under which Valiant and Vazirani prove
\textsc{unique-sat} NP-hard~\citeyearonly{valiantvazirani1986}. There is a
probabilistic polynomial-time algorithm $R$ such that for every 3-CNF
formula $\chi$ on $k$ variables, writing $C:=R(\chi)$,
\begin{enumerate}
\item[(i)] if $\chi$ is unsatisfiable, $\NONTRIV(C)$ holds with
probability $1$;
\item[(ii)] if $\chi$ is satisfiable, $\NONTRIV(C)$ fails to hold with
probability at least $c_0/(k+1)$, for the constant $c_0$ above.
\end{enumerate}
\end{corollary}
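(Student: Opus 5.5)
The reduction $R$ composes Valiant--Vazirani isolation with the anchor-conjunction gadget of \Cref{lem:and-gadget-conp-hard-fix}. Given a 3-CNF $\chi$ on $k$ variables, $R$ draws $j$ uniformly from $\{0,\dots,k\}$, then draws $j$ affine constraints $w_1\cdot z=c_1,\dots,w_j\cdot z=c_j$ uniformly over $\Ftwo^k\times\Ftwo$, and forms
\[
\psi(z):=\chi(z)\wedge\bigwedge_{m=1}^{j}\bigl(w_m\cdot z=c_m\bigr).
\]
Each parity constraint is computed by an XOR chain of $O(k)$ gates, so $\psi$ has a circuit of size $\mathrm{poly}(|\chi|+k)$. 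The output is the circuit $C(x_i,z):=x_i\wedge\psi(z)$ over Boolean positions $N=\{x_i,z_1,\dots,z_k\}$. Since every domain has size $2$, the input-length conventions of \Cref{sec:complexity-defs} are met, and $R$ runs in probabilistic polynomial time.

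For (i), suppose $\chi$ is unsatisfiable. Then $\psi$ is unsatisfiable for every random choice, because it is a conjunction containing $\chi$. Part (i) of \Cref{lem:and-gadget-conp-hard-fix} then gives $\AutX=\Gind$. This group contains $\mathrm{flip}_{x_i}\neq\mathrm{id}$, so $\NONTRIV(C)$ holds with probability $1$. This is the one-sided-error direction, and it needs no randomness.

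For (ii), suppose $\chi$ is satisfiable, with solution set $S\neq\emptyset$. By the imported isolation fact, with probability at least $c_0/(k+1)$ the set $S\cap\{z: w_m\cdot z=c_m\ \forall m\}$, which is exactly the solution set of $\psi$, equals a single point $\{p\}$. On that event $\psi$ is satisfiable, so part (ii) of \Cref{lem:and-gadget-conp-hard-fix} says $\AutX$ consists of the elements $(0,b)$ with $\psi(z\oplus b)=\psi(z)$ for all $z$. Such a $b$ maps $\psi^{-1}(1)=\{p\}$ onto itself, so $p\oplus b=p$ and hence $b=0$. Therefore $\AutX=\{\mathrm{id}\}$ and $\NONTRIV(C)$ fails. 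This gives the stated success probability $\Omega(1/k)$, matching the Valiant--Vazirani reduction type.

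No step here is hard; the lemma does all the structural work. The only point needing care is that isolation is applied to the set of satisfying assignments of $\chi$, an arbitrary nonempty subset of $\{0,1\}^k$, and not to $\AutX$. This is what justifies using the general $c_0/(k+1)$ bound. A second check is that the boundary cases $j=0$ and $k\ge1$ cause no trouble: the lemma assumes $k\ge1$, and a formula with $k=0$ can be decided directly.
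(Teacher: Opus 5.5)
Your proposal is correct and follows the paper's proof essentially step for step. You isolate with $j$ random affine constraints and feed the resulting $\chi'$ into the anchor-conjunction gadget $x_i\wedge\chi'(z)$. You then read off (i) and (ii) from the two parts of Lemma~\ref{lem:and-gadget-conp-hard-fix}, using that a one-point solution set admits no nonzero XOR shift. Your remarks on the circuit size of the parity constraints and on the degenerate $k=0$ case are harmless additions that the paper leaves implicit.
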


\begin{proof}
$R$ draws $j$ uniformly from $\{0,\dots,k\}$, draws $j$ affine constraints
$w_1\cdot z=c_1,\dots,w_j\cdot z=c_j$ uniformly over $\Ftwo^k$, sets
$\chi'(z):=\chi(z)\wedge\bigwedge_{t=1}^j(w_t\cdot z=c_t)$, and outputs
$C(x_i,z):=x_i\wedge\chi'(z)$, an instance of the family
Lemma~\ref{lem:and-gadget-conp-hard-fix} covers with $\psi=\chi'$. Adding
constraints only shrinks a solution set, so $\chi$ unsatisfiable forces
$\chi'$ unsatisfiable for every choice of constraints, and part~(i) of the
lemma gives $\AutX=\Gind\neq\{\mathrm{id}\}$ with probability $1$, proving
(i). If $\chi$ is satisfiable, the isolation lemma above, applied to
$S:=\{z\in\{0,1\}^k:\chi(z)=1\}$, gives $|\{z:\chi'(z)=1\}|=1$ with
probability at least $c_0/(k+1)$. When this happens, write
$\{z:\chi'(z)=1\}=\{p\}$: for every $b\neq0$, $\chi'(p\oplus b)=0\neq
1=\chi'(p)$, so $p$ itself witnesses that $\chi'$ has no nonzero internal
XOR symmetry, and part~(ii) of the lemma gives $\AutX=\{(0,0)\}=
\{\mathrm{id}\}$, so $\NONTRIV(C)$ fails, proving (ii).
\end{proof}

Both directions of Corollary~\ref{cor:nontriv-conp-hard} were checked
computationally before being trusted as a proof, not only reasoned
about. Computationally verified; see \Cref{app:verification}.

The full-exposure reading is not a choice bookkeeping could avoid: since
$z_1,\dots,z_k$ are read by $C$ at all, Definition~\ref{def:general-csp}
forces them to be positions in $N$, because $C$'s domain is exactly
$\prod_{i\in N}S_i$ and no circuit can depend on a value without that
value being a position the componentwise group acts on. The isolation
step in Corollary~\ref{cor:nontriv-conp-hard}'s proof is therefore not
an optional strengthening but what a fully general, promise-free
statement requires.

One structural feature of Corollary~\ref{cor:nontriv-conp-hard}
deserves strengthening rather than defense: its yes-instances have
$\Phic=\emptyset$, whereas the neurosymbolic setting always assumes an
intended solution exists. Hardness does not depend on that vacuous
corner.

\begin{proposition}[Nonempty-promise hardness]
\label[proposition]{prop:nonempty-hardness}
Corollary~\ref{cor:nontriv-conp-hard} holds, with the same reduction
type and success probability, on the promise family of instances with
$\Phic\neq\emptyset$.
\end{proposition}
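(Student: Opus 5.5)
The plan is to keep the randomized isolation step of Corollary~\ref{cor:nontriv-conp-hard} unchanged and swap in a new gadget. The new gadget adds a small fixed "anchor" set of solutions, so that $\Phic\neq\emptyset$ on every output, and it tunes the parity of $|\Phic|$ so that the two outcomes are still separated. Everything runs over Boolean domains. There $\AutX\leq\Ftwo^n$ acts on $\prod_iS_i$ by translation $c\mapsto c\oplus a$. This action is free: a nonzero $a$ fixes no point. So every $\AutX$-orbit inside $\Phic$ has size exactly $|\AutX|$, a power of $2$ (this is the content of \Cref{prop:free-action}, and the one-line direct argument suffices). Hence if $\Phic$ is nonempty with odd cardinality, $\AutX$ is trivial. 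The whole proof rests on this parity observation.

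\emph{Construction.} $R$ first draws $\chi'(z):=\chi(z)\wedge\bigwedge_{t\le j}(w_t\cdot z=c_t)$ exactly as in the proof of Corollary~\ref{cor:nontriv-conp-hard}. It then outputs a circuit on positions $x,u,z_1,\dots,z_k$, all of domain $\{0,1\}$:
\[
C(x,u,z)\;:=\;\bigl(\neg u\wedge [z=0^k]\bigr)\;\vee\;\bigl(x\wedge u\wedge\chi'(z)\bigr).
\]
The tag bit $u$ keeps the anchor part and the $\chi'$ part disjoint. This avoids the collision that would occur if the isolated point happened to be $p=0^k$. The anchor part always contributes the two solutions $(0,0,0^k)$ and $(1,0,0^k)$. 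So $\Phic\neq\emptyset$ for every random choice, and the output lies in the promise family.

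\emph{Yes side.} Suppose $\chi$ is unsatisfiable. Then $\chi'$ is unsatisfiable for every choice of the random constraints, and $\Phic=\{(0,0,0^k),(1,0,0^k)\}$. The flip of $x$ maps this set onto itself, so $\NONTRIV(C)$ holds with probability $1$.

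\emph{No side.} Suppose $\chi$ is satisfiable. With probability at least $c_0/(k+1)$, the isolation lemma leaves $\chi'$ with the single satisfying point $p$. In that event $\Phic=\{(0,0,0^k),(1,0,0^k),(1,1,p)\}$. These three points are distinct because their $u$-coordinates differ, so $|\Phic|=3$. Since $|\AutX|$ is a power of $2$ dividing $3$, $\AutX=\{\mathrm{id}\}$ and $\NONTRIV(C)$ fails. The reduction type and success probability are therefore identical to those of Corollary~\ref{cor:nontriv-conp-hard}.

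The main obstacle is the no-side argument, namely ruling out symmetries that mix anchor points with the isolated point. One could do this by hand, case-splitting on the $u$-component of a candidate shift: a shift with $u$-component $0$ must preserve the two-point anchor layer and fix $p$, while a shift with $u$-component $1$ would have to exchange layers of sizes $2$ and $1$. The parity/free-action argument disposes of all cases at once, so I would lead with it and only mention the hand check as a sanity cross-check. I would also remark that the circuit is still of polynomial size, since $[z=0^k]$ is a $k$-fold conjunction.
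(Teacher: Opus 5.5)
Your proposal is correct and is essentially the paper's own proof. It uses the same tagged gadget $(\neg u\wedge[z=0^k])\vee(x\wedge u\wedge\chi'(z))$ and the same two-point anchor exchanged by the flip of $x$ on the yes side. On the no side it uses the same parity argument, which the paper phrases as each nonzero shift being a fixed-point-free involution, so that any set it stabilizes has even size; this is your free-action observation applied to a single generator.
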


\begin{proof}
Replace the output of $R$ by
$F(x,u,z):=(\neg u\wedge[z{=}0^k])\;\vee\;(x\wedge u\wedge\chi'(z))$
over $2+k$ Boolean positions. The first disjunct guarantees the two
solutions $(0,0,0^k)$ and $(1,0,0^k)$ unconditionally, so
$\Phi_F\neq\emptyset$ always. If $\chi'$ is unsatisfiable, $\Phi_F$ is
exactly those two points, exchanged by $\mathrm{flip}_x$, so
$\NONTRIV(F)$ holds with probability $1$. If the isolation step leaves
$\chi'$ with the unique satisfying assignment $p$, then
$\Phi_F=\{(0,0,0^k),(1,0,0^k),(1,1,p)\}$ has three elements; every
nonzero XOR shift acts on $\{0,1\}^{2+k}$ without fixed points, so any
set it stabilizes decomposes into orbits of size $2$ and has even
cardinality. A three-element set therefore admits no nonzero shift,
$\Aut(\Phi_F)=\{\mathrm{id}\}$, and $\NONTRIV(F)$ fails, with the same
probability bound as before.
\end{proof}

Computationally verified; see \Cref{app:verification}.

\begin{remark}[Amplification of the randomized reduction]
\label[remark]{rem:amplification-consequences}
Repetition converts Corollary~\ref{cor:nontriv-conp-hard}'s
$\Omega(1/k)$ success probability into arbitrarily small soundness
error at no cost in completeness.
Let $L$ be a hypothetical decider for $\NONTRIV$, and run
$R$ independently $t$ times. Declaring $\chi$ unsatisfiable only when $L$
accepts on all $t$ outputs never loses completeness: an unsatisfiable
$\chi$ makes every run accept with probability $1$, by part~(i) above, so
the conjunction of $t$ such events still has probability $1$. The
probability that a satisfiable $\chi$ survives all $t$ runs falls to at
most $(1-c_0/(k+1))^t$, driven arbitrarily small by a polynomial choice of
$t$, since $c_0/(k+1)$ is itself only polynomially small. Direct
simulation on the same battery, re-running $R$ rather than only trusting
the probability law, confirms the corrected false-positive rate falls
with $t$ at every $k$ tested, from $57$--$73\%$ at a single attempt to
$0$--$12\%$ at $t=16$, reaching $0\%$ there at $k=6$ and $k=10$. The
single-attempt profile is exactly what a randomized reduction with
success probability $\Theta(1/k)$ predicts; Valiant and Vazirani's
reduction to \textsc{unique-sat} has the same one-attempt profile, and
the same consequences are extracted from it the same
way~\citeyearonly{valiantvazirani1986}. \Cref{app:verification}
tabulates the single-run stress test.
Computationally verified; see \Cref{app:verification}.
This repetition is a statement about what a hypothetical decider would let
us conclude, not a claim that a single new instance can be built with
exponentially small error. If $\NONTRIV\in\mathsf{P}$, running $R$,
deciding with the assumed algorithm, and repeating as above gives a
$\mathsf{coRP}$ algorithm for \textsc{unsat}: unsatisfiable inputs are
accepted with probability $1$, satisfiable ones with probability at
most $(1-c_0/(k+1))^t$. Hence $\coNP\subseteq\mathsf{coRP}$,
equivalently $\mathsf{NP}=\mathsf{RP}$ (as $\mathsf{RP}\subseteq
\mathsf{NP}$ unconditionally), so $\mathsf{NP}\subseteq\mathsf{BPP}$,
which collapses the polynomial hierarchy into $\mathsf{BPP}$ by the
standard bootstrapping ($\Sigma_2^{\mathsf p}\subseteq
\mathsf{BPP}^{\mathsf{NP}}\subseteq\mathsf{BPP}^{\mathsf{BPP}}=
\mathsf{BPP}$, and inductively upward). If
$\NONTRIV\in\mathsf{BPP}$, first amplifying the assumed algorithm's own
error by standard majority voting and then repeating $R$ and taking the
same conjunction gives $\coNP\subseteq\mathsf{BPP}$, hence
$\mathsf{NP}\subseteq\mathsf{BPP}$ by closure under complement, with
the same collapse.
\end{remark}

\begin{proposition}[Alive-promise hardness]
\label[proposition]{prop:dead-var-oracle}
$\DEADVAR$ remains coNP-hard on the promise family of instances in which
every position other than the designated $i$ is alive. Consequently, an
oracle reporting the correct dead/alive status of every position other
than $i$ cannot make deciding $\DEADVAR(C,i)$ easier: on this family its
answers are constant (``alive'' everywhere), so it supplies no
information at all.
\end{proposition}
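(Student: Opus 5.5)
The plan is to modify the reduction in the proof of Theorem~\ref{thm:deadvar-conp-complete} so that every non-designated position is alive by construction, whatever the input formula. The gadget $x_i\wedge\chi(z)$ cannot be used unchanged. When $\chi$ is unsatisfiable, exactly the yes-instances, every $z_j$ becomes dead as well. That violates the promise on the very instances that matter.

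The fix is to XOR in a term that keeps each $z_j$ alive but cancels when we compare the two values of $x_i$. Given a 3-CNF $\chi(z_1,\dots,z_k)$ with $k\geq1$, I would output
\[
C(x_i,z)\;:=\;\bigl(z_1\oplus\cdots\oplus z_k\bigr)\;\oplus\;\bigl(x_i\wedge\chi(z)\bigr),
\]
over the Boolean positions $\{i,z_1,\dots,z_k\}$. This circuit has polynomial size, and the reduction is deterministic.

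The proof then has three short steps.

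\emph{Step 1: the designated coordinate.} Write $p(z)$ for the parity term. Then $C(0,z)=p(z)$ and $C(1,z)=p(z)\oplus\chi(z)$. These agree for every $z$ iff $\chi\equiv0$. So $\mathrm{flip}_i\in\AutX$ iff $\chi$ is unsatisfiable, exactly as in Theorem~\ref{thm:deadvar-conp-complete}.

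\emph{Step 2: the promise.} For each $j$, evaluate at $x_i=0$. There $C(0,z)=p(z)$, and flipping $z_j$ changes $p$. Hence $C(0,z)\neq C(0,z\oplus e_j)$ for every $z$, so $\mathrm{flip}_{z_j}\notin\AutX$. Every position other than $i$ is therefore alive, unconditionally in $\chi$, and the output always lies in the promise family.

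\emph{Step 3: membership.} Membership in coNP is inherited directly from Theorem~\ref{thm:deadvar-conp-complete}, since restricting to a promise family cannot raise the upper bound.

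The oracle consequence follows from Step~2 alone. On every instance produced by the reduction, the correct dead/alive status of each $z_j$ is ``alive''. An oracle restricted to these instances is therefore a constant function, and it can be simulated in polynomial time without any queries. So an oracle-aided decider for $\DEADVAR(C,i)$ on this family yields an unaided decider for \textsc{unsat}, with no gain in power. Hardness persists even given the oracle.

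The only real obstacle is choosing the camouflage term. It must be alive at every $z_j$ independently of $\chi$. It must also be invariant under $\mathrm{flip}_i$, so that it cancels in the comparison $C(0,\cdot)$ versus $C(1,\cdot)$. Evaluating at $x_i=0$, where the $\chi$ term vanishes, is the step that makes the liveness argument independent of $\chi$. I would double-check that nothing in the convention of Definition~\ref{def:dead-var} requires $k=0$ to be handled; if it does, add a dummy variable to $\chi$.
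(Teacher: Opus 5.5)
Your proof is correct. It follows the paper's overall strategy: a deterministic reduction from \textsc{unsat} that modifies the gadget of Theorem~\ref{thm:deadvar-conp-complete} so every non-designated position is alive whatever the formula is. The gadget itself is different.

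The paper adds a fresh selector position $t$ and outputs $(x_i\wedge\varphi(w)\wedge\neg t)\vee(\bigwedge_j w_j\wedge t)$. Its $t=1$ half keeps every $w_j$ and $t$ alive through the single point $w=1^m$. Deadness of $x_i$ is then argued as the pointwise implication $\varphi(w)\wedge\neg t\Rightarrow\bigwedge_j w_j\wedge t$, which holds iff $\varphi$ is unsatisfiable because the two sides force opposite values of $t$.

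Your parity term needs no fresh position and makes both checks one-liners:
\begin{itemize}
\item The identity $C(0,z)\oplus C(1,z)=\chi(z)$ gives the designated-coordinate equivalence exactly as in the original theorem.
\item Every point with $x_i=0$ witnesses that every $z_j$ is alive, with no case analysis.
\end{itemize}

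The paper's version stays within $\wedge,\vee,\neg$ and separates the liveness part from the formula part by a selector, the same device it reuses in Proposition~\ref{prop:nonempty-hardness}. Yours relies on XOR cancellation instead. For unrestricted polynomial-size circuits, which is all the proposition requires, both are valid. Your oracle argument matches the paper's, and your $k=0$ caveat is moot: with no other positions the promise holds vacuously.
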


\begin{proof}
The gadget of Theorem~\ref{thm:deadvar-conp-complete} does not have this
promise: a position $z_j$ is dead in $x_i\wedge\chi(z)$ exactly when it
is dead in $\chi$, which the reduction neither controls nor can afford
to compute. Modify it. Given a 3-CNF $\varphi(w_1,\dots,w_m)$, add one
fresh position $t$ and output
\[
C(x_i,w,t)\;:=\;\big(x_i\wedge\varphi(w)\wedge\neg t\big)\;\vee\;
\big(\textstyle\bigwedge_j w_j\wedge t\big).
\]
Every $w_j$ is alive: at $t=1$, $w=1^m$, flipping $w_j$ changes the
second disjunct and hence $C$. So is $t$: $C(0,1^m,1)=1\neq0=
C(0,1^m,0)$. Both checks are independent of $\varphi$, so the whole
family satisfies the promise. Finally, $C(0,\cdot)$ is the second
disjunct alone, and $C(1,\cdot)$ adds the first, so $x_i$ is dead iff
$\varphi(w)\wedge\neg t$ implies $\bigwedge_j w_j\wedge t$ pointwise;
the two sides force opposite values of $t$, so the implication holds iff
$\varphi\wedge\neg t$ is unsatisfiable iff $\varphi$ is. Deciding
$\DEADVAR(C,i)$ on this family therefore decides \textsc{unsat}.
Both claims were checked exhaustively on 500 random mixed
satisfiable/unsatisfiable instances before being trusted as a proof.
Computationally verified; see \Cref{app:verification}.
\end{proof}

Proposition~\ref{prop:dead-var-oracle} answers a natural objection to
Section~\ref{sec:algebra-invariants}'s degree invariant and to
\Cref{sec:background}'s observation that unconstrained positions inflate $\AutX$:
knowing which coordinates are free does not make the remaining question
easy. Stripping dead variables is a necessary cleaning step, not a
complexity-reducing one.

\subsection{Structural theorems in the Boolean case}
\label{sec:complexity-structure}

\Cref{obs:boolean-linear} showed that when every $S_i=\{0,1\}$, $\AutX$
is an $\Ftwo$-linear subspace of $\Ftwo^n$. Write $L_0(f):=\AutX$ under
the identification $\Gind\cong\Ftwo^n$, $\sigma\leftrightarrow a$,
$\sigma\cdot c=c\oplus a$, for the indicator function $f$ of $\Phic$.
In the cryptographic literature this subspace is known as the
\emph{linear structures} of $f$~\cite{meierstaffelbach1990,carlet2021},
and two of the facts below belong to it: that $L_0(f)$ is a subspace,
and its characterization as the orthogonal complement of the Walsh
support (Lemma~\ref{lem:fourier-characterization}); we keep their short
derivations self-contained rather than importing them. What that
literature does not consider is the action on a solution set: the
freeness of the action, the exact orbit law
(Proposition~\ref{prop:free-action}), and the affine-coset
classification of transitivity
(Proposition~\ref{thm:boolean-transitivity}) are statements about
$\Phic$ rather than about $f$'s spectrum, and they are this section's
contribution.

\begin{corollary}
\label[corollary]{cor:power-of-two}
In the Boolean case, $|\AutX|\in\{1,2,4,\dots,2^n\}$.
\end{corollary}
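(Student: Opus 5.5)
The plan is to read the corollary off \Cref{obs:boolean-linear} combined with Lagrange's theorem. In the Boolean case every $S_i=\{0,1\}$, so $\Gind\cong\Ftwo^n$ via $\sigma\leftrightarrow a$ with $\sigma\cdot c=c\oplus a$. By \Cref{obs:boolean-linear}, $\AutX$ is a subgroup of this elementary abelian $2$-group.

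There are two equivalent routes to the order, and I would give the linear-algebra one. The first step is to note that $\AutX$ is closed under addition, since composing two automorphisms is XOR of their masks. It is also closed under scalar multiplication, because the only scalars are $0$ and $1$. So $\AutX$ is an $\Ftwo$-linear subspace of dimension $d$ for some $0\le d\le n$. Such a subspace has a basis of $d$ vectors, and its elements correspond bijectively to $\Ftwo^d$, giving $|\AutX|=2^d$.

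As a cross-check, Lagrange gives $|\AutX|$ dividing $|\Gind|=2^n$, and every divisor of $2^n$ is a power of two not exceeding $2^n$. This yields the same list $\{1,2,4,\dots,2^n\}$ without invoking bases.

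Nothing here is a real obstacle. The only point needing care is that the identification of $\Gind$ with $\Ftwo^n$ is a group isomorphism, so that subgroup order is preserved. This holds because each $\Sym(\{0,1\})$ is cyclic of order $2$, and the product of these is taken coordinatewise, exactly as recorded in \Cref{obs:boolean-linear}. Both endpoints are attained: the trivial group ($d=0$) and the full group ($d=n$), the latter for example when $\Phic=\emptyset$ as in \Cref{lem:and-gadget-conp-hard-fix}(i). Intermediate values are attained too, for instance $|\AutX|=2$ in \Cref{tab:three-families}. Since the statement only asserts membership in the list, exhibiting these examples is optional.
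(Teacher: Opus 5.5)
Your proposal is correct and matches the paper's proof: both read $|\AutX|=2^{d}$ off the fact, from \Cref{obs:boolean-linear}, that $\AutX$ is an $\Ftwo$-linear subspace of $\Ftwo^n$ of some dimension $0\le d\le n$. Your Lagrange cross-check ($|\AutX|$ divides $2^n$) is an equally valid route, and the examples of attained values are not needed.
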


\begin{proof}
The order of a subspace of an $n$-dimensional $\Ftwo$-vector space is
$2^{\dim}$ for some $0\leq\dim\leq n$.
\end{proof}

This is why BDD-OIA and SDD-OIA's measured $|\AutX|=2$ on every
achievable label (\Cref{sec:empirical}) is not a coincidence to be explained away:
any nonempty proper subgroup of $\Ftwo^n$ that arises this way sits at
one of finitely many, exponentially spaced steps, and a single generator
lands exactly on the second one.

The subspace structure yields more than the group's order. Because the
group acts by translations, it acts \emph{freely}, and freeness turns
the entire orbit geometry into arithmetic.

\begin{proposition}[Free action and exact orbit geometry]
\label[proposition]{prop:free-action}
In the Boolean case the action of $\AutX=L_0(f)$ on $\Phic$ is free:
no nonidentity shift fixes any point. Consequently every orbit has size
exactly $|\AutX|$, the orbits partition $\Phic$ into
$|\Phic|/|\AutX|$ equal classes, and for $|\Phic|\geq2$ the
unexplained-pair fraction of \Cref{sec:empirical} is the exact law
\[
\rho(\Phic)\;=\;1-\frac{|\AutX|-1}{|\Phic|-1}.
\]
\end{proposition}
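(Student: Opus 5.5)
The plan is to use the identification from \Cref{obs:boolean-linear} directly. Under $\Gind\cong\Ftwo^n$, an element $a\in\AutX=L_0(f)$ acts by $c\mapsto c\oplus a$, so the proof reduces to three short computations: freeness, equal orbit sizes, and the count of same-orbit pairs.

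First, freeness. Suppose $a\cdot\phi=\phi$ for some $\phi\in\Phic$. Then $\phi\oplus a=\phi$, and adding $\phi$ to both sides in $\Ftwo^n$ gives $a=0$, which is the identity. So no nonidentity shift fixes any point, and the stabilizer of every point is trivial. This is the whole content of the proof. The only thing to be careful about is that the action really is by translation for every element of the group, not just for the generators. That holds because each $\Sym(\{0,1\})$ is generated by the flip, so every $\sigma\in\Gind$ is a coordinatewise XOR.

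Second, orbit sizes. By the orbit-stabilizer theorem, used as in \Cref{thm:orbit-stabilizer}, every orbit has size $|\AutX|/|\mathrm{Stab}(\phi)|=|\AutX|$. One could also see this directly: $a\mapsto\phi\oplus a$ is injective. Since $\AutX$ stabilizes $\Phic$, the orbits partition $\Phic$. So $|\AutX|$ divides $|\Phic|$, and there are exactly $m:=|\Phic|/|\AutX|$ orbits.

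Third, the formula for $\rho$. Set $g=|\AutX|$ and $M=|\Phic|$. The number of unordered same-orbit pairs is $m\binom{g}{2}=\frac{M}{g}\cdot\frac{g(g-1)}{2}=\frac{M(g-1)}{2}$. Dividing by $\binom{M}{2}=\frac{M(M-1)}{2}$ gives $1-\rho=\frac{g-1}{M-1}$, which is the stated law. As a cross-check, the same value follows from the identity $1-\rho=\sum_j\frac{|O_j|}{M}\cdot\frac{|O_j|-1}{M-1}$ from \Cref{sec:empirical-setup}, with every $|O_j|=g$.

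I do not expect a real obstacle here. The one step needing care is the translation description of the action, and that is already supplied by \Cref{obs:boolean-linear}.
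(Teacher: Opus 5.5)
Your proof is correct and follows the paper's argument essentially step for step: freeness from $\phi\oplus a=\phi\Rightarrow a=0$, which gives equal orbits of size $|\AutX|$ and $|\Phic|/|\AutX|$ of them, and then the count of $m\binom{g}{2}$ same-orbit pairs out of $\binom{M}{2}$. Your remark that every element of $\Gind$ acts as a coordinatewise XOR is a fine extra, and so is the cross-check through the orbit-coverage identity; neither is needed.
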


\begin{proof}
If $c\oplus a=c$ then $a=0$, so point stabilizers are trivial and every
orbit is a bijective copy of the group. With $k:=|\Phic|/|\AutX|$
orbits of common size $g:=|\AutX|$, same-orbit pairs number
$k\binom{g}{2}$ out of $\binom{kg}{2}$, and
$1-k\binom{g}{2}/\binom{kg}{2}=1-(g-1)/(kg-1)$, which is the display.
\end{proof}

\begin{proposition}[Transitivity classification, Boolean case]
\label[proposition]{thm:boolean-transitivity}
A Boolean instance is transitive under componentwise value symmetry if
and only if $\Phic$ is an affine coset of the subspace $L_0(f)$,
equivalently if and only if $|\Phic|=|\AutX|$.
\end{proposition}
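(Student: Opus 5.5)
The plan is to reduce everything to the free action of \Cref{prop:free-action} and prove a cycle of three equivalences: (1) $\AutX$ is transitive on $\Phic$; (2) $|\Phic|=|\AutX|$; (3) $\Phic=c_0\oplus L_0(f)$ for some $c_0$. Throughout I identify $\AutX$ with the subspace $L_0(f)\leq\Ftwo^n$ acting by translation, as in \Cref{obs:boolean-linear}. I assume $\Phic\neq\emptyset$. On the empty set, transitivity and the affine-coset condition both fail, while $|\Phic|=0\neq|\AutX|$, so the statement should be read on nonempty instances. I would record that convention in one line rather than treat it as a case.

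\emph{(1)$\Rightarrow$(2).} By \Cref{prop:free-action}, every orbit has size exactly $|\AutX|$. A single orbit covering $\Phic$ therefore gives $|\Phic|=|\AutX|$.

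\emph{(2)$\Rightarrow$(3).} Fix any $c_0\in\Phic$. Because $\AutX$ stabilizes $\Phic$, its orbit satisfies $c_0\oplus L_0(f)\subseteq\Phic$. By freeness this orbit has exactly $|L_0(f)|$ elements, which by hypothesis equals $|\Phic|$. The inclusion is then an equality by cardinality, so $\Phic$ is the affine coset $c_0\oplus L_0(f)$.

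\emph{(3)$\Rightarrow$(1).} Suppose $\Phic=c_0\oplus L_0(f)$. For any two points $c_0\oplus a$ and $c_0\oplus a'$, the translation by $a\oplus a'\in L_0(f)=\AutX$ carries one to the other, so the action is transitive.

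I do not expect a serious obstacle, since freeness does all the work. The one step needing care is keeping the coset tied to $L_0(f)$ specifically, not to an arbitrary subspace. If $\Phic$ is an affine coset $c_0\oplus V$ of some subspace $V$, then every $a\in V$ stabilizes $\Phic$, so $V\leq L_0(f)$. Conversely, any $a\in L_0(f)$ satisfies $c_0\oplus a\in\Phic$, so $a\in V$. Hence $V=L_0(f)$, and the two readings of ``affine coset'' agree. I would add this as a short remark so the statement also covers the stronger reading ``transitive iff $\Phic$ is any affine subspace.''
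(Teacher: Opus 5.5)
Your proof is correct and is essentially the paper's: the paper notes that the orbit of any $\phi_0\in\Phic$ is literally $\phi_0\oplus L_0(f)$, which gives transitivity $\Leftrightarrow$ coset directly, and reads $|\Phic|=|\AutX|$ off the equal-orbit-size statement of \Cref{prop:free-action}, so your cycle through the cardinality condition is the same argument in a slightly different order. Your closing remark, that a coset $c_0\oplus V$ of an arbitrary subspace forces $V=L_0(f)$, is a useful addition the paper leaves implicit. It is what justifies the paper's later reading that affine (XOR-definable) solution sets are exactly the transitive Boolean instances.
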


\begin{proof}
If the action is transitive, $\Phic$ equals the orbit of any
$\phi_0\in\Phic$, which is $\phi_0\oplus L_0(f)$, a coset. Conversely a
coset is a single orbit by construction. The numerical criterion is
Proposition~\ref{prop:free-action}'s equal-orbit-size statement with
one orbit.
\end{proof}

The proof is a torsor argument from a first course in group theory,
and that is precisely the point: positioned on the right object, one
elementary fact closes the Boolean case of this paper's central
question outright, in a literature where we have found no statement of
it. Sufficient conditions are what \Cref{sec:algebra} could offer in
general; here the question is not approximated but settled, and the
settlement is what turns \Cref{sec:empirical}'s measured percentages
into values of a theorem.

Two readings of Proposition~\ref{thm:boolean-transitivity} are worth
separating. As a classification, it settles for Boolean instances what
\Cref{sec:algebra}'s six theorems approximate in general: transitivity
is not merely implied by certain syntactic shapes, it is
\emph{characterized}, and the characterization is checkable given
$|\AutX|$ (affine solution sets, in Schaefer's
vocabulary~\citeyearonly{schaefer1978complexity} the XOR-definable ones, are
exactly the shortcut-free-modulo-symmetry Boolean rules). As an orbit
law, Proposition~\ref{prop:free-action} retroactively explains every
BDD-OIA/SDD-OIA row of \Cref{tab:three-families} to the last digit:
orbit count $|\Phi|/2$, maximum orbit $2$, and each of the eight
``many nines'' percentages equal to $1-1/(|\Phi|-1)$ exactly. What
looked like eight measured decimals is one proposition evaluated eight
times. One naming precision keeps the claim exact: transitivity is
identifiability \emph{up to symmetry}, a one-orbit quotient
($|\Phic/\AutX|=1$), not uniqueness of the solution ($|\Phic|=1$), and
the orbit-coverage statistic $\kappa$ of \Cref{sec:empirical-setup}
measures exactly the residue the quotient leaves. Computationally
verified; see \Cref{app:verification}.

\begin{lemma}[Fourier support]
\label[lemma]{lem:fourier-characterization}
Let $F(x):=(-1)^{f(x)}$ with Walsh expansion
$F(x)=\sum_{T\subseteq[n]}\widehat F(T)\,\chi_T(x)$, where
$\chi_T(x):=(-1)^{T\cdot x}$.
Then $L_0(f)=\big(\mathrm{span}_{\Ftwo}\,\mathrm{supp}(\widehat F)\big)^{\perp}$,
the orthogonal complement, under the standard $\Ftwo$-bilinear form, of
the span of $\widehat F$'s nonzero Fourier coefficients.
\end{lemma}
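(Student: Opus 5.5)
The plan is to compare Walsh coefficients of $F$ and of its translate. For $a\in\Ftwo^n$, membership $a\in L_0(f)$ means $f(x\oplus a)=f(x)$ for all $x$, which is the same as $F_a:=F(\cdot\oplus a)$ being equal to $F$. The first step is to compute the Walsh expansion of $F_a$. Since $\chi_T(x\oplus a)=(-1)^{T\cdot x\oplus T\cdot a}=\chi_T(a)\chi_T(x)$, we get
\[
F_a(x)=\sum_{T\subseteq[n]}\widehat F(T)\,\chi_T(a)\,\chi_T(x),
\]
so $\widehat{F_a}(T)=\chi_T(a)\widehat F(T)$.

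The second step invokes uniqueness of the Walsh expansion: the characters $\chi_T$ form an orthogonal basis of real functions on $\Ftwo^n$. Hence $F_a=F$ holds if and only if $\chi_T(a)\widehat F(T)=\widehat F(T)$ for every $T$. Equivalently, $(-1)^{T\cdot a}=1$, i.e.\ $T\cdot a=0$, for every $T\in\mathrm{supp}(\widehat F)$. Coefficients with $\widehat F(T)=0$ impose no condition.

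The third step is pure linear algebra. The set of $a$ with $T\cdot a=0$ for all $T$ in a set $\mathcal S$ equals the set of $a$ orthogonal to $\mathrm{span}_{\Ftwo}\mathcal S$, because the bilinear form is linear in its first argument. This gives $L_0(f)=(\mathrm{span}\,\mathrm{supp}\,\widehat F)^{\perp}$. As a side benefit, the argument independently recovers \Cref{obs:boolean-linear}, that $L_0(f)$ is a subspace.

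There is no real obstacle. The only point needing care is identifying $T\subseteq[n]$ with its indicator vector in $\Ftwo^n$, so that $T\cdot x$ is the standard $\Ftwo$ form and the orthogonal complement is taken in $\Ftwo^n$. I would also note that the identity $f(x\oplus a)=f(x)$ and the identity $F_a=F$ are equivalent because $y\mapsto(-1)^y$ is injective on $\{0,1\}$.
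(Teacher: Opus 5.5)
Your proposal is correct and follows the paper's proof essentially step for step. Both shift the Walsh expansion using $\chi_T(x\oplus a)=\chi_T(a)\chi_T(x)$, use uniqueness of the expansion to reduce $F(\cdot\oplus a)=F$ to $T\cdot a=0$ for every $T$ with $\widehat F(T)\neq 0$, and pass from the support to its span by bilinearity. Like the paper, you read membership in the setwise stabilizer as pointwise invariance $f(x\oplus a)=f(x)$. That step is valid because a bijection mapping a finite set into itself maps it onto itself, and therefore also preserves the complement.
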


\begin{proof}
For any $a$, $\chi_T(x\oplus a)=(-1)^{T\cdot(x\oplus a)}=(-1)^{T\cdot
a}(-1)^{T\cdot x}=\chi_T(a)\chi_T(x)$, using $T\cdot(x\oplus a)\equiv
T\cdot x+T\cdot a\pmod2$ coordinatewise. Hence
$F(x\oplus a)=\sum_T\widehat F(T)\chi_T(a)\chi_T(x)$. By uniqueness of
the Walsh expansion, $F(x\oplus a)=F(x)$ for every $x$ if and only if
$\widehat F(T)\chi_T(a)=\widehat F(T)$ for every $T$, that is,
$\chi_T(a)=1$ (equivalently $T\cdot a\equiv0$) for every $T$ with
$\widehat F(T)\neq0$. Being orthogonal to every element of
$\mathrm{supp}(\widehat F)$ is, by bilinearity, the same condition as
being orthogonal to its span.
\end{proof}

Computationally verified; see \Cref{app:verification}.

\begin{remark}[Why $|S_i|>2$ breaks this]
Lemma~\ref{lem:fourier-characterization} and
Corollary~\ref{cor:power-of-two} use the ambient group being $\Ftwo^n$,
an abelian group in which every subgroup is automatically a linear
subspace. For $|S_i|=k\geq3$ the ambient factor is $\Sym(k)$, and no
analogous vector-space structure exists on the ambient group, so nothing
guarantees $\AutX$ a subspace representation, and the \emph{subspace
form} of the Fourier-support machinery does not carry over
(Proposition~\ref{prop:level-decomposition} below isolates exactly which
half survives). For the two instances at issue here the stronger,
instance-level fact also holds: the measured groups of CLE4EVR
($|\AutX|=24$) and Kandinsky ($|\AutX|=36$) each contain a
non-commuting pair of elements, so neither embeds in \emph{any} abelian
group, bit-flip or otherwise. Computationally verified; see
\Cref{app:verification}. Both benchmarks therefore fall outside
Section~\ref{sec:complexity-structure} and were handled in
Section~\ref{sec:algebra} by orbit combinatorics instead.
\end{remark}

The decomposition half of the Boolean theory does generalize, and it is
worth recording exactly how much, because it gives the dead-variable
criterion and \Cref{sec:algebra}'s degree invariant a common spectral
home. For each position let $Q_i$ average a function over coordinate
$i$, and let $P_i:=\mathrm{id}-Q_i$. The operators
$P_T:=\prod_{i\in T}P_i\prod_{i\notin T}Q_i$, for $T\subseteq N$, are
commuting projectors summing to the identity, so every
$h:\prod_iS_i\to\mathbb{R}$ splits as $h=\sum_{T\subseteq N}h_T$ with
$h_T:=P_Th$; in the Boolean case $h_T$ is precisely the Fourier
component supported on $T$, so this is the verbatim generalization of
the levels used by Lemma~\ref{lem:fourier-characterization}.

\begin{proposition}[Level decomposition over arbitrary domains]
\label[proposition]{prop:level-decomposition}
Let $f$ be the indicator of $\Phic$ and $f=\sum_Tf_T$ the decomposition
above. Then:
(i) each $P_T$ commutes with the componentwise action, so
$\sigma\in\AutX$ if and only if $\sigma$ fixes every component $f_T$;
(ii) position $i$ is symmetry-inert (dead) if and only if $f_T=0$ for
every $T$ containing $i$;
(iii) the degree profile of Theorem~\ref{thm:degree-invariant} is a
function of the singleton components alone:
$\deg_i(v)=|\Phic|/|S_i|+\sum_{\phi:\,\phi_i=v}f_{\{i\}}(\phi)$.
\end{proposition}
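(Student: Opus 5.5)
The plan is to derive all three parts from two facts about the averaging operators: they are equivariant under the componentwise action, and they satisfy the relation $P_iQ_i=0$. Concretely, let $(Q_ih)(x):=|S_i|^{-1}\sum_{v\in S_i}h(x[i{:}v])$, and let $\sigma\in\Gind$ act on functions by $(\sigma h)(x):=h(\sigma^{-1}\cdot x)$.

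\emph{Preliminaries.} First I check that $Q_i\sigma=\sigma Q_i$. Because $\sigma$ acts coordinatewise, $(\sigma^{-1}\cdot x)[i{:}v]=\sigma^{-1}\cdot(x[i{:}\sigma_i(v)])$. As $v$ ranges over $S_i$, so does $\sigma_i(v)$, since $\sigma_i$ is a bijection, so the two averages agree term by term. Since $P_i=\mathrm{id}-Q_i$ and every $P_T$ is a product of such operators, each $P_T$ commutes with $\sigma$ as well.

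Next I record the algebra of the projectors. The $Q_i$ act on distinct coordinates and so commute pairwise, and $Q_i^2=Q_i$, hence $P_i^2=P_i$ and $P_iQ_i=0$. It follows that $P_TP_U=0$ for $T\neq U$: any $i\in T\triangle U$ contributes a factor $P_iQ_i$. Together with $\sum_TP_T=\prod_i(P_i+Q_i)=\mathrm{id}$, this makes the decomposition $h=\sum_Th_T$ unique, in the sense that if $h=\sum_Tg_T$ with $g_T\in\mathrm{im}P_T$, then $g_T=P_Th$. Under the uniform inner product each $Q_i$ is self-adjoint, so the $P_T$ are orthogonal projectors; I will use this only in (ii).

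\emph{(i).} We have $\sigma\in\AutX$ iff $\sigma$ stabilizes $\Phic$ setwise, iff $\sigma f=f$. Applying $P_T$ and using commutation gives $P_T(\sigma f)=\sigma f_T$. Hence $\sigma f=f$ implies $\sigma f_T=f_T$ for every $T$. Conversely, if every component is fixed, summing gives $\sigma f=f$.

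\emph{(ii).} I first have to pin down ``symmetry-inert'' for a non-Boolean position. I will take it to mean that the full factor $\Sym(S_i)$, acting at position $i$ with identity elsewhere, lies in $\AutX$; in the Boolean case this is exactly \DEADVAR. Since $\Sym(S_i)$ is transitive on $S_i$, this is equivalent to $f(x[i{:}v])=f(x[i{:}w])$ for all $x,v,w$, that is, to $Q_if=f$, that is, to $P_if=0$. Expanding, $P_i f=\sum_{T\ni i}f_T$, because $P_iP_T=P_T$ when $i\in T$ and $P_iP_T=0$ when $i\notin T$. The summands are mutually orthogonal, so the sum vanishes iff each $f_T$ with $i\in T$ vanishes.

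\emph{(iii).} Write $\deg_i(v)=\sum_{x:\,x_i=v}f(x)$, where the sum runs over the whole slice of $\prod_jS_j$ with $x_i=v$; the points of $\Phic$ are exactly those where $f=1$. Split this sum over the components $f_T$. If $T$ contains some $j\neq i$, then $Q_jf_T=Q_jP_jf_T=0$, so the sum of $f_T$ over coordinate $j$ is zero, and therefore its sum over the slice vanishes. Only $T=\emptyset$ and $T=\{i\}$ survive. The component $f_\emptyset$ is the constant $|\Phic|/\prod_j|S_j|$, and summing it over a slice of size $\prod_{j\neq i}|S_j|$ gives $|\Phic|/|S_i|$. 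The $\{i\}$ term supplies the remaining summand of the displayed formula.

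The step I expect to need the most care is reading that display. The correction term must be read as a sum over the full slice $\{\phi:\phi_i=v\}$ of $\prod_jS_j$, not over $\Phic$. Since $f_{\{i\}}$ depends only on $x_i$, this sum equals $\prod_{j\neq i}|S_j|\cdot f_{\{i\}}(v)$. I will state this reading explicitly in the proof.
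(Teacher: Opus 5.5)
Your proof is correct and follows the paper's own argument step for step. Equivariance of the averaging operators $Q_i$ gives (i), the chain dead $\iff Q_if=f\iff P_if=0$ gives (ii), and in (iii) every component whose $T$ contains some $j\neq i$ is killed by its zero mean along $j$, so only the constant and singleton terms remain. Your explicit reading of the correction term in (iii) as a sum over the full slice of $\prod_jS_j$, rather than over $\Phic$, is the reading the paper's proof uses, and your extra projector bookkeeping ($P_iQ_i=0$, orthogonality) only fills in steps the paper leaves implicit.
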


\begin{proof}
(i) Averaging over all values of a coordinate is invariant under
permuting those values and untouched by permutations of other
coordinates, so each $Q_i$, hence each $P_T$, commutes with every
$\sigma\in\Gind$. If $\sigma$ fixes $f$ it fixes each $P_Tf$; the
converse is the sum. (ii) $i$ is dead iff $f$ is constant along
coordinate $i$, iff $Q_if=f$, iff $P_if=0$, iff every component with
$i\in T$ vanishes. (iii) Fix $i$ and $v$ and sum $f_T$ over
$\{\phi:\phi_i=v\}$. If $T$ contains some $j\neq i$, the sum ranges
over all values of coordinate $j$ and $f_T$ has zero mean along $j$, so
the term vanishes; $T=\emptyset$ contributes the constant
$|\Phic|/\prod_j|S_j|$ summed over $\prod_{j\neq i}|S_j|$ points, which
is $|\Phic|/|S_i|$; $T=\{i\}$ contributes the display's second term.
\end{proof}

Part (ii) is the $k\geq3$ generalization of
Lemma~\ref{lem:fourier-characterization}'s dead-variable criterion, and
part (iii) says the degree invariant, \Cref{sec:algebra}'s most
combinatorial-looking tool, is exactly a level-one statement: Theorem~D
certifies intransitivity from the first spectral level, the Forcing
Lemma operates at level two, and what fails for $k\geq3$ is only the
final identification of stabilizers with orthogonal complements, since
the stabilizer of a component inside a tensor power of standard
representations carries no subspace structure. Computationally
verified; see \Cref{app:verification}.

\begin{lemma}[AND-gain]
\label[lemma]{lem:and-gain}
For any $f,g:\{0,1\}^n\to\{0,1\}$, $L_0(f)\cap L_0(g)\subseteq
L_0(f\wedge g)$, and the containment can be strict.
\end{lemma}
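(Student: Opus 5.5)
The containment follows directly from the definition of $L_0$ as the set of shifts that preserve a function; strictness needs one explicit witness. I identify each $a\in\Ftwo^n$ with the shift $c\mapsto c\oplus a$, as in Section~\ref{sec:complexity-structure}. Then $L_0(h)=\{a: h(x\oplus a)=h(x)\ \forall x\}$.

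\emph{Containment.} Take $a\in L_0(f)\cap L_0(g)$. For every $x$,
\[
(f\wedge g)(x\oplus a)=f(x\oplus a)\wedge g(x\oplus a)=f(x)\wedge g(x).
\]
So $a\in L_0(f\wedge g)$. This is the pointwise fact that a bijection preserving two sets also preserves their intersection.

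\emph{Strictness.} Here I want $f\wedge g$ to gain a symmetry that neither factor has. The cleanest choice is to make $f\wedge g$ identically zero, so that $L_0(f\wedge g)=\Ftwo^n$. This is the same vacuous-empty-set phenomenon as Lemma~\ref{lem:and-gadget-conp-hard-fix}(i).

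Take $n=1$, $f(x)=x$ and $g(x)=\neg x$. Then $f\wedge g\equiv 0$, so $L_0(f\wedge g)=\{0,1\}$. But $f(x\oplus1)=\neg x\neq x$, so $L_0(f)=\{0\}$, and likewise $L_0(g)=\{0\}$. The intersection is therefore $\{0\}$, which is strictly contained in $\{0,1\}$.

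A nonvacuous example may be more convincing, since it shows the gain is not only the empty-set artifact. Take $n=2$, $f=x_1$ and $g=x_1\oplus x_2\oplus 1$. Then $f\wedge g=x_1\wedge\neg x_2$, whose only satisfying point is $(1,0)$, so $L_0(f\wedge g)=\{0\}$. That gains nothing, so I will not use it.

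Instead take $f(x_1,x_2)=x_1\vee x_2$ and $g(x_1,x_2)=\neg x_1\vee \neg x_2$. Their conjunction is $x_1\oplus x_2$, and $L_0(x_1\oplus x_2)=\{00,11\}$. For $f$, the shift $11$ sends $00\mapsto 11$, changing $f$ from $0$ to $1$, so $11\notin L_0(f)$. The shift $11$ likewise fails for $g$. Hence $L_0(f)\cap L_0(g)=\{0\}\subsetneq L_0(f\wedge g)$, with $f\wedge g$ nonconstant.

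Both examples can be checked by inspecting $2$ or $4$ points. There is no real obstacle here; the only care needed is choosing a witness whose gained symmetry is genuinely new. I would present the $n=2$ parity example as the main witness, since it explains the name ``gain'': a conjunction can acquire a linear structure that neither conjunct has.
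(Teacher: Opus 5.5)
Your proof is correct, and your containment argument is exactly the paper's. The difference is in the strictness half. The paper constructs no witness by hand. It cites a computationally found pair at $n=6$ with $|L_0(f)|=64$, $|L_0(g)|=8$, $|L_0(f)\cap L_0(g)|=8$ and $|L_0(f\wedge g)|=64$. Those orders force $f$ to be constant, since every one of the $2^6$ shifts preserves it. Moreover, $f\equiv1$ would give $f\wedge g=g$, so the cited witness must have $f\equiv0$. Its gain is therefore the vacuous one of Lemma~\ref{lem:and-gadget-conp-hard-fix}(i), and it is a gain only over the intersection, because $f$ alone already admits every shift.

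Your $n=1$ example is vacuous in the same way. Your parity example $f=x_1\vee x_2$, $g=\neg x_1\vee\neg x_2$, however, shows something the paper's witness does not. Here $f\wedge g=x_1\oplus x_2$ is nonconstant, and the gained shift $11$ lies in neither $L_0(f)$ nor $L_0(g)$. That is exactly the phenomenon the paper's surrounding prose attributes to the lemma: a conjunction acquiring an automorphism that no individual clause has. It is also checked by hand on four points rather than found by search, so leading with it, as you propose, is the right call.

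One slip in the example you discard: $x_1\wedge(x_1\oplus x_2\oplus1)=x_1\wedge x_2$, supported on $(1,1)$ rather than $(1,0)$. Its shift group is still $\{0\}$, and the example plays no role in your argument.
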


\begin{proof}
If $a$ stabilizes both $f$ and $g$ pointwise, $(f\wedge g)(x\oplus a)=
f(x\oplus a)\wedge g(x\oplus a)=f(x)\wedge g(x)=(f\wedge g)(x)$ for
every $x$. Strictness is a matter of exhibiting one instance where it
occurs (below).
\end{proof}

The proof's promised witness: in a representative case, $|L_0(f)|=64$,
$|L_0(g)|=8$, and $|L_0(f)\cap L_0(g)|=8$, yet $|L_0(f\wedge g)|=64$, so
the containment is strict. Computationally verified; see \Cref{app:verification}.
Lemma~\ref{lem:and-gain} is the general phenomenon behind a specific
observation from \Cref{sec:background}: a variable that is syntactically present in
one clause of a rule but semantically absorbed once that clause is
conjoined with the rest (a Boolean absorption law swallowing its effect)
can hand the conjunction an automorphism that no individual clause
possesses on its own. Nothing about this is specific to BDD-OIA's
`follow` variable; it is a property of pointwise conjunction (and, by the
same one-line argument, of any pointwise logical combination) applied to
any two Boolean functions.

\begin{proposition}[Bent-function witness construction]
\label[proposition]{prop:bent-witness}
For every $n$ with $n-1$ even and every nonzero $a\in\{0,1\}^n$, there is
an explicit, polynomial-size $f$ with $L_0(f)=\{0,a\}$ exactly. In
particular the Hamming weight of $a$, the unique nonzero witness, can be
made as large as $n$.
\end{proposition}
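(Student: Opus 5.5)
The plan is to build $f$ as a linearly transformed bent function that ignores exactly one direction. Write $n-1=2m$ and let $b(y_1,\dots,y_{2m}):=\bigoplus_{r=1}^{m}y_ry_{m+r}$, the inner-product (Maiorana--McFarland) bent function. Define $g:\{0,1\}^n\to\{0,1\}$ by $g(y):=b(y_1,\dots,y_{n-1})$, so $g$ ignores $y_n$. Then set $f(x):=g(Mx)$ for an invertible $\Ftwo$-linear map $M$ with $Ma=e_n$. The proof has three steps, in this order: compute $L_0(b)$, lift the result to $L_0(g)$, and transport it to $L_0(f)$.

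\emph{Step 1: $L_0(b)=\{0\}$.} I would argue directly from the derivative rather than through the Walsh spectrum, though Lemma~\ref{lem:fourier-characterization} together with bentness (all Walsh coefficients nonzero) would give the same conclusion. For $c\in\Ftwo^{2m}$, expanding gives
\[
b(y\oplus c)\oplus b(y)\;=\;\bigoplus_{r=1}^{m}\bigl(c_r\,y_{m+r}\oplus c_{m+r}\,y_r\bigr)\;\oplus\;b(c).
\]
This is an affine function of $y$. Its linear part vanishes only when every $c_r$ and every $c_{m+r}$ is zero. Hence the derivative is identically $0$ only for $c=0$.

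\emph{Step 2: $L_0(g)=\{0,e_n\}$.} A shift $(c,c_n)$ acts on $g$ only through its first $n-1$ coordinates, since $g$ does not read $y_n$. Therefore $(c,c_n)\in L_0(g)$ if and only if $c\in L_0(b)$, that is, $c=0$, with $c_n$ arbitrary.

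\emph{Step 3: transport to $L_0(f)$.} Since $f(x\oplus a')=g(Mx\oplus Ma')$ and $M$ is a bijection, $a'\in L_0(f)$ if and only if $Ma'\in L_0(g)$. Thus $L_0(f)=M^{-1}L_0(g)=\{0,M^{-1}e_n\}$, and we need $M^{-1}e_n=a$. To build $M$ explicitly, fix an index $j$ with $a_j=1$. Put $y_n:=x_j$, and for the remaining indices $i\neq j$, listed in order as $y_1,\dots,y_{n-1}$, put $y:=x_i\oplus a_ix_j$. This map is invertible, since $x_j=y_n$ and then $x_i=y\oplus a_iy_n$, and it satisfies $Ma=e_n$ because each $a_i\oplus a_i=0$.

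The resulting circuit uses $m$ AND gates, $m-1$ XOR gates for $b$, and at most $n-1$ XOR gates for $M$, so it is linear-size. Taking $a=1^n$ gives a unique nonzero witness of Hamming weight $n$.

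The main obstacle is Step 1, the exact triviality of $L_0(b)$, since everything else is bookkeeping under a change of basis. If the derivative computation felt fragile, my fallback would be Lemma~\ref{lem:fourier-characterization}: for a bent function the Walsh support is all of $\Ftwo^{2m}$, so its orthogonal complement is $\{0\}$. The one point to double-check is that the transport $L_0(f\circ M)=M^{-1}L_0(f)$ uses $M$ linear, not merely bijective.
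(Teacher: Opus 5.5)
Your proof is correct and is essentially the paper's. Your $M$ is the paper's 2-to-1 projection $\pi(x)_i=x_i\oplus a_ix_j$ ($i\neq j$) with the coordinate $x_j$ appended to make it invertible, so $g\circ M=b\circ\pi$ is literally the same $f$, and your transport step is the paper's surjectivity argument; the only difference is that you rule out nonzero linear structures of the inner-product function by the explicit derivative computation instead of citing bentness, which makes your version self-contained.
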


\begin{proof}
Fix $j$ with $a_j=1$ and define the linear surjection
$\pi:\{0,1\}^n\to\{0,1\}^{n-1}$, $\pi(x)_i:=x_i\oplus a_i x_j$ for
$i\neq j$. Its kernel is exactly $\{0,a\}$: if $x_j=0$ then $\pi(x)=0$
forces $x=0$; if $x_j=1$ then $\pi(x)=0$ forces $x_i=a_i$ for every
$i\neq j$, giving $x=a$. Being a linear surjection with a two-element
kernel, $\pi$ is exactly 2-to-1, with fibers $\{y,y\oplus a\}$. Let $h$
be a bent function on $\{0,1\}^{n-1}$ (for instance the inner-product
function $h(y)=y_1y_2\oplus\cdots\oplus y_{n-2}y_{n-1}$, defined whenever
$n-1$ is even~\cite{rothaus1976bent}), and set $f(x):=h(\pi(x))$. Since
$a\in\ker\pi$, $f(x\oplus a)=h(\pi(x)\oplus\pi(a))=h(\pi(x))=f(x)$, so
$a\in L_0(f)$. For $b\notin\{0,a\}$, $v:=\pi(b)\neq0$; bentness means
$h(y\oplus v)\oplus h(y)$ is non-constant as a function of $y$, and since
$\pi$ is surjective and 2-to-1, $f(x\oplus b)\oplus f(x)=h(\pi(x)\oplus
v)\oplus h(\pi(x))$ takes the same set of values, with multiplicity
doubled, as $y$ ranges over $\{0,1\}^{n-1}$; hence it is non-constant in
$x$, so $b\notin L_0(f)$.
\end{proof}

Computationally verified; see \Cref{app:verification}.
Proposition~\ref{prop:bent-witness} rules out the strongest form of a
natural shortcut: that a nontrivial automorphism, when one exists, can
always be found by scanning low-weight candidates, collapsing $\NONTRIV$
into a search over polynomially many witnesses. It leaves open a weaker
form, restricting the search to whatever candidates the circuit's own
syntactic structure exposes rather than to low-weight vectors as such;
Proposition~\ref{prop:bent-witness}'s own witness is exposed exactly
this way (the linear layer $\pi$ is visible in $f$'s description), so
defeating the weak form requires an $f$ whose unique nonzero automorphism
is not merely large in weight but unrecoverable from the circuit by any
polynomial-time syntactic analysis, a circuit-complexity lower bound in
its own right.

We turn last to the flip-bit decomposition, which does not need the
linear structure above: for any $S\subseteq\{0,1\}^n\setminus\{0\}$
(not necessarily a subspace), every nonzero vector has a unique leading
$1$, so $\NONTRIV(f)$ decomposes as $\bigvee_{i=1}^n Q_i(f)$, where
\[
Q_i(f):\quad L_0(f)\cap\{a: a_1=\dots=a_{i-1}=0,\ a_i=1\}\neq\emptyset.
\]
Computationally verified; see \Cref{app:verification}.
At $i=n$, the slice $\{a:a_1=\dots=a_{n-1}=0,a_n=1\}$ has exactly one
element, so $Q_n(f)$ is literally $\DEADVAR(C,n)$ restated.
Computationally verified; see \Cref{app:verification}.

\begin{proposition}
\label[proposition]{prop:qi-gradient}
For every $1\leq i<n$, $Q_i$ is coNP-hard.
\end{proposition}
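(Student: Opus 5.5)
The plan is a deterministic many-one reduction from \textsc{unsat}, reusing the anchor-conjunction gadget of Lemma~\ref{lem:and-gadget-conp-hard-fix} with the anchor placed at coordinate $i$. The point that makes $Q_i$ easier to handle than \NONTRIV{} is the slice constraint $a_i=1$. Lemma~\ref{lem:and-gadget-conp-hard-fix}(ii) says that when $\psi$ is satisfiable, \emph{every} automorphism has anchor component $0$. The accidental internal XOR symmetries of $\chi$ that forced randomization in Corollary~\ref{cor:nontriv-conp-hard} all have $a_i=0$, so they never enter the slice. No isolation step should be needed.

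\emph{Construction.} Given a 3-CNF $\chi(z_1,\dots,z_k)$ and a target pair $1\le i<n$, I would first pad $\chi$ with dummy variables that it ignores, so that the total number of positions is $n=k'+1$ for the required $n$. This changes neither satisfiability nor the argument. Next I place the anchor $x$ at coordinate $i$ and distribute the $k'=n-1\ge i\ge1$ variables of the padded $\chi$ over the remaining coordinates $\{1,\dots,n\}\setminus\{i\}$ in any fixed order. Finally I output the polynomial-size circuit $f(x):=x_i\wedge\chi(x_{-i})$.

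\emph{Correctness.} I would then invoke Lemma~\ref{lem:and-gadget-conp-hard-fix}. The lemma is stated with the anchor listed first, but its proof never uses where the anchor sits, so it applies verbatim after reindexing.
\begin{itemize}
\item If $\chi$ is unsatisfiable, part~(i) gives $L_0(f)=\Ftwo^n$. In particular the unit vector $e_i$ lies in $L_0(f)$. It has $a_1=\dots=a_{i-1}=0$ and $a_i=1$, so $Q_i(f)$ holds.
\item If $\chi$ is satisfiable, part~(ii) says every $a\in L_0(f)$ has $a_i=0$. Then the slice $\{a:a_1=\dots=a_{i-1}=0,\ a_i=1\}$ misses $L_0(f)$ entirely, and $Q_i(f)$ fails.
\end{itemize}
Hence $Q_i(f)$ holds if and only if $\chi$ is unsatisfiable, which is coNP-hardness under deterministic reductions.

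\emph{Expected obstacle.} I expect the only delicate point to be the bookkeeping: confirming that the gadget lemma is genuinely position-independent, and that padding keeps every index $i<n$ reachable. The restriction $i<n$ is not needed by the argument itself. It only separates this proposition from the case $Q_n=\DEADVAR$, which is already covered by Theorem~\ref{thm:deadvar-conp-complete}, and the same construction reproduces that case. If a referee wanted the stronger form in which the positions before $i$ matter, I would also note the following. The slice conditions $a_1=\dots=a_{i-1}=0$ are automatically compatible with $e_i$ in the unsatisfiable case. In the satisfiable case those conditions play no role, because the obstruction comes from $a_i$ alone.
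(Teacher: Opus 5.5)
Your argument is correct and is essentially the paper's proof. Both use the same anchor-conjunction gadget $x_i\wedge\chi$, with the anchor at coordinate $i$ and ignored dummy positions filling the other slots, and in both the slice condition $a_i=1$ excludes every internal XOR symmetry of $\chi$, so no isolation is needed; the paper checks the two cases $x_i\in\{0,1\}$ directly rather than citing Lemma~\ref{lem:and-gadget-conp-hard-fix}(ii), but the computation is the same. One wording fix: $n$ is not a given target but must be chosen by the reduction from the input (e.g.\ $n=i+k$, as in the paper), since for any fixed $n$ the problem $Q_i$ is decidable in polynomial time by brute force.
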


\begin{proof}
Reduce from \textsc{unsat}: given $\chi(z_1,\dots,z_k)$, set
$n=(i-1)+1+k$, positions $1,\dots,i-1$ dummy (the circuit ignores them),
position $i$ playing the role of $x_i$ above, and positions
$i{+}1,\dots,n$ holding $z$; $f(\mathrm{dummy},x_i,z):=x_i\wedge\chi(z)$.
$Q_i(f)$'s witnesses are exactly the vectors with $a_1=\dots=a_{i-1}=0$
and $a_i=1$, so the dummy coordinates of any witness are forced to $0$
and contribute nothing (the circuit does not depend on them), leaving
$Q_i(f)$ equivalent to $\exists b\in\{0,1\}^k:\forall x_i,z\;\;
x_i\wedge\chi(z)=(\lnot x_i)\wedge\chi(z\oplus b)$. Checking $x_i=0$ forces
$\chi(z\oplus b)=0$ for every $z$, and checking $x_i=1$ forces
$\chi(z)=0$ for every $z$; both together, for any fixed $b$, hold
exactly when $\chi$ is unsatisfiable, and unsatisfiability makes them
hold for every $b$. So $Q_i(f)$ holds iff $\chi$ is unsatisfiable,
regardless of any accidental symmetry of $\chi$ itself: $Q_i$'s witness
set is restricted, by definition, to vectors with leading $1$ at
position $i$ exactly, which excludes both the dummy-flip witnesses and
the $\chi$-internal-symmetry witnesses that broke
Corollary~\ref{cor:nontriv-conp-hard}'s unrestricted construction.
\end{proof}

This reduction is immune to Corollary~\ref{cor:nontriv-conp-hard}'s gap
for a structural reason worth stating plainly: $Q_i$ restricts attention
to one coset of witnesses (leading $1$ at $i$), so a candidate automorphism
with leading $1$ elsewhere, whether from a dummy position or from $\chi$'s
own symmetry, is simply not in $Q_i$'s domain of discourse. It does not,
however, repair $\NONTRIV=\bigvee_iQ_i$'s hardness in general: the same
dummy-flip and $\chi$-symmetry witnesses that $Q_i$ excludes are exactly
what can make some \emph{other} $Q_j$ true regardless of $\chi$'s
satisfiability, so the disjunction inherits the same gap the individual
$Q_i$ avoids.
$Q_i$ tracks \textsc{unsat} exactly, with the dummy positions' and
$\chi$'s own symmetry making no difference to the verdict, exactly as
the proof predicts, including under adversarial choices of $\chi$ (the
constant formulas and $\chi=\mathrm{parity}$, among the most internally
symmetric formulas possible). Computationally verified; see \Cref{app:verification}.

Collectively, $\bigvee_iQ_i$ gives no asymptotic improvement over
$\NONTRIV$ itself: only $Q_n$ collapses, to a problem we already knew was
coNP-complete, and every $Q_i$ with $i<n$ remains coNP-hard
(Proposition~\ref{prop:qi-gradient}), so the decomposition removes the
randomization from the lower bound slice by slice without lowering it
anywhere. A natural
self-reduction attempt, extending a partial witness coordinate by
coordinate with a \DEADVAR{} oracle the way \textsc{sat}'s
self-reducibility extends a partial assignment, does not go through:
``can this partial vector be extended to an element of $L_0(f)$'' is a
$\SigmaTwoP$ statement, not a coNP one, so \DEADVAR's coNP answers
cannot decide it.

\subsection{The \VALISO{} collapse theorem}
\label{sec:complexity-valiso}

Boolean and formula isomorphism is a much-studied cousin of \NONTRIV:
Borchert, Ranjan, and Stephan, and later Agrawal and
Thierauf~\citeyearonly{agrawalthierauf2000formula}, ask whether two circuits are
related by a \emph{position} permutation ($\pi\in\Sym(N)$, not a value
permutation in $\Gind$), and systematized further by
B\"ohler, Creignou, Galota, Reith, Schnoor, and
Vollmer~\citeyearonly{bohler2012complexity} across every restricted set of
connectives. That problem sits in the same coNP-hard-to-$\SigmaTwoP$
range \NONTRIV{} does, using the same frozen-variable technique for the
lower bound, and Agrawal and Thierauf proved it is unlikely to reach the
top of that range. We port their argument to our setting.

\begin{definition}[\VALISO]
\label[definition]{def:val-iso}
Given circuits $C_1,C_2$ over the same $\prod_iS_i$, with indicator
functions $f_1,f_2$, \VALISO$(C_1,C_2)$ holds if
$\exists\sigma\in\Gind:f_1(\sigma\cdot x)=f_2(x)$ for every $x$. In the
Boolean case this is $\exists a\in\{0,1\}^n:f_1(x\oplus a)=f_2(x)$ for
every $x$, the two-object counterpart of $L_0$.
\end{definition}

\begin{lemma}[Orbit uniformity]
\label[lemma]{lem:orbit-uniformity}
If $\exists a_0:f_1(x)=f_2(x\oplus a_0)$ for every $x$, the random
variables ``$a\mapsto f_1(\cdot\oplus a)$'' and ``$a\mapsto f_2(\cdot
\oplus a)$,'' for $a$ uniform on $\{0,1\}^n$, have identical
distributions. If no such $a_0$ exists, the orbits
$\{f_1(\cdot\oplus a):a\}$ and $\{f_2(\cdot\oplus a):a\}$ are disjoint.
\end{lemma}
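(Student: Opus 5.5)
The plan is to treat both parts as statements about cosets of the translation action of $\Ftwo^n$ on functions, so that each claim reduces to a reparametrization of the uniform measure on $a$. Write $T_a h := h(\cdot\oplus a)$ for the translate of a Boolean function $h$. Then $T_aT_b=T_{a\oplus b}$ and $T_0=\mathrm{id}$, so $a\mapsto T_a$ is an action of the group $\Ftwo^n$ on the set of Boolean functions on $\{0,1\}^n$. The two random variables in the statement are $a\mapsto T_af_1$ and $a\mapsto T_af_2$, and the two orbits in the statement are the orbits of $f_1$ and $f_2$ under this action.

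For the first part, assume $f_1=T_{a_0}f_2$. Then $T_af_1=T_aT_{a_0}f_2=T_{a\oplus a_0}f_2$. The map $a\mapsto a\oplus a_0$ is a bijection of $\{0,1\}^n$, so it carries the uniform distribution to itself. Hence, for any Boolean function $g$,
\[
\Pr_a[T_af_1=g]=\Pr_a[T_{a\oplus a_0}f_2=g]=\Pr_{a'}[T_{a'}f_2=g],
\]
and the two distributions coincide. A sharper description is also available. Each distribution is uniform on its orbit, and each point of the orbit has multiplicity $|L_0(f)|$, because the fiber of $a\mapsto T_af$ over $T_{a_1}f$ is the coset $a_1\oplus L_0(f)$. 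This refinement is not needed for the equality, but it records that the common distribution is the uniform measure on a common orbit.

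For the second part, argue by contraposition. Suppose the orbits share an element, say $T_af_1=T_bf_2$. Applying $T_a$ to both sides and using $T_aT_a=T_0=\mathrm{id}$ gives $f_1=T_{a\oplus b}f_2$. So $a_0:=a\oplus b$ satisfies $f_1(x)=f_2(x\oplus a_0)$ for every $x$, contradicting the hypothesis. Therefore the orbits are disjoint whenever no such $a_0$ exists. This is the standard fact that orbits of a group action either coincide or are disjoint.

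No step is a real obstacle. The one detail that needs care is the direction of the shift. The lemma's hypothesis $f_1(x)=f_2(x\oplus a_0)$ is the reverse of the direction in \VALISO{}, $f_1(x\oplus a)=f_2(x)$. The two are equivalent, because $a_0$ is its own inverse in $\Ftwo^n$. I would state this equivalence explicitly so that the lemma plugs directly into the later collapse argument. The remaining work is routine bookkeeping of the identity $T_aT_b=T_{a\oplus b}$.
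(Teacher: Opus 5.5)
Your proposal is correct and follows essentially the same route as the paper. The paper also gets the first part by reparametrizing the uniform shift through the bijection $a\mapsto a\oplus a_0$, and the second part from the coincide-or-disjoint property of orbits, reading $a_0=a\oplus b$ off a common element. Your multiplicity refinement via the cosets $a_1\oplus L_0(f)$ is also correct, though, as you note, it is not needed for the equality.
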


\begin{proof}
If $f_2=f_1(\cdot\oplus a_0)$, the orbit of $f_2$ equals
$\{f_1(\cdot\oplus(a_0\oplus a)):a\}$; since $a\mapsto a_0\oplus a$ is a
bijection of $\{0,1\}^n$ that also preserves the uniform distribution,
this is the same set, with the same uniform distribution over it, as the
orbit of $f_1$. The second claim is the standard fact that two orbits of
a group action either coincide or are disjoint: a common element
$f_1(\cdot\oplus a)=f_2(\cdot\oplus a')$ would give $f_2=f_1(\cdot\oplus
(a\oplus a'))$, contradicting non-isomorphism.
\end{proof}

Computationally verified; see \Cref{app:verification}.

\begin{theorem}
\label{thm:val-iso-collapse}
Restricted to Boolean instances ($S_i=\{0,1\}$ for every $i$),
$\VALISO$ is not $\SigmaTwoP$-complete unless $\PH=\Sigma_3^{\mathsf p}$.
\end{theorem}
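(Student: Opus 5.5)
The plan follows Agrawal and Thierauf's template for formula isomorphism, but for XOR shifts instead of position permutations. The core step is to show that the complement of \VALISO{} on Boolean instances lies in $\mathsf{AM}^{\mathsf{NP}}$, i.e. in $\mathsf{BP}\cdot\Sigma_2^{\mathsf p}$ after the usual normalization. Once that holds, the standard collapse argument applies. Suppose \VALISO{} were $\SigmaTwoP$-complete. Then $\Pi_2^{\mathsf p}\subseteq\mathsf{BP}\cdot\Sigma_2^{\mathsf p}$. The relativized Boppana--H{\aa}stad--Zachos argument (as used by Agrawal--Thierauf and Sch\"oning) then gives $\Sigma_3^{\mathsf p}\subseteq\mathsf{BP}\cdot\Sigma_2^{\mathsf p}\subseteq\Pi_3^{\mathsf p}$, hence $\PH=\Sigma_3^{\mathsf p}$. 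I will import this implication as a black box, as the paper already imports Valiant--Vazirani.

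The protocol for non-isomorphism is the Goldwasser--Sipser lower-bound version of the graph non-isomorphism protocol, with functions playing the role of graphs. The key idea is \emph{laundering}. Arthur cannot hand Merlin the truth table of $f_b(\cdot\oplus a)$. Instead he hands Merlin the circuit $C_b(x\oplus a)$, which is a polynomial-size object, but its syntax reveals $b$. To hide $b$, the verifier must count over \emph{functions}, not circuits. Define
\[
S(f_1,f_2):=\{(h,\text{aut}) : h=f_b(\cdot\oplus a)\text{ for some } b,a,\ \text{aut}\in L_0(h)\}.
\]
Elements of $S$ are named only by canonical representatives, via pairs $(b,a)$ with $a$ restricted to a canonical coset representative. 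Equality of representatives is checkable with an NP (coNP) oracle: two pairs name the same function iff a universal check on $x$ succeeds. Counting with automorphisms normalizes the orbit size, exactly as in the graph protocol. By the orbit-uniformity lemma (Lemma~\ref{lem:orbit-uniformity}), each orbit, counted with its stabilizer $L_0$, has size exactly $2^n$. By the free-action structure (Proposition~\ref{prop:free-action} and the subspace fact, Corollary~\ref{cor:power-of-two}), stabilizers are subspaces of computable size given a basis. So $|S|=2\cdot2^n$ when $f_1,f_2$ are non-isomorphic and $|S|=2^n$ when they are isomorphic.

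The concrete representation of $S$ is the following. $S$ is the set of triples $(b,a,c)$ such that $a$ is lexicographically minimal in its coset $a\oplus L_0(f_b)$, checkable with a coNP oracle per candidate via a $\Pi_1$ query relative to the NP oracle, and such that $c\in L_0(f_b(\cdot\oplus a))=L_0(f_b)$, a coNP condition. A further check discards $b=2$ triples whose function coincides with some $b=1$ function; this is a $\Sigma_2$-checkable condition that only matters in the isomorphic case, and it is handled by taking a union over both $b$ and deduplicating through a canonical ``least $(b,a)$'' requirement. This requirement is $\Pi_2$, and it is the point to handle carefully. Membership in $S$ is then in $\mathsf{P}^{\Sigma_2^{\mathsf p}}$-style verification, so Goldwasser--Sipser set-lower-bound with pairwise-independent hashing gives an $\mathsf{AM}$ protocol relative to a $\Sigma_2^{\mathsf p}$ oracle. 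The factor-2 gap is amplified by taking the $k$-fold product set.

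The main obstacle I expect is keeping membership in $S$ at the $\mathsf{NP}$-oracle level rather than $\Sigma_2$. A naive canonical-representative condition quantifies over all competing $(b',a')$ and all $x$, which costs one level too many and would only give $\mathsf{BP}\cdot\Sigma_3$. My first attempt avoids canonical forms altogether. Let Merlin send $(b,a)$ pairs. For the upper bound in the isomorphic case, count pairs $(b,a)$ directly: there are $2\cdot2^n$ of them in both cases. Distinctness of the function $f_b(\cdot\oplus a)$ is then enforced by multiplying by $|L_0|$, which is the laundering: the counted set becomes $\{(b,a,c)\}$ modulo the relation ``same function,'' and the size is estimated through the identity $|\{h\}|\cdot|L_0(h)|=$ orbit size. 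Here $|L_0(h)|=2^{\dim}$, and the dimension can be certified by Merlin supplying a basis. Each basis vector is verified in coNP. Maximality of the basis (no element outside its span) is a coNP-in-$\Sigma_2$ check, which is acceptable because the overall target is $\mathsf{BP}\cdot\Sigma_2^{\mathsf p}$ with the verifier's single $\Sigma_2$ query absorbing it. If this accounting fails, the fallback is Agrawal--Thierauf's own device: use a random hash of the truth table, evaluated through the circuit at polynomially many random points. Distinct functions are separated with high probability while the origin $b$ stays hidden, and this is the ``never names objects'' laundering the introduction alludes to.
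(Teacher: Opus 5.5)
\paragraph{A genuine gap: naming functions canonically at the $\mathsf{NP}$-oracle level.}
Your reduction of the theorem to $\overline{\VALISO}\in\AM^{\mathsf{NP}}=\mathsf{BP}\cdot\SigmaTwoP$ is fine, and so is the relativized Boppana--H{\aa}stad--Zachos/Sch\"oning collapse. The orbit count $\sum_h|L_0(h)|\in\{2^n,2^{n+1}\}$ is also right. The gap is the step you flag yourself and do not close. The verifier needs a polynomial-size name for the \emph{function} $f_b(\cdot\oplus a)$ that it can compute or check using only an $\mathsf{NP}$ oracle.

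\begin{itemize}
\item Lex-minimality of $a$ in $a\oplus L_0(f_b)$ reads $\forall c\,\exists x\,[f_b(x\oplus c)\neq f_b(x)\vee a\leq a\oplus c]$. This is a $\Pi_2$ condition, and so is your cross-$b$ deduplication. With these checks the set lower bound only yields $\AM^{\Sigma_2^{\mathsf p}}=\mathsf{BP}\cdot\Sigma_3^{\mathsf p}$, which does not feed Sch\"oning's barrier.
\item Your first workaround does not escape this. Raw counts of pairs $(b,a)$ or triples $(b,a,c)$ cannot see whether the two orbits coincide; for trivial $L_0$ both are $2^{n+1}$ either way. The entire difference lies in collisions $f_1(\cdot\oplus a)=f_2(\cdot\oplus a')$, and a quotient by ``same function'' cannot be hashed without a canonical string per class. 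Weighting by $|L_0|$ only normalizes orbit sizes.
\item Certifying $|L_0|$ by a maximal basis is itself a $\Pi_2$ test placed after Merlin's message. In $\AM^{\mathsf{NP}}$ the post-coin predicate must stay $\Sigma_2$, so this gives $\exists\cdot\Pi_2=\Sigma_3$; it is not ``absorbed.'' It is also unnecessary, since $c\in L_0(h)$ is a single $\coNP$ test.
\item The fallback fails outright. Evaluating at polynomially many random points does not separate functions that differ on a negligible fraction of inputs. If $f_1$ is the indicator of one point, all but polynomially many of its $2^n$ shifts receive the all-zero vector. The non-isomorphic set then collapses and completeness is lost. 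This is also not Agrawal and Thierauf's device.
\end{itemize}

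\paragraph{The missing ingredient.}
The paper's proof uses, at its step~3, a \emph{semantic} canonicalizer computable in randomized $\mathsf{P}^{\mathsf{NP}}$: the Bshouty--Cleve--Gavald\`a--Kannan--Tamon learner. Its output is always equivalent to its input, or it aborts, with probability at most $1/3$. On each fixed random path its output depends only on the function computed, not on the circuit.

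The paper runs Agrawal--Thierauf's private-coin protocol with this tool:
\begin{itemize}
\item the verifier picks $r\in\{1,2\}$ and a uniform shift $a$, launders $f_r(\cdot\oplus a)$, and asks the prover for $r$;
\item \Cref{lem:orbit-uniformity} caps the prover's success at $1/2$ when $f_1,f_2$ are isomorphic;
\item orbit disjointness lets the prover succeed when they are not;
\item relativized Goldwasser--Sipser and Babai conversions then place $\overline{\VALISO}$ in $\AM^{\mathsf{NP}}$.
\end{itemize}

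Your public-coin framing can also be rescued with the same tool, exactly as in \Cref{thm:nontriv-collapse}. Fix shared tapes $R$ and name elements as $(\mathrm{canon}^*(f_b(\cdot\oplus a);R),c)$, checking $c\in L_0(f_b)$ in $\coNP$. Same-class merging then caps the set at $2^n$ in the isomorphic case, for every $R$. No-false-merging gives $2^{n+1}$ in the non-isomorphic case, for good $R$. This leaves a factor-$2$ gap to amplify.
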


\begin{proof}[Proof structure]
We adapt Agrawal and Thierauf's six-step argument for Boolean formula
isomorphism, replacing $\Sym(N)$ by $\Gind$ throughout.
\begin{enumerate}
\item \emph{Randomize the instance.} The verifier picks $r\in\{1,2\}$
and uniform $a\in\{0,1\}^n$, and forms $g:=f_r(\cdot\oplus a)$: a purely
syntactic rewrite, no oracle needed. (Group-independent step.)
\item \emph{The naive protocol leaks $r$.} Sending $g$ directly can leak
$r$ through syntactic invariants unrelated to $a$ (gate count, for
formulas; here, any statistic of the shifted circuit that a bare shift
does not equalize between $f_1$ and $f_2$). (Group-independent
diagnosis, present for either group.)
\item \emph{Launder $g$ through a semantically canonical randomized
oracle.} Bshouty, Cleve, Gavald\`a, Kannan, and
Tamon~\citeyearonly{bshouty1995oracles} give a probabilistic
polynomial-time algorithm, with an $\mathsf{NP}$ oracle, that on any
Boolean circuit outputs, with probability at least $2/3$, an equivalent
circuit, and never outputs an inequivalent one. Its defining structural
property, stated explicitly by Agrawal and
Thierauf~\citeyearonly{agrawalthierauf2000formula} in their restatement of the
result, is stronger than equality of output distributions: the learner
touches its input only through semantically answered queries
(equivalence tests, and counterexamples extracted canonically through
the $\mathsf{NP}$ oracle), so \emph{on each fixed random path the
output is identical across all representatives of the input's
equivalence class}, in their words ``on each random path the output
remains the same on any $F'\in[F]$.'' Agrawal and Thierauf also state
the circuit analogue explicitly (their Section~5: Bshouty et al.\ show
the analog result for circuits, from which the interactive proof adapts
to circuit isomorphism). This is the one step specific to the object
type (Boolean circuits), not to the group.
\item \emph{Correctness is a pure orbit fact.} By
Lemma~\ref{lem:orbit-uniformity}, if $f_1$ and $f_2$ are isomorphic the
laundered message's distribution does not depend on $r$, so the prover
guesses $r$ correctly with probability at most $1/2$; if they are not,
the orbits are disjoint and an unbounded prover identifies the source
whenever the launderer succeeds. The launderer's abort, of probability
at most $1/3$ and independent of $r$, is visible to the verifier, who
rejects outright on it: completeness is then at least $2/3$ against
soundness at most $1/2$, and a threshold over parallel repetitions
restores a $2/3$-versus-$1/3$ gap before the conversion of the next
step. (Group-independent given step 3's semantic-only dependence.)
\item \emph{Collapse rounds.} Goldwasser--Sipser public-coin
conversion~\citeyearonly{goldwassersipser1986} and Babai's collapse of
constant-round $\mathsf{AM}$ to one round~\citeyearonly{babaimoran1988} both
hold relative to an $\mathsf{NP}$ oracle, giving
$\overline{\VALISO}\in\AM^{\mathsf{NP}}$. (Group-independent, standard.)
\item \emph{Apply the barrier.} $\AM^{\mathsf{NP}}=\mathsf{BP}\cdot
\SigmaTwoP$, and Sch\"oning's theorem that no $\Pi_2^{\mathsf p}$-complete
set lies in $\mathsf{BP}\cdot\SigmaTwoP$ unless $\PH$ collapses, applied
to $\overline{\VALISO}$, gives the claim.
\end{enumerate}
\end{proof}

Only step 3 is specific to Boolean circuits rather than generic to any
group action, and it is exactly the step Agrawal and Thierauf state as
holding for circuits directly, not only formulas, so no separate
formula-to-circuit translation is needed. Step 6 is a black box we do
not re-derive, taken at the same confidence level Agrawal and Thierauf
themselves take it: their result depends on
Sch\"oning's~\citeyearonly{schoning1989probabilistic} collapse barrier without
re-proving it.

\subsection{A collapse theorem for \NONTRIV{} itself}
\label{sec:complexity-theorem14}

Theorem~\ref{thm:val-iso-collapse} is about two circuits, not one; it
does not by itself say anything about $\NONTRIV$. Agrawal and Thierauf's
own route from the two-object case back to the one-object case (their
Boolean Automorphism problem, exactly our $\NONTRIV$ under $\Sym(N)$
instead of $\Gind$) uses a labeling gadget that pins one variable's
identity by giving it an equivalence class no other variable can match,
a technique that fundamentally needs $\Sym(N)$ to be a group that
permutes \emph{named} objects. $\Gind$ acting by coordinatewise
$\mathrm{XOR}$ has no such objects to name: it acts freely and
transitively on $\{0,1\}^n$ itself, not on any set of $n$ labeled
positions, and four direct attempts at transplanting the labeling
gadget each fail on the same point, confirmed on small instances rather
than left as an intuition (\Cref{sec:complexity-diagnosis}).

The following result instead adapts the \emph{other} proof Agrawal and
Thierauf point to but do not themselves use for Boolean isomorphism: a
direct, single-object statistical argument in the spirit of
Sch\"oning's original proof that graph isomorphism sits in
$\mathsf{AM}$~\citeyearonly{schoning1988low}. We work with the standard
Goldwasser--Sipser set-lower-bound formulation of that
technique~\citeyearonly{goldwassersipser1986} rather than Sch\"oning's original
presentation, cross-checked against Agrawal and Thierauf's own citation
of it.

\begin{definition}
Write $\mathrm{TRIV\text{-}AUT}$ for the complement of $\NONTRIV$.
\end{definition}

\begin{theorem}
\label{thm:nontriv-collapse}
Restricted to Boolean instances, $\mathrm{TRIV\text{-}AUT}$ lies in
$\AM^{\mathsf{NP}}=\mathsf{BP}\cdot\SigmaTwoP$.
Consequently, Boolean $\NONTRIV$ is not $\SigmaTwoP$-complete unless
$\PH=\Sigma_3^{\mathsf p}$.
\end{theorem}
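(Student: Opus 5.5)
The plan is to build a private-coin, two-message interactive proof for $\mathrm{TRIV\text{-}AUT}$ whose verifier has an $\mathsf{NP}$ oracle. I then reuse steps~3--6 of the proof of Theorem~\ref{thm:val-iso-collapse} essentially verbatim. The one-object replacement for Lemma~\ref{lem:orbit-uniformity} is a statement about recovering the shift rather than recovering the source index $r$. By Observation~\ref{obs:boolean-linear}, $L_0(f)=\AutX$ is a subspace, and $f(\cdot\oplus a)=f(\cdot\oplus a')$ iff $a\oplus a'\in L_0(f)$. So the map $a\mapsto f(\cdot\oplus a)$ is injective exactly when $L_0(f)=\{0\}$, and is otherwise $|L_0(f)|$-to-one with $|L_0(f)|\geq2$ by Corollary~\ref{cor:power-of-two}.

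The protocol runs as follows. The verifier draws a uniform $a\in\{0,1\}^n$ and forms the shifted circuit $C(\cdot\oplus a)$ syntactically. To avoid the syntactic leakage of step~2, the verifier does not send this circuit itself. Instead it launders it through the Bshouty--Cleve--Gavald\`a--Kannan--Tamon learner using its $\mathsf{NP}$ oracle, with fresh random path $s$, and sends only the output circuit $g$. It rejects outright if the learner aborts. The prover must return $a$.

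\emph{Completeness.} If $\AutX$ is trivial, then $g$ determines the function $f(\cdot\oplus a)$, and that function determines $a$. An unbounded prover therefore answers correctly whenever the launderer does not abort, which happens with probability at least $2/3$.

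\emph{Soundness.} Suppose $L_0(f)\neq\{0\}$. By the path-wise semantic canonicity quoted in step~3, $g$ depends only on $s$ and the coset $a\oplus L_0(f)$. Conditioned on that coset, $a$ is uniform over it, so the prover's answer is correct with probability at most $1/|L_0(f)|\leq1/2$. Parallel repetition with a threshold then widens the $2/3$ versus $1/2$ gap to $2/3$ versus $1/3$, exactly as in step~4.

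For the conversion, I apply the Goldwasser--Sipser private-to-public-coin transformation and Babai's round collapse relative to the $\mathsf{NP}$ oracle, as in step~5. The oracle is needed both to run the learner and to check the prover's answer. Checking the answer means deciding whether $C(\cdot\oplus a')\equiv g$, which is a coNP query. The resulting public-coin protocol gives $\mathrm{TRIV\text{-}AUT}\in\AM^{\mathsf{NP}}=\mathsf{BP}\cdot\SigmaTwoP$.

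For the consequence, Theorem~\ref{thm:nontriv-sigma2p} puts $\NONTRIV$ in $\SigmaTwoP$. If it were $\SigmaTwoP$-complete, then $\mathrm{TRIV\text{-}AUT}$ would be a $\Pi_2^{\mathsf p}$-complete set lying in $\mathsf{BP}\cdot\SigmaTwoP$. Sch\"oning's barrier, as in step~6, then gives $\PH=\Sigma_3^{\mathsf p}$.

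I expect the main obstacle to be making the Goldwasser--Sipser conversion legitimate when the verifier's coins include the learner's path $s$. The set whose size is being lower-bounded is the set of pairs $(s,a)$ consistent with a message $g$, and membership in it must be verifiable within $\mathsf{P}^{\mathsf{NP}}$. That verification requires rerunning the learner on $C(\cdot\oplus a)$ along path $s$ and comparing outputs. This is deterministic given $s$ and the oracle answers, and it is exactly the circuit case that Agrawal and Thierauf state directly. I would check explicitly that the message-count ratio between the two cases stays at least $2$ after conditioning on non-abort, so that the GS hashing gap survives.

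A fallback would be to lower-bound the set $\{a: a\text{ lexicographically minimal in }a\oplus L_0(f)\}$ directly. I would not take that route, because membership in that set is $\Pi_2$ rather than $\mathsf{P}^{\mathsf{NP}}$-checkable, which is precisely why the laundering detour is needed.
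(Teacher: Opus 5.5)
Your argument is correct, but it takes a genuinely different route from the paper's.

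\textbf{Your route.} You build the private-coin ``recover the shift'' protocol, the analogue of the classical rigidity protocol for graphs. You then push it through the relativized Goldwasser--Sipser conversion and Babai's round collapse. This is exactly the template of Theorem~\ref{thm:val-iso-collapse}, with ``name the source $r$'' replaced by ``name the shift $a$''.

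\textbf{The paper's route.} The paper runs a Goldwasser--Sipser set lower bound directly, with public coins and in one round. Arthur sends a hash $h$ together with one fixed sequence of launderer tapes $R$, shared across all shifts. Merlin names a shift $a$. The verifier recomputes $\mathrm{canon}^*(f(\cdot\oplus a);R)$, the launderer's output on the first non-aborting tape, and accepts iff it hashes to $0^{n+1}$. The set of realizable representatives has size $2^n$ when $L_0(f)=\{0\}$ and at most $2^{n-1}$ otherwise. This gives acceptance rates $3/8$ versus $1/4$, separated by a 5-of-15 threshold.

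\textbf{What each buys.} The paper gets a one-round public-coin protocol with explicit constants and no conversion or collapse step. The price is that it needs the launderer's \emph{path-wise} canonicity, so that equivalent shifts merge on every fixed $R$. It also needs amplification plus a union bound so that all $2^n$ shifts launder successfully at once. Your route buys modularity. For soundness you need only that the \emph{distribution} of $g$ depends on the function $f(\cdot\oplus a)$, that is, on the coset $a\oplus L_0(f)$. That is weaker than the path-wise property you invoke. The quantitative hashing is delegated to the generic private-to-public conversion. So the ``main obstacle'' you anticipate, checking message-count ratios after conditioning on non-abort, is already handled by that black box once your private-coin protocol has a $2/3$ versus $1/3$ gap.

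\textbf{One sentence must be corrected.} The verifier accepts iff the prover's answer $a'$ equals its own $a$, not iff $C(\cdot\oplus a')\equiv g$. The equivalence test is passed by every $a'\in a\oplus L_0(f)$. Under it, a cheating prover would win with probability $1$ whenever $L_0(f)\neq\{0\}$, and soundness would collapse. Your soundness paragraph already uses the correct predicate $a'=a$, so the error is local.

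The $\mathsf{NP}$ oracle is needed only to run the learner, and to rerun it inside the Goldwasser--Sipser simulation. Membership of a coin string $(a,s)$ in the accepting set for a transcript $(g,a')$ is checked by two tests, both in $\mathsf{P}^{\mathsf{NP}}$:
\begin{itemize}
\item $a=a'$;
\item syntactic equality between $g$ and the learner's output on $C(\cdot\oplus a)$ along path $s$.
\end{itemize}
No coNP equivalence query is involved.
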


\begin{proof}
Write $\mathrm{Orbit}(f):=\{f(\cdot\oplus a):a\in\{0,1\}^n\}$; by
orbit-stabilizer, $|\mathrm{Orbit}(f)|=2^n/|L_0(f)|$, and by
Corollary~\ref{cor:power-of-two} this is always a power of $2$.
$\mathrm{TRIV\text{-}AUT}(f)$ holds exactly when $|\mathrm{Orbit}(f)|=2^n$,
the maximum possible value, so the claim reduces to a Goldwasser--Sipser
set lower bound protocol on $\mathrm{Orbit}(f)$ with threshold $K=2^n$,
using $L_0(f)$'s power-of-two structure to avoid the usual promise gap.
The obstacle is representational: orbit elements are functions with
$2^n$-bit truth tables, and a syntactically shifted circuit is only a
polynomial-size handle on one, so a naive protocol would have to hash
syntax rather than semantics, reproducing exactly the gap between
circuit equivalence and circuit equality that makes \DEADVAR{} coNP-hard
in the first place.

The fix reuses the Bshouty et al. laundering black box from
Theorem~\ref{thm:val-iso-collapse}'s step 3, whose defining guarantee is
stronger than mere high accuracy: on a Boolean circuit $g$ and a random
tape, it \emph{never outputs a circuit that is not equivalent to $g$},
and only occasionally aborts instead of succeeding~\cite{bshouty1995oracles}.
Fix once and for all a sequence of $t=\lceil2n/\log_23\rceil$ random
tapes $R=(r_1,\dots,r_t)$, shared across every candidate shift $a$ rather
than redrawn per shift, and define $\mathrm{canon}^*(g;R)$ as the output
of the launderer on the \emph{first} $r_i$ (in the fixed order
$r_1,\dots,r_t$) that does not abort, or ``abort'' if all $t$ do. Two
properties of this fixed-tape construction carry the argument. First,
\emph{same-class merging}. This needs, and gets, more than equality of
output distributions: it needs the per-random-path invariance that
Agrawal and Thierauf state for the launderer (``on each random path the
output remains the same on any
$F'\in[F]$''~\citeyearonly{agrawalthierauf2000formula}), which holds because the
launderer reads its input only through semantic queries, with
counterexamples extracted by a canonical rule (lexicographically least,
via prefix search on the $\mathsf{NP}$ oracle) so that even the oracle's
answers depend on the function, never the circuit. On a fixed tape
$r_i$ the launderer's entire execution is then a function of the input's
equivalence class, and so is its abort-or-output outcome; hence for any
two shifts $a,a'$ with $f(\cdot\oplus a)=f(\cdot\oplus a')$, the
first-successful tape index agrees and
$\mathrm{canon}^*(f(\cdot\oplus a);R)=\mathrm{canon}^*(f(\cdot\oplus
a');R)$ for every fixed $R$, so the number of distinct representatives
produced as $a$ ranges over $\{0,1\}^n$ is at most $|\mathrm{Orbit}(f)|$.
(The verification scripts for this section implement the launderer with
exactly this semantics, keyed on the truth table rather than the
circuit; see \Cref{app:verification}.)
Second, \emph{no false merging across classes}: because the launderer
never outputs a circuit inequivalent to its input, two shifts landing in
genuinely different classes can never launder (on any tape) to the same
representative. By geometric amplification over the per-tape $\geq2/3$
success rate, $\mathrm{canon}^*(g;R)$ aborts with probability at most
$2^{-2n}$ over the random choice of $R$; a union bound over the $2^n$
possible shifts bounds the probability that laundering some shift of $f$
aborts by $2^{-n}$.

The Goldwasser--Sipser instantiation is fully explicit. Let $s(n)$ be
the polynomial bound on the launderer's output size, fix a binary
encoding of circuits padded to exactly $L_c:=s(n)$ bits, and let
$W\subseteq\{0,1\}^{L_c}$ be the set of representatives realizable as
$\mathrm{canon}^*(f(\cdot\oplus a);R)$ as $a$ ranges over the $2^n$
shifts. Arthur draws $h$ uniformly from the affine family
$h(x)=Ax\oplus b$ with $A\in\Ftwo^{m\times L_c}$, $b\in\Ftwo^m$,
pairwise independent, at output length $m:=n+1$, and sends $(h,R)$;
Merlin returns a shift $a$; the verifier recomputes
$\mathrm{canon}^*(f(\cdot\oplus a);R)$ itself, with the same fixed
tapes and first-success rule (deterministic polynomial time, given the
$\mathsf{NP}$ oracle answering the launderer's equivalence tests and
lexicographically-least counterexample queries), and accepts iff the
result hashes to $0^m$. \emph{Soundness}: same-class merging caps
$|W|\leq|\mathrm{Orbit}(f)|$, and when $\mathrm{TRIV\text{-}AUT}(f)$
fails, $|\mathrm{Orbit}(f)|\leq2^{n-1}$ by the power-of-two law, so a
union bound gives acceptance probability at most
$|W|/2^m\leq2^{n-1}/2^{n+1}=1/4$, and this bound holds conditionally
on \emph{every} fixed $R$, since same-class merging holds for each $R$
separately. \emph{Completeness}: when $\mathrm{TRIV\text{-}AUT}(f)$
holds, all $2^n$ shifts lie in distinct classes; condition on the
event that laundering succeeds for every one of the $2^n$ shifts,
which fails with probability at most $2^{-n}$ over $R$. Conditional on
any such good $R$, no false merging gives $|W|=2^n$, and by
inclusion--exclusion under pairwise independence of the hash, the
acceptance probability is at least
$|W|/2^m-|W|^2/(2\cdot2^{2m})=1/2-1/8=3/8$. The $3/8$-versus-$1/4$
gap is a constant, and it is amplified by a threshold test between the
two rates, not by majority (both rates sit below $1/2$, so a majority
rule would drive completeness toward $0$): run $15$ parallel
challenges inside the same round, each drawing an independent public
hash $h_j$ alongside the single shared $R$, with Merlin answering each
challenge separately, and accept iff at least $5$ succeed. The
challenges share $R$, so the accept events are independent only
conditionally on $R$, and the amplification is computed that way. In
the completeness case, conditional on a good $R$ the set $W$ is fixed,
each accept event is a function of its own $h_j$ alone, and the
binomial tail gives overall acceptance probability at least
$(1-2^{-n})\Pr[\mathrm{Bin}(15,3/8)\geq5]\geq0.708$ for every $n\geq6$
(approaching $0.720$). In the soundness case the per-challenge bound
of $1/4$ holds conditionally on every $R$, so the tail
$\Pr[\mathrm{Bin}(15,1/4)\geq5]\leq0.314$ bounds acceptance
unconditionally. This clears $2/3$ versus $1/3$ while keeping the
protocol one-round and public-coin; for $n\leq5$ the verifier decides
$\mathrm{TRIV\text{-}AUT}$ outright by enumerating the $2^n$ shifts on
the $2^n$-entry truth table, so nothing is left to prove there.
Every verifier step other than the launderer's
internal queries is oracle-free; the launderer's internal queries
(near-uniform sampling within an equivalence class, and finding a
counterexample when a candidate is wrong) are each answerable with a
polynomial number of adaptive $\mathsf{NP}$ queries, so the whole
verifier is a $\mathsf{P}^{\mathsf{NP}}$ machine, giving a one-round,
public-coin protocol with an $\mathsf{NP}$-oracle verifier,
$\mathrm{TRIV\text{-}AUT}\in\mathrm{IP}[1]^{\mathsf{NP}}=\AM^{\mathsf{NP}}$.
Unlike Theorem~\ref{thm:val-iso-collapse}, this protocol never needs to
hide which of two objects is which, so it is public-coin and one-round
from the start, with no separate collapse step. The identity
$\AM^{\mathsf{NP}}=\mathsf{BP}\cdot\SigmaTwoP$ and Sch\"oning's barrier,
exactly as in Theorem~\ref{thm:val-iso-collapse}'s step 6, complete the
argument.
\end{proof}

\begin{remark}
This result has the same shape and the same limits as Agrawal and
Thierauf's for Boolean isomorphism: it rules
out one specific location in the hierarchy of possible answers
(complete for the second level) under a standard hypothesis, without
determining which of the remaining locations, from coNP-complete
upward, is correct. It says nothing about whether $\NONTRIV$ is
coNP-complete, and nothing about $\DEADVAR$, whose complexity
(Theorem~\ref{thm:deadvar-conp-complete}) is unaffected either way.
It is also a statement about the Boolean fragment, where the
$\Ftwo$-structure the protocol leans on lives; that fragment is where
every hardness construction in this section already operates, and the
general-domain problem's exact ceiling is part of the same open
question.
\end{remark}

The fixed-tape $\mathrm{canon}^*$ construction was checked separately
from the full protocol, and the full protocol was then checked end to
end on real and synthetic instances, including a hand-built copy of
BDD-OIA's absorption structure. Computationally verified; see
\Cref{app:verification}.

Between the randomized lower bound and the collapse ceiling sits the
section's inheritance to future work, and it is handed over mapped, not
merely left. \Cref{sec:complexity-diagnosis} distills a systematic
sweep of the standard toolkits, direct encodings, interactive
protocols, and linear algebra, into three structural facts any approach
to the gap must clear: $\AutX$ is always a subgroup where a
generic $\SigmaTwoP$ witness set is not; $\Gind$ acts on assignments
without naming any objects (freely so, in the Boolean fragment), which starves every label-and-tag
technique; and any access to $L_0(f)$ stronger than a pointwise oracle
is already as hard to obtain as $\NONTRIV$ itself.
Theorem~\ref{thm:nontriv-collapse}'s laundering protocol succeeds
precisely by needing none of the three.

\subsection{Deterministic completeness on monotone circuits}
\label{sec:complexity-monotone}

The gap between randomized and deterministic hardness is not uniform
across circuit classes. Section~\ref{sec:complexity-nontriv} traced the
failure of every deterministic gadget to a single phenomenon: the
source formula $\chi$ can carry an accidental internal XOR symmetry,
invisible to any fixed transformation, that hands the constructed
instance a nontrivial automorphism unrelated to the question being
encoded. On monotone circuits that phenomenon cannot occur at all.
The object $L_0(f)$ is classical in cryptography under the name
\emph{linear structures} of $f$~\cite{meierstaffelbach1990,carlet2021},
where a large linear space signals a weak S-box. The following
characterization for monotone functions appears to be new despite three
decades of work on linear structures, and it is the keystone of this
subsection: it collapses automorphism existence, a $\SigmaTwoP$-shaped
question in general, to dead-variable detection.

\begin{lemma}[Monotone functions cannot camouflage]
\label[lemma]{lem:monotone-no-camouflage}
Let $f:\{0,1\}^n\to\{0,1\}$ be monotone. Then $L_0(f)$ is exactly the
set of shift vectors supported on $f$'s dead coordinates. In
particular, a monotone $f$ with every coordinate alive has
$L_0(f)=\{0\}$.
\end{lemma}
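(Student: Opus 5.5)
The plan is to prove the two inclusions separately, and the substantive one reduces to a one-line sandwich argument on subcubes.

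\emph{Easy inclusion.} If $i$ is dead, then $\mathrm{flip}_i\in L_0(f)$ by \Cref{def:dead-var}. By \Cref{obs:boolean-linear}, $L_0(f)$ is an $\Ftwo$-subspace, so it contains the XOR of any collection of such flips. Every shift vector supported on dead coordinates is such an XOR, so it lies in $L_0(f)$. Monotonicity is not used here.

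\emph{Substantive inclusion.} Take $a\in L_0(f)$ and write $A:=\mathrm{supp}(a)$. I will show that \emph{every} coordinate in $A$ is dead, which gives the lemma, and in particular $L_0(f)=\{0\}$ when all coordinates are alive.

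Fix any assignment $w$ to the coordinates outside $A$. Define $g:\{0,1\}^A\to\{0,1\}$ by $g(y):=f(w,y)$. The function $g$ is monotone because it is a restriction of a monotone function. It is also invariant under complementing all of its inputs, because shifting by $a$ leaves $w$ unchanged and complements every coordinate in $A$, and $f(x\oplus a)=f(x)$.

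Taking $y=0^A$ gives $g(0^A)=g(1^A)$. Monotonicity then gives, for every $y$,
\[
g(0^A)\le g(y)\le g(1^A)=g(0^A),
\]
so $g$ is constant. Since $w$ was arbitrary, $f$ does not depend on any coordinate in $A$. Hence each $i\in A$ satisfies $f(c)=f(\mathrm{flip}_i(c))$ for all $c$, that is, each $i\in A$ is dead.

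\emph{Main obstacle.} The only step needing care is the observation that $a$ acts on the subcube $\{w\}\times\{0,1\}^A$ as full complementation. This is exactly where camouflage is excluded. A nonmonotone $f$, for instance parity on $A$, can be invariant under complementation without being constant on the subcube. For a monotone $f$, complementation swaps the bottom and top of the subcube, and the sandwich above forces the function to be flat between them. I expect no further difficulty, so the proof should fit in a short paragraph after the two inclusions are stated.
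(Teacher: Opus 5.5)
Your proof is correct and is essentially the paper's argument: the paper's points $x\wedge\bar a$ and $(x\wedge\bar a)\vee a$ are exactly the bottom $(w,0^A)$ and top $(w,1^A)$ of your subcube, and the same monotone sandwich between them forces $f$ to be independent of every coordinate in $\mathrm{supp}(a)$. For the easy inclusion, your appeal to the subspace property of $L_0(f)$ just makes explicit what the paper states in one line.
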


\begin{proof}
Shifts supported on dead coordinates never change the value of $f$, so
they lie in $L_0(f)$. Conversely let $a\in L_0(f)$ and let $x$ be
arbitrary. The points $x\wedge\bar a$ and $(x\wedge\bar a)\oplus a$
are related by the shift, and since $x\wedge\bar a$ and $a$ have
disjoint supports, $(x\wedge\bar a)\oplus a=(x\wedge\bar a)\vee a$.
Monotonicity now sandwiches:
\[
f(x\wedge\bar a)\;\leq\;f(x)\;\leq\;f(x\vee a)\;\leq\;
f\big((x\wedge\bar a)\vee a\big)\;=\;f(x\wedge\bar a),
\]
using $x\wedge\bar a\leq x\leq x\vee a\leq(x\wedge\bar a)\vee a$
coordinatewise and $a$-shift-invariance at the last step. So
$f(x)=f(x\wedge\bar a)$ for every $x$: $f$ does not depend on any
coordinate in $\mathrm{supp}(a)$, and every such coordinate is dead.
\end{proof}

\begin{theorem}[Deterministic completeness on monotone circuits]
\label{thm:monotone-nontriv}
Restricted to monotone circuits, \NONTRIV{} is coNP-complete under
deterministic polynomial-time many-one reductions.
\end{theorem}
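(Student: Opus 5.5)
The plan is to use Lemma~\ref{lem:monotone-no-camouflage} to turn \NONTRIV{} on monotone circuits into a question about dead coordinates. Then I prove membership and hardness separately. Throughout, monotone circuits are Boolean ($S_i=\{0,1\}$, gates $\wedge,\vee$ only), so $\AutX=L_0(f)$. By the lemma, for monotone $f$ we have $L_0(f)\neq\{0\}$ if and only if at least one coordinate of $f$ is dead. The constant-$0$ case $\Phic=\emptyset$ is consistent with this, since there every coordinate is dead. So monotone \NONTRIV{} is exactly the problem ``does $f$ have a dead coordinate''.

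\emph{Membership in coNP.} The complement says that every coordinate is alive. An NP certificate for it is $n$ points $x^{(1)},\dots,x^{(n)}$ with $C(x^{(i)})\neq C(x^{(i)}\oplus e_i)$, checked in $2n$ circuit evaluations. Soundness and completeness of this certificate are immediate from the lemma's characterization.

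\emph{Hardness, deterministic.} I reduce \textsc{unsat} to monotone \NONTRIV{} with a dual-rail encoding, designed so that every coordinate except one designated $x$ is alive unconditionally. This is the monotone analogue of Proposition~\ref{prop:dead-var-oracle}, and it is what removes the need for Valiant--Vazirani isolation. Given a CNF $\chi(z_1,\dots,z_k)$, introduce monotone positions $p_j,q_j$, read as ``$z_j$ true'' and ``$z_j$ false''. Let $\chi^+(p,q)$ be $\chi$ with each positive literal $z_j$ replaced by $p_j$ and each negative literal $\neg z_j$ replaced by $q_j$; this is a monotone formula. Output the circuit
\[
f(x,p,q)\;:=\;\bigl(x\wedge\chi^+(p,q)\bigr)\;\vee\;\textstyle\bigvee_{j}(p_j\wedge q_j).
\]
It is monotone and of polynomial size.

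The key steps, in order, are as follows.
\begin{enumerate}
\item[(i)] Each $p_j$ and $q_j$ is alive, independently of $\chi$. At $x=0$, $q_j=1$ and all other positions $0$, $f$ equals $p_j$; symmetrically for $q_j$.
\item[(ii)] $x$ is dead if and only if $\chi^+(p,q)$ implies $\bigvee_j(p_j\wedge q_j)$ pointwise. This holds because $f(0,\cdot)=T:=\bigvee_j(p_j\wedge q_j)$ and $f(1,\cdot)=\chi^+\vee T$.
\item[(iii)] That implication holds if and only if $\chi$ is unsatisfiable. If $\chi^+(p,q)=1$ with no conflicting pair $p_j=q_j=1$, set $z_j:=p_j$. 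Every literal that $\chi^+$ used as true is then true in $z$: if $p_j=1$ then $z_j=1$, and if $q_j=1$ then $p_j=0$ and $z_j=0$. So $z$ satisfies $\chi$. Conversely, a satisfying $z$ yields the conflict-free pair $p=z$, $q=\bar z$ with $\chi^+(p,q)=1$.
\end{enumerate}
Combining (i)--(iii) with the lemma, $\NONTRIV(f)$ holds if and only if $x$ is dead, if and only if $\chi$ is unsatisfiable. The reduction is deterministic.

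The only step needing care is (iii), the soundness of decoding a monotone satisfying pair into a genuine assignment. It is exactly where the conflict term $T$ earns its place, since it absorbs every inconsistent dual-rail input. Step (i) is where monotonicity pays off: in the general case an internal XOR symmetry of $\chi$ could create a spurious automorphism, but the lemma rules that out once every other coordinate is certified alive.
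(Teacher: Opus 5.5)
Your proof is correct and follows the paper's argument: membership comes from Lemma~\ref{lem:monotone-no-camouflage} with one liveness witness per coordinate, and hardness uses the same dual-rail gadget, whose conflict term $\bigvee_j(p_j\wedge q_j)$ keeps every rail alive unconditionally so that only $x$ can be dead. The one difference is that the paper also conjoins $A=\bigwedge_j(y_j\vee y'_j)$ to force an exact assignment; you rightly omit it, because $\chi^+$ is monotone, so a variable with both rails off can be decoded arbitrarily, which your rule $z_j:=p_j$ already does.
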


\begin{proof}
\emph{Membership.} By Lemma~\ref{lem:monotone-no-camouflage},
$\NONTRIV(f)$ holds for monotone $f$ iff some coordinate is dead. The
complement asks that every coordinate be alive, which has a polynomial
witness: one point $x^{(i)}$ per coordinate with
$f(x^{(i)})\neq f(x^{(i)}\oplus e_i)$. Hence the restricted problem is
in coNP, already below the general $\SigmaTwoP$ upper bound of
Theorem~\ref{thm:nontriv-sigma2p}.

\emph{Hardness.} Reduce from \textsc{unsat}. Given a 3-CNF
$\varphi(z_1,\dots,z_m)$, introduce two rails $y_j,y'_j$ per variable
and one further position $x_*$. Let $\widetilde\varphi$ be $\varphi$
with each positive literal $z_j$ replaced by $y_j$ and each negated
literal $\neg z_j$ by $y'_j$ (a monotone formula), and set
\[
A:=\bigwedge_{j}(y_j\vee y'_j),\qquad
B:=\bigvee_{j}(y_j\wedge y'_j),\qquad
h:=\big(x_*\wedge\widetilde\varphi\wedge A\big)\vee B,
\]
a monotone circuit. Every rail is alive in $h$ whatever $\varphi$ is:
at the point with $y'_j=1$, all other inputs $0$, flipping $y_j$ flips
$B$ and hence $h$. And $x_*$ is dead iff
$\widetilde\varphi\wedge A\leq B$ pointwise. If some point violates
that implication, then by $A$ each variable has at least one true rail
and by $\neg B$ none has two, so the rails encode an exact assignment
$z:=y$, and $\widetilde\varphi=1$ says $z$ satisfies $\varphi$;
conversely a satisfying $z$ gives the violating point
$(y,y')=(z,\bar z)$. So $x_*$ is dead iff $\varphi$ is unsatisfiable.
By Lemma~\ref{lem:monotone-no-camouflage}, $h$ has a nontrivial
automorphism iff it has a dead coordinate, iff $x_*$ is dead (the rails
never are), iff $\varphi$ is unsatisfiable. The map
$\varphi\mapsto h$ is deterministic and polynomial-time.
\end{proof}

Theorem~\ref{thm:monotone-nontriv} is the counterpoint to this
section's central gap: the obstruction that forces
Corollary~\ref{cor:nontriv-conp-hard} to randomize is precisely the
camouflage that Lemma~\ref{lem:monotone-no-camouflage} rules out, so on
the largest natural circuit class that forbids it, the existence
question's complexity is settled outright, with no randomness, no
isolation, and no promise. Read together with the diagnosis of
\Cref{sec:complexity-diagnosis}, this
locates the general case's remaining gap exactly: it is a question
about non-monotone camouflage, not about automorphism existence per se.
Both the lemma and both directions of the reduction were checked
computationally before being trusted as proofs. Computationally
verified; see \Cref{app:verification}.
\section{The Shortcut Geography of Trained Models}\label{sec:realmodels}

Every measurement so far is symbolic: $\Phic$ and $\AutX$ are computed from a rule's combinatorial structure, never from a trained network's actual behavior. This section closes the loop. Train real neurosymbolic models under ordinary weak supervision, so that a genuine reasoning shortcut in the sense of Definition~\ref{def:general-csp} can actually occur. When it occurs, does it land in the same $\AutX$-orbit as the ground truth, the case this paper's algebra explains, or a different orbit, the case it does not?

\paragraph{Design.} The experiment trains its models from scratch, by
design. rsbench's released artifacts \cite{rsbench2024} (Zenodo record
11612556) provide images, embeddings, and generator configs but no
trained checkpoints or per-model concept predictions, and end-to-end
control over training is what makes the comparison below exact: one
fixed image pool, one uncurated pairing scheme, one held-out yardstick
shared by every model.
MNIST arithmetic is the family where the theory-to-training comparison
can be made exact end to end: full-domain orbit ground truth is
computable for every target level, the exact semantic loss is available
in closed form, and no curated split intervenes between the rule and
the optimizer. (rsbench's released Kandinsky package,
\texttt{kand-logic-3k}, encodes a composite three-figure rule distinct
from the single six-symbol triple \Cref{tab:three-families} measures,
so it poses a different symbolic object, not a rendered version of this
paper's.)

\subsection{Setup: weakly-supervised MNIST-Addition and MNIST-Product}
\label{sec:realmodels-setup}

We train the standard DeepProbLog-style MNIST-Addition model \cite{manhaeve2018deepproblog,rsbench2024}: a single small CNN perception network, shared across both digit positions, maps each image to a categorical distribution over $\{0,\dots,9\}$. No digit-level label is ever provided. Training supervises only the composed label $y = c_1 \,\mathrm{op}\, c_2$, via the exact semantic loss $P(Y{=}y\mid x_1,x_2) = \sum_{(c_1,c_2):\,\mathrm{op}(c_1,c_2)=y} p_1(c_1)\,p_2(c_2)$ \cite{xu2018semantic}, for the same two rsbench MNAdd rules already measured symbolically in \Cref{tab:eight-families}: \texttt{sum} and \texttt{product}. We deliberately do not restrict which digit combinations appear in training, unlike rsbench's own shortcut-inducing \texttt{shortcutmnist} splits \cite{rsbench2024}; any shortcut reported below survives standard, uncurated random pairing.

Ten random seeds (0--9, exceeding the pre-specified minimum of five) each train two independent models, one per task, on a training pool of 1{,}500 MNIST images per class and 6{,}000 random training pairs, for 20 epochs (Adam, $\mathrm{lr}=10^{-3}$; training-set label accuracy exceeds $99.8\%$ for every run). All ten seeds share one fixed image pool, and each run draws its own 6{,}000 random training pairs from it, so cross-seed variation combines pair sampling, weight initialization, and minibatch order over identical image availability.

Evaluation uses a single fixed, stratified set of 2{,}000 held-out instances, 20 for each of the 100 true digit pairs $(d_1,d_2)$, shared across all 20 trained models so every comparison uses the same yardstick. For each instance we compare the model's argmax prediction $(p_1,p_2)$ against the truth and classify it as \textsc{correct} ($(p_1,p_2){=}(d_1,d_2)$), a same-orbit shortcut (label-preserving, $(p_1,p_2)\ne(d_1,d_2)$, and in the same $\Aut(\PhiY{y^*})$-orbit as the truth), a different-orbit shortcut (label-preserving but in a different orbit), or \textsc{invalid} (the composed label itself is wrong, not a shortcut in Definition~\ref{def:general-csp}'s sense at all, the ``third case'' a real model can also produce). Each instance's classification is against its own level's fiber group, the finest rung that is sound for a single evaluation; \Cref{prop:fiber-intersection} locates this granularity on the object ladder, and makes the choice forced rather than convenient, since the arithmetic rules' global groups are trivial.

Orbit ground truth is recomputed independently in this experiment rather than trusted from \Cref{tab:eight-families}'s source file, which records only orbit sizes, not the element-level membership a specific model prediction needs. We reuse the same validated \texttt{indep\_autgrp\_via\_nauty} engine from \Cref{sec:empirical-setup} to recover the full orbit partition for every \texttt{sum} and \texttt{product} target level, and recover exactly the prior aggregate numbers as a consistency check: all 17 \texttt{sum} levels transitive, 31 of 32 \texttt{product} levels transitive, and $y^*{=}0$ the sole exception, with orbit sizes $[9,9,1]$ matching $\{(0,0)\}$, $\{(0,k){:}k{\ne}0\}$, and $\{(k,0){:}k{\ne}0\}$ exactly as reported for that target in \Cref{sec:empirical-extension}.
A counting convention, fixed here once: the two tasks reach $19+37=56$
target values in total, but seven of them ($y{\in}\{0,18\}$ for
\texttt{sum}; $\{1,25,49,64,81\}$ for \texttt{product}) are
\emph{singleton} levels whose $\PhiY{y}$ contains exactly one concept
pair, so no alternative solution exists, and transitivity and shortcuts
are both vacuous there. All level counts below refer to the 49
non-singleton levels (17 \texttt{sum} $+$ 32 \texttt{product}) that
carry content.

\subsection{Results: predicted versus observed shortcut locations}
\label{sec:realmodels-results}

\begin{table}[t]
\centering
\small
\begin{tabular}{lrrrrc}
\toprule
Task & Correct & Same-orbit & Diff-orbit & Invalid & Seeds \\
\midrule
\texttt{sum} & 19,353 (96.77\%) & 0 (0.00\%) & 0 (0.00\%) & 647 (3.24\%) & 0/10 \\
\texttt{product} & 19,398 (96.99\%) & 66 (0.33\%) & 28 (0.14\%) & 508 (2.54\%) & 10/10 \\
\midrule
\texttt{sum} (dual-head) & 19,312 (96.56\%) & 1 (0.005\%) & 0 (0.00\%) & 687 (3.44\%) & 1/10 \\
\texttt{product} (dual-head) & 19,264 (96.32\%) & 72 (0.36\%) & 29 (0.15\%) & 635 (3.18\%) & 10/10 \\
\bottomrule
\end{tabular}
\caption{Ten-seed aggregates over 2{,}000 held-out instances per model, 20,000 model-instance evaluations per row; the last column counts seeds producing at least one label-preserving shortcut. Top: the standard shared perception network (typed rung realizable). Bottom: the dual-head control of \Cref{sec:realmodels-threelayers}, one independent network per digit position (componentwise rung realizable). ``Same-orbit'' and ``diff-orbit'' are label-preserving shortcuts (Definition~\ref{def:general-csp}'s $\Phic$ membership) that do or do not share $\Aut(\PhiY{y^*})$'s orbit with the ground truth; ``invalid'' breaks the composed label and is not a shortcut at all.}
\label{tab:realmodel-aggregate}
\end{table}

\Cref{tab:realmodel-aggregate} reports both architectures' aggregates. \texttt{sum} produces zero label-preserving shortcuts, of either kind, in 20{,}000 evaluated instances across all ten seeds: every residual error breaks the composed label outright. This is not automorphism trivially explaining every shortcut; there are none to explain. \texttt{product} produces 94, and every one of them occurs at $y^*{=}0$, the single level \Cref{sec:empirical-extension} already flags as $\AutX$'s one non-transitive \texttt{product} outcome. Across the other 48 non-singleton levels combined (\texttt{sum}'s 17 plus \texttt{product}'s remaining 31, all transitive under $\AutX$), zero label-preserving shortcuts were observed, over the 34{,}800 of the experiment's 40{,}000 model-instance evaluations whose true target lies at those levels (the remainder: 3{,}800 evaluations at \texttt{product}'s $y^*{=}0$ and 1{,}400 at the seven vacuous singleton levels).
The correspondence between where $\AutX$'s symbolic computation predicts non-trivial orbit structure and where a real trained model's shortcuts actually appear is exact, not approximate, on the 20 models trained here. \Cref{fig:realmodel-shortcuts} plots this correspondence directly across all 49 non-singleton target levels, and \Cref{sec:realmodels-threelayers} shows the exactness is a stronger finding than it first appears, by playing the typed reading of \Cref{def:typed-autx} against it.

\begin{figure}[t]
\centering
%
%
\definecolor{figblue}{HTML}{0072B2}
\definecolor{figverm}{HTML}{D55E00}
\tikzset{
  fscell/.style={draw=figmid, thin, fill=figpale},
  fscellflag/.style={draw=figverm!75!black, thin, fill=figverm},
  fsbarsame/.style={draw=figblue!75!black, thin, fill=figblue!70},
  fsbardiff/.style={draw=figverm!75!black, thin, fill=figverm!12,
    pattern=north east lines, pattern color=figverm!80!black},
  fstick/.style={font=\scriptsize, align=center},
  fsrowcap/.style={font=\footnotesize, align=center},
  fsrowlab/.style={font=\footnotesize\ttfamily, anchor=east},
  fslegend/.style={font=\scriptsize, anchor=west},
  fsguide/.style={draw=figmid, thin, dashed},
  fstri/.style={draw=figdark, thin, fill=white}
}

\begin{tikzpicture}[x=1cm, y=1cm]
  \def\cellw{0.26}
  \def\pitch{0.34}
  \def\hbase{0.20}

  \foreach \i in {0,...,16} {
    \pgfmathsetmacro{\x}{\i*\pitch}
    \draw[fscell] (\x,0) rectangle ++(\cellw,\hbase);
  }
  \node[fstick, anchor=north] at (0*\pitch+0.13, -0.05) {$y{=}1$};
  \node[fstick, anchor=north] at (16*\pitch+0.13, -0.05) {$y{=}17$};
  \foreach \i in {1,3,5,7,9,11,13,15} {
    \pgfmathsetmacro{\cx}{\i*\pitch+0.13}
    \draw[fstri] (\cx-0.055,\hbase+0.145) -- (\cx+0.055,\hbase+0.145)
      -- (\cx,\hbase+0.055) -- cycle;
  }
  \node[fsrowlab] at (-0.35, \hbase*0.5) {sum};
  \node[fsrowcap, anchor=west] at (16*\pitch+\cellw+0.45, \hbase*0.5)
    {17/17 levels transitive, \ 0 shortcuts};

  \begin{scope}[yshift=-3.15cm]
    \draw[fscellflag] (0,0) rectangle ++(\cellw,\hbase);
    \draw[fsbarsame]  (0,\hbase) rectangle ++(\cellw,1.30);
    \draw[fsbardiff]  (0,\hbase+1.30) rectangle ++(\cellw,0.55);

    \foreach \i in {1,...,31} {
      \pgfmathsetmacro{\x}{\i*\pitch}
      \draw[fscell] (\x,0) rectangle ++(\cellw,\hbase);
    }

    \draw[fsguide] (-0.28,\hbase+1.85) -- (31*\pitch+\cellw+0.12,\hbase+1.85);
    \node[fstick, anchor=east] at (-0.34,\hbase+1.85) {94};
    \node[fstick, anchor=east] at (-0.34,0) {0};
    \node[fsrowlab] at (-0.92, \hbase*0.5) {product};

    \draw[-{Stealth[length=1.3mm]}, draw=figblack, thin]
      (0.13,-0.30) -- (0.13,-0.03);
    \node[fstick, anchor=north, font=\bfseries\scriptsize] at (0*\pitch+0.13, -0.32)
      {$y^{*}{=}0$};
    \node[fstick, anchor=north] at (31*\pitch+0.13, -0.05) {$y{=}72$};
    \foreach \i in {3,8,13,23} {
      \pgfmathsetmacro{\cx}{\i*\pitch+0.13}
      \draw[fstri] (\cx-0.055,\hbase+0.145) -- (\cx+0.055,\hbase+0.145)
        -- (\cx,\hbase+0.055) -- cycle;
    }

    \node[fsrowcap, anchor=north] at (5.60,-0.68)
      {1/32 levels non-transitive ($y^*{=}0$); \ all 94 shortcuts land there};

    \node[fslegend] at (1.55,-1.30)
      {\tikz[baseline=-0.5ex]{\fill[figpale, draw=figmid, thin] (0,0) rectangle (0.24,0.17);}~theory: transitive};
    \node[fslegend] at (5.85,-1.30)
      {\tikz[baseline=-0.5ex]{\fill[figverm, draw=figverm!75!black, thin] (0,0) rectangle (0.24,0.17);}~theory: non-transitive};
    \node[fslegend] at (1.55,-1.66)
      {\tikz[baseline=-0.5ex]{\fill[figblue!70, draw=figblue!75!black, thin] (0,0) rectangle (0.24,0.17);}~same-orbit (66, $70.2\%$)};
    \node[fslegend] at (5.85,-1.66)
      {\tikz[baseline=-0.5ex]{\fill[figverm!12, pattern=north east lines, pattern color=figverm!80!black, draw=figverm!75!black, thin] (0,0) rectangle (0.24,0.17);}~diff-orbit (28, $29.8\%$)};
    \node[fslegend] at (1.55,-2.02)
      {\tikz[baseline=-0.5ex]{\draw[figdark, thin, fill=white] (0,0.14) -- (0.11,0.14) -- (0.055,0.02) -- cycle;}~typed-ambiguous, no absorbing element (12 levels, 0 shortcuts)};
  \end{scope}

\end{tikzpicture}
\caption{Observed concept-level shortcuts of the shared-network arm
(bar height, stacked by same-/different-orbit) against the componentwise
transitivity of each of the 49 non-singleton target levels (\texttt{sum}: 17, top;
\texttt{product}: 32, bottom; dark cell background marks the one level
Section~\ref{sec:empirical-extension} proves componentwise-non-transitive;
open triangles mark the levels the typed reading
(Definition~\ref{def:typed-autx}) additionally flags as ambiguous, the
dark $y^*{=}0$ cell being the thirteenth). Every one of the 94 observed
shortcuts lands on the single level flagged by both readings and backed
by the absorbing element, split 66 same-orbit ($70.2\%$) and 28
different-orbit ($29.8\%$); the other 48 componentwise-transitive levels
produce zero over their 34{,}800 model-instance evaluations, including
all twelve typed-ambiguous levels without the absorbing element
(12{,}000 evaluations; \Cref{sec:realmodels-threelayers}).}
\label{fig:realmodel-shortcuts}
\end{figure}

Every one of the ten \texttt{product} seeds produces both same-orbit and different-orbit shortcuts (same-orbit counts 4--9 and diff-orbit counts 2--4 per seed), not just a pooled aggregate driven by one atypical run.
Pooled, $66/94{=}70.2\%$ of \texttt{product}'s shortcuts share the ground truth's orbit and $28/94{=}29.8\%$ do not; per seed, the diff-orbit share ranges from $20\%$ to $50\%$, never zero and never dominant. This stable $70.2/29.8$ orbit split, reproduced in every seed, is the empirical target the mechanism analysis that follows must explain.

\subsection{Mechanism: the absorbing element behind both outcomes}
\label{sec:realmodels-mechanism}

\begin{table}[t]
\centering
\small
\begin{tabular}{p{2.6cm}p{5.4cm}p{5.6cm}}
\toprule
Outcome & Pattern & Count \\
\midrule
Same-orbit (66) & the zero-valued slot is recognized correctly; the paired slot's true non-zero digit is misclassified as some \emph{other} non-zero digit & 28 from $\{(0,k)\}$, 38 from $\{(k,0)\}$ \\
Diff-orbit, into $\{(0,0)\}$ (26) & a true non-zero digit paired with a true zero is misclassified specifically \emph{as zero} & 17 from $\{(0,k)\}$, 9 from $\{(k,0)\}$ \\
Diff-orbit, out of $\{(0,0)\}$ (2) & both true digits are zero; one is misclassified as non-zero, the other's confidently-correct zero keeps the product at 0 & 2 into $\{(0,k)\}$ \\
\bottomrule
\end{tabular}
\caption{Element-level breakdown of the shared-network arm's 94 \texttt{product} $y^*{=}0$ shortcuts, pooled over the 10 seeds, by which of the three orbits $\{(0,0)\}$, $\{(0,k){:}k{\ne}0\}$, $\{(k,0){:}k{\ne}0\}$ the true and predicted pairs occupy.}
\label{tab:realmodel-mechanism}
\end{table}

\Cref{tab:realmodel-mechanism} breaks all 94 shortcuts down by the orbits their true and predicted pairs occupy. Concrete instances: same-orbit shortcuts include true $(0,1)\to$ predicted $(0,3)$ and true $(7,0)\to$ predicted $(2,0)$; different-orbit shortcuts are, with only two exceptions out of 28, of the form true $(0,3)\to$ predicted $(0,0)$ or true $(8,0)\to$ predicted $(0,0)$: the paired non-zero digit collapses specifically onto zero itself, crossing into the singleton orbit rather than landing on some other non-zero digit.

This is not a post-hoc story fitted to the numbers; it follows from the loss function's algebra, stated before the mechanism was inspected. $P(Y{=}0\mid x_1,x_2) = 1-(1-p_1(0))(1-p_2(0))$ already approaches 1 whenever \emph{either} factor's zero-probability does, so once the network confidently recognizes one slot as zero, the semantic loss carries essentially no gradient signal about what the other slot's network output actually is. The proof in \Cref{sec:empirical-extension} that $\{(0,0)\}$, $\{(0,k)\}$, and $\{(k,0)\}$ are separate orbits is exactly this same absorbing-element fact, read off $\AutX$ instead of a loss gradient: a bijection fixing $0$ in one coordinate cannot also send it to a non-zero value, so no independent-coordinate automorphism can merge the three. One structural fact produces both the symbolic orbit split and the real optimizer's failure to correct the under-constrained slot; whether the resulting drift lands on another non-zero digit (same-orbit) or on zero itself (26 of 28 diff-orbit cases) is the one part of the outcome $\AutX$ does not speak to.

\subsection{Three layers: what symmetry permits, architecture realizes, and optimization selects}
\label{sec:realmodels-threelayers}

The exact correspondence above is a statement about the
\emph{componentwise} reading. Replaying the same 49 levels under the
typed reading of \Cref{def:typed-autx}, the one matched to the shared
perception network, whose output relabelings act on both digit slots
in lockstep, changes the prediction table dramatically: the diagonal action preserves whether the two digit
slots agree, so every level containing a doubled pair splits. Exactly
thirteen levels are typed-non-transitive, computed level by level: the
eight even \texttt{sum} levels $y\in\{2,4,\dots,16\}$ (each isolates its
doubled pair $(y/2,y/2)$ in its own orbit) and the five \texttt{product}
levels $\{0,4,9,16,36\}$ (the square levels, plus zero).
These fiber-level splits are per-instance ambiguities, not
loss-preserving reparameterizations of the task: the arithmetic rules'
global groups are trivial (\Cref{prop:fiber-intersection}), so no fixed
relabeling preserves the whole task, and what the table marks is where
a single evaluation's label-preserving error can cross an orbit
boundary.

The trained models now deliver a verdict the symbolic analysis alone
never could. Of those thirteen typed-ambiguous levels, twelve produced
\emph{zero} shortcuts across their 12{,}000 model-instance evaluations
($9{,}600$ at the even \texttt{sum} levels, $2{,}400$ at the four square
\texttt{product} levels); the thirteenth, $y^*{=}0$, produced all 94, in
every seed. \texttt{sum} $y{=}2$ makes the contrast concrete: its typed
orbit split, $\{(1,1)\}$ against $\{(0,2),(2,0)\}$, is exactly as real
as $y^*{=}0$'s, a label-preserving error can cross it on any $y{=}2$
instance, and in 600 evaluations no model ever produced one.

The architecture layer is not left as an inference. We reran the entire
experiment with the one change the three-layer reading singles out:
each digit position gets its own independently initialized,
independently weighted perception network, so the
architecture-realizable relabelings become the full componentwise group
rather than the typed subgroup. Everything else, pools, pairs,
evaluation set, orbit ground truth, hyperparameters, and the ten seeds,
is held fixed, and two predictions were recorded in the script:
shortcut location is selected by the
optimization layer and should not move, and the $y^*{=}0$ mechanism
should persist. The dual-head models produced 101 \texttt{product}
shortcuts, every one at $y^*{=}0$, in all ten seeds; the twelve
typed-ambiguous levels produced zero, again, across their further
12{,}000 evaluations; the same-orbit share barely moved, $72/101 =
71.3\%$ against the shared network's $70.2\%$; and the collapse
mechanism sharpened, with all 29 different-orbit cases landing in
$\{(0,0)\}$ (the shared run had 26 of 28). One event missed the
pre-specified prediction, and we report it rather than round it away:
\texttt{sum} produced a single label-preserving shortcut in its
20{,}000 evaluations, true $(2,7)$ predicted as $(7,2)$, a pure swap.
It is the only such event in 40{,}000 dual-head evaluations, and it is
same-orbit; it is not a componentwise-only form, since the diagonal
permutation exchanging $2$ and $7$ already stabilizes the $y{=}9$
fiber, so the typed reading classifies the swap identically. The event
stresses the geography without demonstrating a new architectural
degree of freedom, and the symmetry analysis already classifies it as
explained.

Three further measurements pin down the optimization layer: a null
model that fixes what the geometry alone accounts for, a relocation of
the mechanism, and a reading in probability mass rather than argmax.

\paragraph{Geometry and optimization at the absorbing level.}
Take each trained model's own per-digit confusion distribution and
sample independent slot errors on the same evaluation grid. Under this
null, $99.9\%$ of the label-preserving errors it produces already fall
at $y=0$: the absorbing element makes that level the easy landing site
for label-preserving error, because holding the product fixed after an
error requires only that one slot stay zero, while every other level
requires two coordinated errors. The location fact therefore follows
from the same absorbing-element geometry that splits the orbits,
acting on the models' own marginal error rates over this evaluation
grid, exactly as \Cref{sec:realmodels-mechanism} argues, and is not
independent evidence about optimization. The \emph{rate} is where the
trained models part from the null: they produce $9.4$ label-preserving
errors per seed where the null predicts $6.2$, an excess of $3.2$ per
seed (seed-clustered bootstrap 95\% CI $[2.3,4.2]$, positive in
$10/10$ seeds, pooled Poisson $z=4.1$). The models land on the
geometry's easy site half again as often as independent slot errors at
their own confusion rates would. What the excess rejects is the null
as a whole, slot errors drawn independently from each model's pooled
per-digit confusion rates, not cross-slot independence in isolation;
that the departure reflects optimizer-induced coordination rather than
context-dependent marginals is the reading the loss algebra supports,
not a further measured fact.

\paragraph{Relocation of the absorbing element.}
The sharpest test available is to move the mechanism and see whether the
failures follow. Relabeling the product task through a digit permutation
$\pi$, $a\star b:=q(\pi(a)\,\pi(b))$ with $q$ a bijection of the
reachable products agreeing with $\pi^{-1}$ on $\{0,\dots,9\}$ (products
above $9$ receive fresh labels, exactly as the released script
executes), gives an isomorphic operation whose absorbing value is
$\pi^{-1}(0)$ rather than $0$; with $\pi(v)=v+3\bmod 10$ that value is
$7$. Retraining all ten seeds on
$\star$, with everything else held fixed and the prediction recorded
in the script, every one of the 33 label-preserving shortcuts occurs at
$y^*{=}7$ and none at $y{=}0$, with the same-orbit share at $78.8\%$,
inside the band the unconjugated task measures. Under this tested
relocation the failure geography is not attached to the digit zero; it
follows the value the rule makes absorbing.

\paragraph{The predictive-mass decomposition.}
Argmax is a coarse readout of a distributional model, so we also
decompose the model's predictive mass at each instance into the
ground-truth orbit, the label-preserving alternatives outside it, and
the label-breaking remainder. At the $48$ transitive levels the middle
term is identically zero: there is no orbit-external alternative for
mass to sit on. At $y^*{=}0$ it is $0.75\%$ on average against
$98.96\%$ on the true orbit, so the semantic loss leaves real mass on
orbit-external solutions precisely where the theory says such solutions
exist, at a magnitude comparable to the observed argmax shortcut rate.

\paragraph{Two interventions that leave the rate unchanged.}
Two further arms probe where in training the absorbing element does
its work. Adding an entropy penalty on the under-determined slot of
$y{=}0$ pairs leaves the rate at 95 shortcuts against a 93-shortcut
baseline; as the released script itself records, the penalty supplies
gradient magnitude but no direction, so this arm never delivered the
corrective signal it was built to test. Supervising the true concept
on a quarter of those pairs, a directional signal, leaves the rate at
101 against 94 at $99.5\%$ label accuracy, a single-dose null with no
equivalence margin. Neither intervention reduced the rate, and both
null results are consistent with the reading the loss algebra
predicts: with perception shared across slots, each digit's
representation is learned from the $90\%$ of its pair occurrences
whose partner digit is non-zero, and is never degraded to begin with;
what the absorbing element removes is correction pressure on an
already-adequate representation, at exactly the instances where
$P(Y{=}0\mid x_1,x_2)$ stops depending on the second slot.

The three layers this separates are worth naming, because they are the
paper's answer to what an orbit computation is and is not for.
The \emph{symmetry} layer says which alternative solutions are
structurally interchangeable: componentwise draws the most generous
boundary, typed the architecture-matched one. The \emph{architecture}
layer says which form a systematic misgrounding can take: a shared
CNN's output relabelings act on both slots in lockstep, the typed
form, while independent heads can drift per slot, the componentwise
form; since the global groups are trivial, these are forms available
to a per-instance error, not free reparameterizations of a trained
model. The
\emph{optimization} layer says which ambiguities training actually
falls into, and the four measurements above separate its contribution
from the geometry's: the absorbing element fixes \emph{where}
label-preserving failure is cheap (the confusion null, built from
independent slot errors at the models' own confusion rates,
concentrates at the same level, and relocating the element moves the
site), while the trained models exceed the independent-error rate at
that site by half again, together with
the orbit-external probability mass that exists only where the theory
says orbit-external solutions do. Twelve typed-ambiguous levels without
an absorbing element stay clean across $12{,}000$ evaluations; one
level with both fails in all ten seeds, under two architectures, and
follows the absorbing value under the tested relocation. Bare orbit splits describe
where shortcuts \emph{can} hide; the absorbing element decides where
they \emph{do}; the optimizer decides how often. That every observed
\texttt{product} shortcut landed on the one level the componentwise
table flags is explained rather than lucky, and it is not a promise
transitivity ever made: a transitive level's shortcuts would simply
all be same-orbit, exactly what the dual-head \texttt{sum} swap, the
one event at any transitive level, went on to show. On this
family the only level where componentwise and typed ambiguity coincide
is also the only one carrying an absorbing element, so the coarsest
instrument inherits the credit for a location the absorbing element
selected. Read together, the twelve empty levels and the
excess rate at the thirteenth are this paper's cleanest evidence that
identifiability analysis and learning-dynamics analysis are different
instruments, and that a symmetry verdict alone, at any rung, is a map
of candidate failure sites rather than a forecast.

\subsection{Heterogeneous domains end to end: CLE4EVR}
\label{sec:realmodels-cle4evr}

The methodological argument of \Cref{sec:background} is about
heterogeneous concept domains, so the closing experiment trains on one.
We take CLE4EVR's public rule and domains verbatim, the same instance
Table~\ref{tab:clevr-comparison} measures symbolically: two objects with
color, shape, and material domains of sizes $2$, $3$, $2$, and the rule
requiring all three attributes to match. Perception is a synthetic front
end, one random prototype vector per attribute value plus Gaussian
noise, with one head per attribute type shared across objects in the
typed arm and one head per position in the componentwise arm. The noise
level is set high enough that label-preserving concept errors are
frequent, which is what the orbit question needs statistical power for;
the front end is synthetic by design rather than a model of image
difficulty. Supervision is the binary rule label
alone, through the exact semantic loss over $\Phic$. Nothing about
attribute identity is ever supervised. Evaluation draws fresh instances
whose true attributes satisfy the rule, $1{,}500$ per seed: the
positive fiber, where $\Phic$ and its orbit structure are defined and
where a label-preserving error means a predicted tuple in $\Phic$
differing from the truth.

Two predictions were recorded in the script. The first follows from
transitivity: $\AutX$ is transitive on CLE4EVR's $\Phic$, so every
label-preserving concept error any predictor makes on those instances
must be same-orbit. Across ten seeds it is: $10{,}154$ label-preserving
errors in the typed arm and $10{,}069$ in the componentwise arm, with
zero different-orbit exceptions, validating the symbolic orbit
computation and the error-classification pipeline end to end against
$20{,}223$ trained-model errors.

The second prediction is where the trained models carry evidential
weight, because its value depends on where their errors actually land:
the padded diagonal extension of \Cref{sec:padding-false-positive},
which reports $90.91\%$ of this instance's solution pairs as
orbit-external, misclassifies these real model errors at $88.5\%$
in the typed arm and $78.1\%$ in the componentwise arm. The $90.91\%$
of \Cref{tab:clevr-comparison} is the uniform rate over unordered
solution pairs; on the error distributions real training produces, the
padded instrument would report $78.1$--$88.5\%$ of a trained model's
genuine, symmetry-explained shortcuts as unexplained pathology.

\subsection{Scope}
\label{sec:realmodels-limits}

\texttt{sum} produces essentially no label-preserving shortcut under this architecture and training regime, so its orbit question has nothing to classify: the zero there is a fact about the task, not a verdict on $\AutX$. Whether shortcuts occur is itself governed by task structure, here the multiplicative zero that rsbench's product task has and that additive structure does not, consistent with this paper's recurring finding that structure, not task family, determines pathology; this time the structure determines whether a real optimizer finds a pathology to fall into in the first place, not just whether one exists combinatorially.

Training used uncurated random pairing specifically so that any observed shortcut would not be an artifact of a shortcut-inducing split; \texttt{product} still produced one in every seed. The reported rates (0.33\% same-orbit, 0.14\% diff-orbit) are the rates this uncurated random-pairing regime produces, without the deliberately restricted training coverage of rsbench's own \texttt{shortcutmnist} \cite{rsbench2024}. The 40{,}000 figures are model-instance evaluations, not independent samples: the same 2{,}000 held-out instances are scored by all ten seeds by design, so that seeds are compared on one yardstick, and per-seed counts are reported wherever a rate is pooled. Rates are therefore reported with the seed as the unit of analysis: seed-clustered bootstrap 95\% confidence intervals are $[8.6,10.4]$ shortcuts per seed for the shared network and $[9.2,11.0]$ for the dual-head control, a paired difference of $0.7$ with interval $[-0.7,2.0]$ that does not separate the two architectures, and $[24.5\%,36.3\%]$ for the different-orbit share whose pooled value is $29.8\%$. At the $48$ transitive levels the observation is zero shortcuts in $34{,}800$ evaluations; because those evaluations reuse the same held-out instances across seeds, no independence-based confidence bound is attached to the zero, which remains an observation, not proof of a zero event rate.
The shared-network and dual-head runs together cover both ends of the
architecture rung (typed and componentwise,
\Cref{sec:realmodels-threelayers}); the shortcut geography survived
the change intact, so widening the architecture-realizable group
leaves the landing site unmoved: whatever selects it, it is not the
architecture layer. The orbit classification
of the observed shortcuts is likewise rung-independent: on $\PhiY{0}$
the typed and componentwise orbit partitions coincide, both giving
exactly $\{(0,0)\}$, $\{(0,k)\}$, $\{(k,0)\}$.
\Cref{tab:three-families,tab:eight-families}'s figures for the other families are symbolic measurements in the sense of \Cref{sec:empirical}.
\section{Related Work}
\label{sec:related}

\paragraph{Reasoning shortcuts and Takemura et al.'s framework.}
Marconato, Teso, Vergari, and Passerini characterize reasoning shortcuts
as unintended optima of a neurosymbolic training objective and give four
conditions under which they occur~\citeyearonly{marconato2023shortcuts}; their
analysis is distributional, over what a learner can converge to, and
never introduces a symmetry group. The same group's recent JAIR survey
of reasoning shortcuts and symbol grounding~\cite{marconato2026gentle}
maps the phenomenon's causes and mitigations across the field; the
symmetry instrument built here supplies the piece that survey's
landscape does not contain, a per-rule algebraic account of which
alternative groundings are structurally interchangeable. Their rsbench suite supplies the
majority of the real rule families measured in \Cref{sec:empirical}: the
CLE4EVR, Kandinsky, and BDD-OIA/SDD-OIA core measurements and five of
the eight extension families~\cite{rsbench2024}. Takemura, Inoue, and
Nishino introduce a permutation-group account of this problem,
Definition~\ref{def:takemura-autx}, and leave a complete
characterization of uniqueness beyond triviality of that group as their
paper's most pressing open question~\citeyearonly{takemura2026}. \Cref{sec:background} shows
their definition does not extend to any of the four benchmarks it was
evaluated on without an embedding step the framework itself does not
specify, and that the most
direct such step gives a confident wrong answer rather than a
conservative one (Table~\ref{tab:clevr-comparison}). \Cref{sec:algebra,sec:complexity}
take up the componentwise, up-to-symmetry analogue of that question:
sufficient, checked conditions for transitivity and its failure in
general, an exact classification in the Boolean case
(\Cref{thm:boolean-transitivity}), and a complexity landscape for the
two detection problems any such audit rests on; the general
characterization remains open, and
\Cref{sec:algebra} notes it has no clean form even for the classical,
decades-older special case of Latin-square autotopism groups.

\paragraph{Classical CSP symmetry.}
Cohen, Jeavons, Jefferson, Petrie, and Smith give the vocabulary this
paper builds on, separating constraint symmetry from solution symmetry
and proving the former is a subgroup of the
latter~\citeyearonly{cohen2006symmetry}; \Cref{sec:background} already used their Example~3
to justify working with $\Gind$ instead of their unrestricted
variable-value-pair permutations, which degenerate to an
uninformative $n!\,(n(d{-}1))!$ on near-unique solution sets. Puget
detects symmetries algorithmically, by encoding a CSP instance into a
colored graph and calling a graph-automorphism solver, covering global
constraints and arithmetic
expressions~\citeyearonly{puget2005automatic}. The lineage of exploiting such
groups goes back to Crawford, Ginsberg, Luks, and Roy's
symmetry-breaking predicates, which cut the searched space to orbit
representatives once the group is known~\citeyearonly{crawford1996symmetry};
this paper asks the converse question, what the orbits fail to cover.
The transformations themselves are classical: a per-position value
permutation is a special case of the solution symmetries Cohen et
al.\ admit and of the symmetries a detector like Puget's can return on
an explicit instance. What Definition~\ref{def:componentwise-autx}
contributes is not a new kind of permutation but the selection of
exactly this subgroup as the analysis instrument on heterogeneous
domains, where the shared-domain Definition~\ref{def:takemura-autx} is
undefined, together with the orbit-structure and complexity questions
this paper asks of it.
Gent, Petrie, and Puget survey
the field and are the classical reference Takemura et al. cite for the
connection their own Definition~7 does not fully use~\citeyearonly{gentpetriepuget2006symmetry}.
All three ask how to detect or use symmetry given an explicit
instance; none gives a complexity classification for the existence
question itself, and none proves a sufficient condition for transitivity
in the sense of Theorems~\ref{thm:matching-decomposition}
through~\ref{thm:free-slot}. Their framework characterizes which
permutations qualify as symmetries; the questions answered here, the
orbit structure a fixed solution set carries and provable conditions
for a group to act transitively on it, are not questions that line
poses.

\paragraph{Circuit and formula isomorphism.}
Borchert, Ranjan, and Stephan open the question of the complexity of
deciding whether two Boolean objects are related by a \emph{position}
permutation, and Agrawal and Thierauf resolve one direction of
it~\cite{borchertranjanstephan1998,agrawalthierauf2000formula};
B\"ohler, Creignou, Galota, Reith, Schnoor, and Vollmer extend the
classification across Post's lattice and confirm the isomorphism
question itself is still open sixteen years later~\citeyearonly{bohler2012complexity}.
This is the same coNP-hard-to-$\SigmaTwoP$ shape our \NONTRIV{}
occupies, using the same frozen-variable hardness technique, but for a
different group ($\Sym(N)$, permuting named positions, rather than
$\Gind$, permuting values independently at each position) and a
different question (two objects compared, rather than one object's own
automorphisms). To our knowledge the complexity of
\emph{translation}-automorphism existence, for general or for monotone
circuits, does not appear in this line or elsewhere; the nearest
objects are the permutation-and-negation (NPN) equivalences charted by
Borchert et al.~\citeyearonly{borchertranjanstephan1998}, and
\Cref{thm:monotone-nontriv}'s deterministic completeness on monotone
circuits has no analogue there.
Section~\ref{sec:complexity-valiso} transplants their
collapse theorem to the value-permutation group directly;
\Cref{sec:complexity-theorem14} and \Cref{sec:complexity-diagnosis}
report that their own labeling technique for descending from the
two-object to the one-object question does not transplant, because it
needs a group that permutes named objects and $\Gind$ does not, and
diagnose what does.

\paragraph{Boolean function analysis and linear structures.}
O'Donnell's Fourier-analytic treatment of Boolean
functions~\citeyearonly{odonnell2014analysis} supplies the exact language of
Lemma~\ref{lem:fourier-characterization}: $\AutX$, in the Boolean case,
is the orthogonal complement of the Fourier support, not merely an
analogy to it. Rothaus's bent functions~\citeyearonly{rothaus1976bent}, maximally
non-linear in the sense of having no nonzero linear structure, are the
building block behind Proposition~\ref{prop:bent-witness}'s witness
construction. The cryptographic literature on linear structures (S-boxes
invariant under a fixed input XOR) works with explicit truth tables,
constructing designed functions with few or zero linear structures
and characterizing their existence through the Walsh transform. The
separation is by representation and question: here the rule arrives
as a \emph{succinctly given} circuit, and the question is the
classical complexity of deciding whether any nonzero linear structure
exists at all, trivial once the truth table is in hand and coNP-hard
once it is not (Theorem~\ref{thm:deadvar-conp-complete}).

\paragraph{Latin square autotopism groups.}
The autotopism group of a Latin square, the subgroup of
$\Sym(\text{rows})\times\Sym(\text{columns})\times\Sym(\text{symbols})$
stabilizing its defining relation, is the special case of $\AutX$
studied since Section~\ref{sec:algebra-setup}'s design-theory remark.
McKay, Meynert, and Myrvold reduce computing it to graph automorphism
through a vertex-colored encoding, enumerate autotopism groups for all
Latin squares up to order 10, and find no closed-form classification
even at that scale~\citeyearonly{mckay2007small}. This is the same pattern
Section~\ref{sec:algebra} finds for reasoning-shortcut constraint sets:
Theorem~\ref{thm:orbit-stabilizer} already rules out transitivity for
every measured all-different instance by counting alone, matching the
combinatorial-design literature's own experience that exhaustive
computation, not a general theorem, is the working tool once the
instance is not built from an equality or absorbing-element pattern
simple enough for Theorems~\ref{thm:matching-decomposition}
through~\ref{thm:degree-invariant} to reach.

\section{Conclusion}
\label{sec:conclusion}

Whether the automorphisms of a rule explain its reasoning shortcuts is
not one question but three. Which group? The published global
definition does not apply as stated to any heterogeneous benchmark it
was evaluated on; using it requires an embedding step the framework does
not specify, the most direct embedding produces measured false pathology
whose content is configuration-file bookkeeping, and componentwise value
symmetry is the instrument that needs no embedding at all
(\Cref{sec:background}). When do orbits
explain the shortcuts? Not uniformly. Across eleven rule families the
answer spans $0\%$ to $99.9999\%$, and it tracks provable structure:
matching decompositions and exchangeable branches force transitivity,
while anchored inequalities, absorbing elements, counting bounds and
occupied free slots force its failure
(\Cref{sec:empirical,sec:algebra}). In the Boolean case the second
question closes completely: the group acts freely, transitivity holds
exactly for affine solution sets, and the orbit law
$\rho=1-(|\AutX|-1)/(|\Phic|-1)$ turns \Cref{sec:empirical}'s measured
percentages into a theorem's values. How hard is the symmetry to
detect?
coNP-complete for a designated coordinate, and for existence, coNP-hard
under randomized reductions, in the Boolean case not
$\SigmaTwoP$-complete unless the polynomial hierarchy collapses, and
with the gap closing entirely, to deterministic coNP-completeness, on
monotone circuits (\Cref{sec:complexity}). And the theory
is not merely internally consistent: trained models place all $195$ of
their observed \texttt{product} shortcuts, under both architectures, at
exactly the level the componentwise analysis flags, while the twelve
extra candidates of the typed reading stay empty and the one remaining
event in $80{,}000$ evaluations, a dual-head \texttt{sum} swap, lands
same-orbit, a failure the analysis already classifies as explained;
relocating the absorbing
element moves the whole geography with it, and on CLE4EVR's
heterogeneous domains every one of $20{,}223$ label-preserving errors
falls in the orbit transitivity predicts, where the padded instrument
would have called most of them pathology
(\Cref{sec:realmodels}).

Two findings deserve to outlive the paper's specific numbers. The
first is methodological. A definition applied outside its stated
hypotheses did not fail loudly on real data. It failed quietly,
returning a stable-looking
$90.91\%$ whose content rotated with an arbitrary file ordering. The
padding sweep suggests this failure mode is generic: any instrument
ported outside its stated domain by an unexamined embedding can fail
the same way.
Checking the embedding, not just the instrument, is the transferable
lesson. The second is structural. The arbitrary-domain level
decomposition (\Cref{prop:level-decomposition}) unifies dead
coordinates and degree imbalance with Boolean Fourier support; only
the stabilizer--orthogonal-complement step is Boolean-specific, so the
boundary between the orbit-combinatorial and linear-algebraic toolkits
is exact, not a matter of taste.

\appendix
\crefalias{section}{appendix}
\section{The Obstruction Landscape of \NONTRIV{}}
\label{sec:complexity-diagnosis}

$\NONTRIV$'s complexity is still not pinned to a single class: we know
it is coNP-hard under randomized reductions
(Corollary~\ref{cor:nontriv-conp-hard}), in $\SigmaTwoP$
(Theorem~\ref{thm:nontriv-sigma2p}), and not
$\SigmaTwoP$-complete unless $\PH$ collapses
(Theorem~\ref{thm:nontriv-collapse}). Seven
attempts to close what gap remains failed before the eighth reached that
last result, and the seven failures are not seven unrelated dead ends.

\begin{proposition}[Witnesses are not subgroups]
\label[proposition]{prop:witness-not-subgroup}
There is no reduction from $\Sigma_2\textup{\textsc{-sat}}$ to $\NONTRIV$ of the
form ``encode a candidate witness $x\in\{0,1\}^p$ directly as a
coordinate shift $a_x\in\{0,1\}^p$, so that $a_x\in L_0(f)$ exactly when
$\forall y\,\varphi(x,y)$'': for a fixed instance $\varphi$, the target
set $T:=\{x:\forall y\,\varphi(x,y)\}$ need not be a subgroup of
$\{0,1\}^p$, while $L_0(f)$, for any $f$ whatsoever, always is.
\end{proposition}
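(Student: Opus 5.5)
The plan is to prove the statement in two parts. First, $L_0(f)$ is always a subgroup of $\{0,1\}^p$. Second, the target sets $T$ produced by $\Sigma_2$ instances range over arbitrary subsets of $\{0,1\}^p$. If such a reduction existed, then for every instance $\varphi$ we would have $a_x\in L_0(f)$ exactly when $x\in T$. Taking $a_x=x$ (or any injective encoding), we would get $\{a_x:x\in T\}=L_0(f)\cap\{a_x:x\in\{0,1\}^p\}$. When the encoding ranges over all of $\{0,1\}^p$, this says $T=L_0(f)$, so $T$ would have to be a subgroup, which is the contradiction we want.

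For the first part, the subgroup property follows directly from \Cref{obs:boolean-linear}. It can also be checked by hand. The zero vector is in $L_0(f)$. If $a$ and $a'$ are both in $L_0(f)$, then
\[
f(x\oplus a\oplus a')=f((x\oplus a)\oplus a')=f(x\oplus a)=f(x),
\]
so $a\oplus a'\in L_0(f)$. Inverses are automatic, since every element of $\{0,1\}^p$ is its own inverse.

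For the second part, I will exhibit one concrete instance whose $T$ is not a subgroup. Two natural choices are a $\varphi$ independent of $y$ with $\varphi(x,y):=[x=e_1]$, or $\varphi(x,y):=\neg[x=0]$.
\begin{itemize}
\item If $\varphi(x,y):=[x=e_1]$, then $T=\{e_1\}$, and this set does not contain $0$.
\item If $\varphi(x,y):=\neg[x=0]$, then $T=\{0,1\}^p\setminus\{0\}$. For $p\ge 2$ this set is not closed under $\oplus$, because $e_1\oplus e_1=0\notin T$; the first choice already handles every $p\ge 1$.
\end{itemize}
Either way $T$ fails the subgroup test. So no $f$ can satisfy $\{x: a_x\in L_0(f)\}=T$ when the encoding is the identity shift $a_x=x$, which is how the proposition's phrase ``directly as a coordinate shift $a_x\in\{0,1\}^p$'' is to be read. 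Note that $0\notin T$ in the first example is already decisive, because $0\in L_0(f)$ holds for every $f$.

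The only real subtlety is interpretive: we have to fix what ``encode directly'' means. I will read it as $a_x=x$, or more generally as an encoding $x\mapsto a_x$ that is a group isomorphism of $\{0,1\}^p$ (for example a coordinate permutation). The preimage of the subgroup $L_0(f)$ under a homomorphism is again a subgroup, so the same contradiction holds for any such encoding. I will state this generalization explicitly, so that it is clear the obstruction comes from group structure and not from a particular labeling.
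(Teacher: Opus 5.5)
Your argument is correct under the reading you fix, and for the ``need not be a subgroup'' half it takes a different route from the paper. The subgroup half is the same in both: the paper calls $L_0(f)$ a stabilizer, and your closure computation unpacks that fact. The other half the paper establishes computationally. It notes that a subgroup has order dividing $2^p$, reports that 50 of 216 random small $\Sigma_2$-\textsc{sat} instances have $|T|$ not dividing $2^p$, and adds a $93.1\%$ mismatch rate for an implementation of the naive encoding. Your instance $\varphi(x,y)=[x=e_1]$, with $T=\{e_1\}$ and $0\notin T$, replaces this with a self-contained two-line certificate. Your homomorphism-preimage remark correctly extends it to any linear encoding. What the paper's cardinality test buys in return is independence from the encoding. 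For any bijection $x\mapsto a_x$ of $\{0,1\}^p$, linear or not, $|T|=|L_0(f)|$ must be a power of two. Your ``$0\notin T$'' obstruction instead needs $a_0=0$: the translated encoding $a_x=x\oplus e_1$, together with any $f$ having $L_0(f)=\{0\}$, realizes your $T=\{e_1\}$ exactly. For the same reason, the parenthetical ``or any injective encoding'' in your first paragraph overstates things. A non-linear injective encoding only gives that $T$ is the preimage of $L_0(f)$, not $T=L_0(f)$. Your second example, with $|T|=2^p-1$, would cover every bijective encoding for $p\ge2$ if you argued it by cardinality rather than by closure under $\oplus$.
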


\begin{proof}
$L_0(f)$ is a stabilizer of the coordinatewise action of $\{0,1\}^p$ on
functions, hence a subgroup for any $f$: this holds before any
reduction-specific reasoning. A subgroup has order dividing $2^p$ and
contains $0$; on 216 random small $\Sigma_2$-\textsc{sat} instances
($p,q\in\{1,2,3\}$), 50 (23.1\%) have a $T$ whose size does not divide
$2^p$, ruling out any such $T$ from being a subgroup on the spot, and a
direct implementation of the naive encoding disagrees with the true
truth value of $\varphi$ on 201 of 216 instances (93.1\%).
\end{proof}

The seven attempts, spanning direct encoding, two indirect encodings,
literature transplant, and interactive protocols:
\begin{enumerate}
\item[1.] \emph{Direct witness encoding} (Proposition~\ref{prop:witness-not-subgroup}):
blocked because coordinatewise stabilizers are always subgroups and
generic $\Sigma_2$-\textsc{sat} witness sets are not.
\item[2--3.] \emph{Two indirect encodings} (extra switch bit; XORed
copies with bent-function padding): $96.0\%$ and $73.3\%$ mismatch
rates, since both still let the witness set control a coordinate block,
inheriting the same subgroup obstruction one layer down.
\item[4.] \emph{Small-Hamming-weight witnesses}
(Proposition~\ref{prop:bent-witness}): the strongest form is false by
explicit construction.
\item[5.] \emph{Agrawal--Thierauf's labeling gadget, transplanted} (four
tagging constructions): $28.9\%$ to $64.4\%$ mismatch rates, since the
gadget needs a group that permutes named objects, and coordinatewise
$\mathrm{XOR}$ does not.
\item[6.] \emph{Direct hashing of $\AutX$}: technically sound but
information-free, since it proves $\NONTRIV\in\mathsf{BP}\cdot\SigmaTwoP$,
already implied for free by Theorem~\ref{thm:nontriv-sigma2p}.
\item[7.] \emph{The flip-bit decomposition} (Proposition~\ref{prop:qi-gradient}):
tautologically correct but every disjunct below the last is exactly as
hard as the whole.
\end{enumerate}
Every one of these seven failures traces back to one of three structural
facts, not to seven unrelated missed tricks:
\begin{enumerate}
\item[(i)] $\AutX$ is always a subgroup; a generic $\SigmaTwoP$ witness
set is not (attempt 1).
\item[(ii)] $\Gind$ acts transitively on $\prod_iS_i$ (and, in the
Boolean fragment where every hardness construction lives, freely, as
$\Ftwo^n$ translations); it
does not permute a set of named objects the way $\Sym(N)$ does, so
techniques that need to distinguish ``object $i$'' from ``object $j$''
have nothing to grab onto (attempts 2, 3, and 5).
\item[(iii)] Any access to $L_0(f)$ stronger than a pointwise oracle
(an explicit basis, or a syntactically exposed witness) is already at
least as hard to obtain as solving $\NONTRIV$ itself (attempts 4 and
6, and the residual, non-collapsing disjuncts of attempt 7).
\end{enumerate}
Interactive-proof and structural-isomorphism toolkits are built for
(ii): they turn one-object questions into two-object comparisons by
naming and tagging, at the cost of needing a group that permutes named
objects. Linear-algebraic toolkits are built for (iii): they turn
implicit membership queries into explicit structure, at the cost of
needing exactly the access $\NONTRIV$'s succinct representation denies.
$\NONTRIV$ combines succinctness with single-object existence, and no
toolkit here is built for both at once. This is a diagnosis, not a
barrier theorem: we have not shown, and do not claim, that every future
technique must fail this way, only that eight independently designed
attempts, spanning direct encoding, interactive protocols, and linear
algebra, either failed for one of these three reasons or (the eighth,
Theorem~\ref{thm:nontriv-collapse}) found
the one combination the diagnosis does not rule out: laundering an
object's syntax away before hashing it, which sidesteps (ii) by never
naming anything and sidesteps (iii) by never asking for more than a
laundered, single-representative view of one object at a time.
\section{Verification Artifacts}
\label{app:verification}

Sections~\ref{sec:algebra} and~\ref{sec:complexity} mark each
computational check behind a theorem, lemma, proposition, or corollary
with a short pointer, ``Computationally verified; see \Cref{app:verification},''
in place of an inline description of instance counts and outcomes. This
appendix collects those checks in one place. The verification code and
its result artifacts, per-run files included, are released alongside
this paper, and every number
below carries a source-file provenance comment (\texttt{\% prov:}) at
its original point of use in the \LaTeX{} source, recording the exact
script, script section, and reported figures it traces to.
\Cref{tab:verification-artifacts} reports, for each result, the
population of instances it was checked against, the outcome, and the
artifact file responsible. None of these checks substitutes for the
proofs given in the main text: every result listed here is proved there,
and the table documents an independent computational cross-check of that
proof, not the proof itself.

\begingroup
\scriptsize
\setlength{\tabcolsep}{3pt}
\renewcommand{\arraystretch}{1.05}
\begin{longtable}{@{}p{0.95in}p{1.85in}p{1.85in}p{1.35in}@{}}
\caption{Computational-verification artifacts referenced from
Sections~\ref{sec:algebra} and~\ref{sec:complexity}. ``Outcome'' reports
exactly what the cited script measured; it supports, rather than
substitutes for, the adjacent proof.}
\label{tab:verification-artifacts} \\
\toprule
\textbf{Result} & \textbf{Verification scope} & \textbf{Outcome} & \textbf{Artifact} \\
\midrule
\endfirsthead
\multicolumn{4}{l}{\textit{\tablename\ \thetable{} -- continued from previous page}} \\
\toprule
\textbf{Result} & \textbf{Verification scope} & \textbf{Outcome} & \textbf{Artifact} \\
\midrule
\endhead
\midrule
\multicolumn{4}{r}{\textit{continued on next page}} \\
\endfoot
\bottomrule
\endlastfoot

\Cref{def:typed-autx} (hierarchy robustness) &
CLE4EVR 6-dim and 8-dim, Kandinsky 6-dim (exhaustive typed and componentwise groups and orbits, cross-checked against the main pipeline's stored groups); MNAdd-product $\PhiY{0}$ (structural characterization sampled on 4{,}000 random permutation pairs, orbits from generators) &
typed $=$ componentwise exactly on CLE4EVR 6-dim and Kandinsky (groups and orbit partitions); they differ only on CLE4EVR's free size slots (144 vs.\ 864, 2 orbits vs.\ 1); $\PhiY{0}$ orbit partition $[9,9,1]$ identical under both; both measured componentwise groups non-abelian via explicit non-commuting pairs &
\seqsplit{theoryF\_typed\_symmetry.py} \\

\Cref{thm:matching-decomposition} &
184 random equality-block instances (1--4 blocks, 1--3 positions/block, domain sizes 2--4, 0--2 free positions; equality case only, $f_{l,j}=\mathrm{id}$); 116 further instances skipped (ambient group order over budget) &
0 counterexamples &
\seqsplit{theoryB\_theorem1\_verify.py} \\

\Cref{thm:branch-swap} &
two minimal instances (2 attribute slots $\times$ 2 values): symmetric ($|\Phi_C|=12$) and asymmetric ($|\Phi_C|=4$) &
symmetric: transitive, 6/6 $\AutX$ elements map branch 1 onto branch 2; asymmetric: not transitive, 3 orbits of sizes $[2,1,1]$ &
\seqsplit{theoryB\_theorem3\_verify.py} \\

\Cref{thm:anchor-forcing} (broad search) &
random search, 4,000 constraint sets built from equality/inequality atoms with \textsc{and}/\textsc{or} (3,058 non-trivial: 2,479 conjunction-only, 579 disjunctive) &
22 of 2,479 purely-conjunctive instances non-transitive; 349 of 579 disjunctive instances remain transitive &
\seqsplit{theoryB\_falsification\_search.py} \\

\Cref{thm:anchor-forcing} (false-positive search) &
second random search, 2,295 instances (ambient group order $\leq15{,}000$) &
0 false positives; by connective, 15/15 (100\%) conjunctive non-transitive instances caught, 6/156 (3.8\%) \textsc{or}-containing non-transitive instances caught &
\seqsplit{theoryB\_theorem2\_verify.py}, \seqsplit{theoryB\_followups.py} \\

\Cref{thm:anchor-forcing} (multi-leg sweep) &
domain size $D=3$ ($k\in\{2,3,4\}$ attached legs) and $D=4$ ($k\in\{2,3\}$ legs) &
$D=3$: 2, 4, 8 orbits respectively; $D=4$: 2, 5 orbits respectively &
\seqsplit{theoryB\_followups.py} \\

\Cref{thm:degree-invariant} (abstract family) &
zero-factor family $\Phi=\{(c_1,c_2)\in\{0,\dots,m-1\}^2:c_1c_2=0\}$, exact enumeration at $m=3,4,5$ &
$\deg_1(0)=m$, $\deg_1(v)=1$ for $v\neq0$ in every case; non-transitive at all three values of $m$ &
\seqsplit{theoryB\_mechanisms\_D\_E\_verify.py} \\

\Cref{thm:degree-invariant} (additive control) &
additive control family $c_1+c_2=k$ over $\{0,\dots,m-1\}$, two tested configurations (no absorbing element) &
constant degree at every value; transitive in both configurations &
\seqsplit{theoryB\_mechanisms\_D\_E\_verify.py} \\

\Cref{thm:degree-invariant} (random battery) &
619 random instances built from equality/inequality atoms (ambient group order budget $\leq2{,}000$) &
0 false positives, 0 true positives &
\seqsplit{theoryB\_mechanisms\_D\_E\_verify.py} \\

\Cref{thm:free-slot} ((H-count) necessity) &
parametrized family, $r\in\{2,3,4,5\}$ branches, free-slot size $n_C$ swept above and below the threshold $r-1$ (14 exhaustive configurations) &
5/5 configurations with $n_C>r-1$ preserve branches; 5 further configurations with $2\leq n_C\leq r-1$ also preserve branches (hypothesis fails, conclusion still holds); 4 configurations at $n_C=1$ (all four values of $r$): branches merge, transitive &
\seqsplit{theoryB2\_free\_slot\_lemma\_generalization.py} \\

\Cref{lem:diagonal-collapse} (pair-richness and exact groups) &
order-3 Latin squares, $4\times4$ Sudoku with $2\times2$ boxes, order-4 Latin squares without boxes; full solution enumeration ($|\Phi|=12$, $288$, $576$) &
pair-richness holds for every same-row and same-column position pair in all three instances; exact componentwise groups by backtracking have orders 6, 24, 24 with every element diagonal; $|\Phi|>|\mathrm{Aut}|$ in all three &
\seqsplit{theoryE\_latin\_diagonal\_collapse.py} \\

\Cref{thm:deadvar-conp-complete} (gap pressure test) &
pressure test, 400 random small \textsc{unsat}-reduction instances (varying $k$, clause count, 3-CNF) &
108/400 (27\%) satisfiable $\chi$ yield a full-instance nontrivial automorphism from $\chi$'s own accidental symmetry &
\seqsplit{theoryA4\_leadA\_flipbit\_decomposition.py} \\

\Cref{cor:nontriv-conp-hard} (both directions) &
$k=4,\dots,10$, fresh random 3-CNF instances plus an adversarial battery with a chosen internal symmetry, plus $\chi=$ parity &
natural false-positive rate $18\%$--$30\%$ across $k=4..8$ (reproducing the 27\% figure above); isolation success rate tracks the predicted $\Theta(1/(k{+}1))$ rate ($8.7\%$ vs.\ $9.1\%$ reference at $k=10$; $9.3\%$ vs.\ $11.1\%$ at $k=8$); resulting $\chi'$ symmetry independently confirmed trivial on every successful run &
\seqsplit{theoryA7\_corollary21\_isolation\_fix\_results.json} \\

\Cref{sec:realmodels-cle4evr} (heterogeneous end-to-end) &
CLE4EVR's public rule and domains (color 2, shape 3, material 2; six constrained positions), synthetic prototype perception, two architectures (one head per attribute type; one head per position), 10 seeds each, 1{,}500 rule-satisfying evaluations per seed; predictions pre-specified in the script docstring; symbolic side asserted in-script against \Cref{tab:clevr-comparison}'s 6-dim row &
$|\Phic|=12$, componentwise $|\Aut|=24$ with 1 orbit, padded $|\Aut|=2$ with 6 orbits (assertion passes); typed arm 10{,}154 label-preserving errors, componentwise arm 10{,}069, \emph{zero} different-orbit errors in either; padded reading would misclassify 8{,}991 ($88.5\%$) and 7{,}865 ($78.1\%$) of them &
\seqsplit{realmodel\_cle4evr\_hetero.py} \\

\Cref{sec:realmodels-threelayers} (absorbing-element relocation, confusion null, probability mass, seed statistics) &
10 seeds each: product conjugated by $\pi(v)=v{+}3 \bmod 10$ (absorbing value $0\to7$); entropy-penalty and concept-supervision arms; per-instance probability-mass decomposition; confusion null at 200 replicates per seed on the same evaluation grid; seed-clustered bootstrap at 20{,}000 resamples &
relocation: all 33 shortcuts at $y^*{=}7$, none at $y{=}0$, same-orbit share $78.8\%$; null puts $99.9\%$ of label-preserving error at $y{=}0$ yet predicts only $6.2$ per seed against $9.4$ observed (excess CI95 $[2.3,4.2]$, $10/10$ seeds, $z=4.1$); mass at $y^*{=}0$ is $98.96\%$ true-orbit, $0.75\%$ orbit-external, $0.0\%$ orbit-external at every transitive level; two gradient interventions leave the rate unchanged (95 vs.\ 93; 101 vs.\ 94) &
\seqsplit{realmodel\_interventions.py}, \seqsplit{realmodel\_intervention\_b2.py}, \seqsplit{theoryJ\_support\_and\_null.py} \\

\Cref{sec:empirical-three} (support-restricted BDD-OIA) &
all $2^{20}$ concept vectors under two supports (full hypercube; the mutual-exclusion and presupposition subset), fiber-preserving single-coordinate flips and the subgroup they generate, per reachable label &
solution sets shrink by $2.6$--$6.7\times$; $\rho$ of the flip-generated subgroup stays in $[96.97\%, 99.9993\%]$; \texttt{green\_light} becomes a second dead coordinate on the four \texttt{move\_forward} labels, raising the flip-generated subgroup from order 2 to 4 there while the four \texttt{stop} labels keep order 2 &
\seqsplit{theoryJ\_support\_and\_null.py} \\

\Cref{sec:realmodels-threelayers} (dual-head control) &
10 seeds $\times$ 2 tasks retrained with two independent perception networks; identical pools, pairs, eval set, orbit tables, hyperparameters; predictions pre-specified in the script docstring before first execution &
\texttt{product}: 101 shortcuts (72 same-orbit, 29 diff-orbit), all at $y^*{=}0$, 10/10 seeds; all 29 diff-orbit cases collapse into $\{(0,0)\}$; 12 typed-ambiguous levels zero across 12{,}000 evaluations; \texttt{sum}: one same-orbit swap $(2,7)\to(7,2)$ in 20{,}000 &
\seqsplit{realmodel\_mnadd\_dualhead\_pipeline.py} \\

\Cref{obs:diagonal-containment} (bijective/general divide), \Cref{def:typed-autx} (typed levels), \Cref{sec:realmodels-threelayers} (counting conventions) &
the three-solution counterexample $\Phic=\{(0,0),(0,1),(1,0)\}$ exhaustively; 167 random bijective-set instances for the same-set embedding; exact typed orbits for every non-singleton \texttt{sum} and \texttt{product} level (value-support brute force; two size-10 supports handled by verified explicit constructions); all level and evaluation counts &
swap preserves $\Phic^{\mathrm{bij}}$ but not $\Phic$ (0 same-set violations); typed-non-transitive levels exactly $\{2,4,\dots,16\}$ (\texttt{sum}) and $\{0,4,9,16,36\}$ (\texttt{product}); counts $56=19{+}37$ reachable, $49$ non-singleton, $40{,}000=34{,}800{+}3{,}800{+}1{,}400$, $12{,}000$ at the twelve typed-ambiguous levels &
\seqsplit{theoryH\_review2\_checks.py} \\

\Cref{prop:free-action}, \Cref{thm:boolean-transitivity} (free action, coset classification, $\rho$ law) &
372 random Boolean instances ($n=3..6$, mixed random subsets and random affine cosets) plus all eight BDD-OIA/SDD-OIA rows of \Cref{tab:three-families} &
0 violations of free action, of transitive-iff-coset, and of the closed form; all eight table rows reproduced (orbit count, max orbit, $\rho$) to within $10^{-9}$ &
\seqsplit{theoryG\_depth\_explorations.py} \\

\Cref{lem:monotone-no-camouflage}, \Cref{thm:monotone-nontriv} (monotone package) &
300 random monotone functions ($n=2..5$, upward closures); 300 random 3-CNFs ($m=2..4$) through the double-rail gadget, exhaustive dead/alive checks of every position of $h$ &
$L_0=$ span of dead flips in all 300; implication-iff-\textsc{unsat} in all 300; rails always alive, $x_*$ dead iff unsatisfiable, $\NONTRIV(h)$ iff \textsc{unsat} in all 300 &
\seqsplit{theoryG\_depth\_explorations.py} \\

\Cref{prop:level-decomposition} (levels over mixed domains) &
40 random instances with domain sizes in $\{2,3\}$ ($n=2..4$ positions); 1{,}976 group elements checked against the fix-every-level criterion &
0 violations of dead-iff-absent, of the degree-profile identity, and of the equivariance criterion &
\seqsplit{theoryG\_depth\_explorations.py} \\

\Cref{prop:dead-var-oracle} (alive-promise gadget) &
500 random 3-CNF $\varphi$ over $m\in\{2,3,4\}$ variables (482 satisfiable, 18 unsatisfiable), exhaustive dead/alive check of every position of the gadget &
every $w_j$ and $t$ alive in all 500 instances; $x_i$ dead on exactly the unsatisfiable ones (0 violations in either direction) &
\seqsplit{theoryD\_deadvar\_alive\_promise.py} \\

Remark~\ref{rem:amplification-consequences} (single attempt) &
pooled $k=4,\dots,10$, same battery, single run of $R$ &
nontrivial on $56\%$--$67\%$ of satisfiable instances pooled; $60\%$--$94\%$ on the adversarial battery; $45\%$--$56\%$ on natural instances &
\seqsplit{theoryA7\_corollary21\_isolation\_fix\_results.json} \\

Remark~\ref{rem:amplification-consequences} (repetition) &
same battery, $t$ independent repetitions of $R$ &
corrected false-positive rate falls with $t$ at every $k$ tested, from $57$--$73\%$ at $t=1$ to $0$--$11.8\%$ at $t=16$ ($11.8\%$, $9.8\%$, $0\%$, $7.8\%$, $3.9\%$, $0\%$ for $k=4,5,6,7,8,10$; $k=10$: $56.9\%,35.3\%,21.6\%,2.0\%,0.0\%$ at $t=1,2,4,8,16$); Monte Carlo confirms the $1-(1-p)^t$ law to within $0.0095$ absolute error &
\seqsplit{theoryA7\_corollary21\_isolation\_fix\_results.json} \\

block-conjunction polarity (\Cref{sec:complexity-nontriv}) &
product-structure check across all four trivial/nontrivial block combinations, plus the degenerate empty-block case &
disjoint-block conjunction gives $\AutX=\prod_r\mathrm{Aut}(\Phi_{C_r})$, so it cannot amplify the reduction: 40/40 exact product-structure matches; 20/20 exact full-group matches (degenerate empty block) &
\seqsplit{theoryA7\_corollary21\_isolation\_fix\_results.json} \\

\Cref{lem:fourier-characterization} &
288 instances, dimensions 2--7, four generating families (uniform random truth tables, random clause conjunctions, affine functions, XORs of two random juntas) &
definitional $\AutX$ and the Fourier-support orthogonal complement agree exactly in every case (also re-confirms \Cref{cor:power-of-two} on the same instances) &
\seqsplit{theoryA2\_equivalence\_check.py} \\

\Cref{lem:and-gain} &
300 random pairs $(f,g)$, $n=6$ &
containment never fails; strict in 108/300 (36\%); representative case $|L_0(f)|=64$, $|L_0(g)|=8$, $|L_0(f)\cap L_0(g)|=8$, $|L_0(f\wedge g)|=64$ &
\seqsplit{theoryA2\_equivalence\_check.py} \\

\Cref{prop:bent-witness} &
$n=3,5,7,9$, $a$ set to the all-ones vector (largest Hamming weight) &
exhaustive enumeration confirms $L_0(f)=\{0,a\}$ exactly in all four cases; underlying inner-product bent functions ($n{-}1=2,4,6,8$) independently confirmed to have no nonzero linear structure &
\seqsplit{theoryA2\_directionB\_hamming\_weight.py} \\

flip-bit decomposition (\Cref{sec:complexity-structure}) &
600 instances, dimensions 1--6, four generating families &
$\NONTRIV(f)$ and $\bigvee_iQ_i(f)$ agree in every case &
\seqsplit{theoryA4\_leadA\_flipbit\_decomposition.py} \\

$Q_n=\DEADVAR(C,n)$ identity (\Cref{sec:complexity-structure}) &
960 instances, dimensions 1--6, four generating families &
$Q_n$ identical to direct \DEADVAR{} check in all cases &
\seqsplit{theoryA4\_leadA\_flipbit\_decomposition.py} \\

\Cref{prop:qi-gradient} &
$i=1,\dots,4$, $k=2,\dots,5$, 5,048 instances total, including three adversarial $\chi$ (constant-true, constant-false, $\chi=\mathrm{parity}$) &
$Q_i$ tracks \textsc{unsat} exactly in every one of the 5,048 trials, 0 mismatches &
\seqsplit{theoryA4\_leadA\_flipbit\_decomposition.py} \\

\Cref{lem:orbit-uniformity} &
160 isomorphic pairs and 160 non-isomorphic pairs, dimensions 2--5, checked exactly (not by sampling) &
distributional equality (isomorphic case) and orbit disjointness (non-isomorphic case) each hold without exception &
\seqsplit{theoryA3\_val\_iso\_orbit\_uniformity.py} \\

\Cref{thm:nontriv-collapse} ($\mathrm{canon}^*$ sub-check) &
69 instances (dimensions 2--7, five generating families including a hand-built copy of BDD-OIA's absorption structure), generous tape budget; 32 further instances with tape budget deliberately tightened to force aborts &
69/69 exact orbit-size match, zero aborts; representative set never exceeds the true orbit size on the tightened-budget instances; amplifying the tape count recovers exact orbit sizes in all 32 &
\seqsplit{theoryA5\_orbit\_canon\_counting\_v2\_results.json} \\

\Cref{thm:nontriv-collapse} (end to end) &
28 real instances, five generating families (random, clause conjunctions, affine, constant, hand-built BDD-OIA absorption copy); 3 trivial and 25 nontrivial $\AutX$ &
laundered orbit-representative set recovers the true orbit size exactly in 28/28 cases; single-round trivial/nontrivial acceptance gap is a factor of about 4.3 ($0.41$ vs.\ $0.10$); the proof's 5-of-15 threshold amplification (which replaces the earlier majority scheme, unsound at completeness $3/8<1/2$) is verified by exact binomial computation, $0.720$ vs.\ $0.314$, giving $(1-2^{-n})\cdot0.720\geq0.708$ after conditioning on a good tape at $n{=}6$ &
\seqsplit{theoryA5\_setlowerbound\_on\_canon\_orbit\_v2\_results.json} \\

\end{longtable}
\endgroup

\paragraph{Execution environment.}
Every script in the released artifact runs on CPU except the four training
pipelines, which use Apple Silicon GPU acceleration through PyTorch's MPS
backend (Apple M-series, 16\,GB unified memory, macOS 15). Software:
Python 3.11, PyTorch 2.13 with torchvision 0.28, \texttt{pynauty} for the
graph-automorphism engine, SymPy for the BDD-OIA rule extraction, and the
Python standard library elsewhere. Wall-clock times are recorded in each
result file: the four training runs take 147--473\,s
each; the symbolic scripts run in seconds
apart from the two exhaustive $2^{20}$ sweeps, which take a few minutes
each. Random number generation is seeded explicitly in every script
(\texttt{random.seed}, \texttt{numpy.random.seed}, and
\texttt{torch.manual\_seed} where applicable), with the seed values
reported alongside each experiment. Every number in the paper traces,
through a source comment, to a named field in a result file or to a
named figure printed by a seeded script. The artifact ships the
verification scripts, the graph-automorphism helper module, the
per-run result files behind the symbolic and training analyses
(\texttt{realmodel\_mnadd\_full\_results.json} and
\texttt{theoryB\_falsification\_search\_results.json} among them),
and a \texttt{requirements.txt} listing the symbolic-check
dependencies (\texttt{numpy}, \texttt{sympy}, \texttt{pynauty}).
All scripts resolve their inputs and outputs relative to the artifact
directory; the symbolic checks were re-run from the released tree in
a fresh virtual environment. The four training pipelines additionally
require \texttt{torch}, \texttt{torchvision}, and the public
\texttt{rsseval} package, and their shipped result files carry every
number the paper cites, so re-training is not required to check any
reported value.

\bibliographystyle{plainnat}
\bibliography{refs}

\end{document}